\newcommand{\overbar}[1]{\mkern 1.5mu\overline{\mkern-1.5mu#1\mkern-1.5mu}\mkern 1.5mu}
\newcommand*\dif{\mathop{}\!\mathrm{d}}
\newcommand{\Abs}[1]{\lVert{#1}\rVert} 
\newcommand{\AbsLR}[1]{\left\lVert{#1}\right\rVert} 
\newcommand{\abs}[1]{\lvert{#1}\rvert} 
\def\cond{\,\vert\,} 
\def\ddist{\mu} 
\newcommand{\dv}[1]{{#1}^{\prime}} 
\newcommand{\ddv}[1]{{#1}^{\prime\prime}} 
\DeclareMathOperator{\Div}{Div} 
\def\DRO{\textup{DRO}} 
\DeclareMathOperator{\dsp}{D} 
\DeclareMathOperator{\exx}{\mathbf{E}} 
\def\HH{\mathcal{H}} 
\DeclareMathOperator{\indic}{I} 
\DeclareMathOperator{\loss}{\mathsf{L}} 
\def\LL{\mathcal{L}} 
\DeclareMathOperator{\mloc}{M} 
\def\mt{\theta} 
\def\OO{\mathcal{O}} 
\DeclareMathOperator{\prr}{\mathbf{P}} 
\DeclareMathOperator{\quant}{Q} 
\newcommand{\rdv}[1]{\mathsf{#1}} 
\DeclareMathOperator{\risk}{R} 
\def\RR{\mathbb{R}} 
\def\samplespace{\Omega} 
\def\sigmafield{\mathcal{F}} 
\DeclareMathOperator{\sign}{sign} 
\def\smooth{\lambda} 
\newcommand{\tran}[1]{{#1}^{\top}} 
\def\UU{\mathcal{U}} 
\DeclareMathOperator{\vaa}{var} 
\def\VV{\mathcal{V}} 
\def\XX{\mathcal{X}} 
\DeclareMathOperator*{\argmin}{arg\,min}
\def\namedlabel#1#2{\begingroup
    #2%
    \def\@currentlabel{#2}%
    \phantomsection\label{#1}\endgroup
}
\newcommand*{\defeq}{\mathrel{\vcenter{\baselineskip0.5ex \lineskiplimit0pt     
                     \hbox{\scriptsize.}\hbox{\scriptsize.}}}=}
\theoremstyle{definition} \newtheorem{defn}{Definition}
\theoremstyle{plain} \newtheorem{prop}[defn]{Proposition}
\theoremstyle{plain} \newtheorem{thm}[defn]{Theorem}
\theoremstyle{plain} \newtheorem{lem}[defn]{Lemma}
\theoremstyle{plain} \newtheorem{cor}[defn]{Corollary}
\theoremstyle{remark} \newtheorem{rmk}[defn]{Remark}
\theoremstyle{remark} 
\newcommand{\term}[1]{\textit{{#1}}}
\title{\textbf{Flexible risk design using bi-directional dispersion}}
\author{
  Matthew J.~Holland\\
  Osaka University
}
\date{} 
\begin{document}

\maketitle

\begin{abstract}
Many novel notions of ``risk'' (e.g., CVaR, tilted risk, DRO risk) have been proposed and studied, but these risks are all at least as sensitive as the mean to loss tails on the upside, and tend to ignore deviations on the downside. We study a complementary new risk class that penalizes loss deviations in a bi-directional manner, while having more flexibility in terms of tail sensitivity than is offered by mean-variance. This class lets us derive high-probability learning guarantees without explicit gradient clipping, and empirical tests using both simulated and real data illustrate a high degree of control over key properties of the test loss distribution incurred by gradient-based learners.
\end{abstract}

\tableofcontents

\section{Introduction}\label{sec:intro}

What does it mean for a learner to successfully generalize? Broadly speaking, this is an ambiguous property of learning systems that can be defined, measured, and construed in countless ways. In the context of machine learning, however, the notion of ``success'' in off-sample generalization is almost without exception formalized as minimizing the expected value of a random loss $\exx_{\ddist} \loss(h)$, where $h$ is a candidate parameter, model, or decision rule, and $\loss(h)$ is a random variable on a probability space $(\samplespace,\sigmafield,\ddist)$ \citep{mohri2012Foundations,shalev2014a}. The idea of quantifying the \emph{risk} of an unexpected outcome (here, a random loss) using the expected value dates back to the Bernoullis and Gabriel Cramer in the early 18th century \citep{bassett1987a,hacking2006ProbHistory}. In a more modern context, the emphasis on \emph{average} performance is the ``general setting of the learning problem'' of \citet{vapnik1999NSLT}, and plays a central role in the decision-theoretic learning model of \citet{haussler1992a}. Use of the expected loss to quantify off-sample generalization has been essential to the development of both the statistical and computational theories of learning \citep{devroye1996ProbPR,kearns1994CLTIntro}.

While the expected loss still remains pervasive, important new lines of work on \term{risk-sensitive learning} have begun exploring novel feedback mechanisms for learning algorithms, in some cases derived directly from new risk functions that replace the expected loss. Learning algorithms designed using conditional value-at-risk (CVaR) \citep{curi2020a,holland2021c} and tilted (or ``entropic'') risk \citep{follmer2011a,li2021b,li2021a} are well-known examples of location properties which emphasize loss tails in \emph{one direction} more than the mean itself does. This is often used to increase sensitivity to ``worst-case'' events \citep{kashima2007a,takeda2008a}, but in special cases where losses are bounded below, sensitivity to tails on the downside can be used to realize an insensitivity to tails on the upside \citep{lee2020a}. This strong asymmetry is not specific to the preceding two risk function classes, but rather is inherent in much broader classes such as optimized certainty equivalent (OCE) risk \citep{bental1986a,bental2007a,lee2020a} and distributionally robust optimization (DRO) risk \citep{bental2013a,duchi2018b,duchi2019a,gotoh2018a}. Unsurprisingly, naive empirical estimators of these risks are particularly fragile under outliers coming from the ``sensitive direction,'' as is evidenced by the plethora of attempts in the literature to design robust modifications \citep{holland2022c,prashanth2020a,zhai2021a}. In general, however, loss distributions can display long tails in either direction over the learning process (see Figure \ref{fig:unbounded_below_data}), particularly when losses are unbounded below (e.g., negative rewards \citep{sutton2018RLIntro}, unhinged loss \citep{vanrooyen2016a}), and loss functions whose empirical mean has no minimum appear frequently (e.g., separable logistic regression \citep{albert1984a,rousseeuw2003a}). Since the tail behavior of stochastic losses and gradients is well-known to play a critical role in the stability and robustness of learning systems \citep{simsekli2019a,zhang2020b}, the inability to control tail sensitivity in both directions represents a genuine limitation to machine learning methodology.

A natural alternative class of risk functions that gives us control over tail sensitivity in \emph{both} directions is that of the ``M-location'' of the loss distribution, namely any value in
\begin{align}\label{eqn:mloc_general}
\argmin_{\theta \in \RR} \exx_{\ddist} \rho\left(\loss(h)-\theta\right) \subset \RR,
\end{align}
where $\rho:\RR \to \RR_{+}$ is assumed to be such that this set of minimizers is non-empty. Here various special choices of $\rho$ let us recover well-known locations, such as the mean (with $\rho(\cdot)=(\cdot)^{2}$), median ($\rho(\cdot)=\abs{\cdot}$), arbitrary quantiles (via ``pinball'' function \citep{takeuchi2006a}), and even further beyond to ``expectiles'' (using curved variants of the pinball function \citep{gneiting2011a}). The obvious limitation here is that while computing (\ref{eqn:mloc_general}) using empirical estimates is easy, minimization as a function of $h$ is in general a difficult bi-level programming problem. As an alternative approach, in this paper we study the potential benefits and tradeoffs that arise in using performance criteria of the form
\begin{align}\label{eqn:tloc_general}
\min_{\theta \in \RR} \left[ \eta\theta + \exx_{\ddist} \rho\left(\loss(h)-\theta\right) \right]
\end{align}
where $\eta \in \RR$. By making a sacrifice of fidelity to the M-location (\ref{eqn:mloc_general}), we see that the criterion in (\ref{eqn:tloc_general}) suggests a congenial objective function (joint in $(h,\theta)$). Intuitively, one minimizes the sum of generalized ``location'' and ``dispersion'' properties, and the nature of this dispersion impacts the fidelity of the location term to the original M-location induced by $\rho$. These two locations align perfectly in the special case where we set $\rho(\cdot)=(\cdot)^{2}/2$ and $\eta=1$, since (\ref{eqn:tloc_general}) is equivalent to the mean-variance objective $\exx_{\ddist}\loss(h) + \vaa_{\ddist}\loss(h)/2$, but more generally, it is clear that allowing for more diverse choices of $\rho$ gives us new freedom in terms of tail control with respect to both location and dispersion. We consider a concrete yet flexible class of risk functions that generalizes beyond (\ref{eqn:tloc_general}), allows for easy implementation, and is analytically tractable from the standpoint of providing formal learning guarantees. Our main contributions are as follows:
\begin{itemize}
\item A new class of ``threshold risks'' (T-risks; defined in \S{\ref{sec:trisk}}) that provide a tractable alternative to M-locations, and a bi-directional complement to OCE/DRO risks (reviewed in \S{\ref{sec:background_risk_review}}).
\item A stochastic learning algorithm for T-risks that enjoys high-probability guarantees of convergence to a stationary point under heavy-tailed losses/gradients, without manual clipping (details in \S{\ref{sec:learning_theory}}, Theorem \ref{thm:sgd_convergence}).
\item Strong empirical evidence of the flexibility and utility inherent in T-risk learners. In particular: robustness to unbalanced noisy class labels without regularization (Figure \ref{fig:unbounded_below_unhinged}), sharp control over sensitivity to outliers in regression with convex base losses (Figure \ref{fig:tail_control}), and smooth interpolation between mean and mean-variance minimizers on clean, normalized benchmark classification datasets (Figures \ref{fig:real_emnist_balanced}, \ref{fig:real_covtype} and \ref{fig:real_adult}--\ref{fig:real_protein}).
\end{itemize}
The overall flow of the paper is as follows. Background information on notation and related literature is given in \S{\ref{sec:background}}, and we introduce the new risk class of interest in \S{\ref{sec:trisk}}. Formal aspects of the learning problem using these risks are treated in \S{\ref{sec:learning_theory}}, and empirical findings are explained and discussed in \S{\ref{sec:applications}}. All formal proofs and supplementary results are organized in \S{\ref{sec:appendix_summary}}--\S{\ref{sec:additional_facts}} of the appendix, and code for reproducing all the results in this paper is provided in an online repository.\footnote{\url{https://github.com/feedbackward/bdd}}

\section{Background}\label{sec:background}

\subsection{Notation}\label{sec:background_notation}

\paragraph{Random quantities}
To start, let us clarify the nature of the random losses we consider. With the context of the underlying probability space $(\samplespace,\sigmafield,\ddist)$, we write $\loss(h) \defeq \loss(h;\cdot): \samplespace \to \RR$ to refer to a random variable (i.e., a $\sigmafield$-measurable function) on $\samplespace$, though we only use the form $\loss(h)$ in the body of this paper. When we talk about ``sampling'' losses or a ``random draw'' of the losses, this amounts to computing a realization $\loss(h;\omega) \in \RR$. We use standard notation for taking expectation, e.g., $\exx_{\ddist}\loss(h) \defeq \int_{\Omega} \loss(h;\omega)\,\ddist(\dif \omega)$. These conventions extend to random quantities based on the losses (e.g., the gradient $\dv{\loss}(h)$ considered in \S{\ref{sec:learning_theory_gradients}}). Similarly, we will use $\prr$ as a general-purpose probability function, representing both $\ddist$ and product measures; when the source of randomness is not immediate from the context, it will be stated explicitly. We will use $\risk(\cdot)$ as a generic symbol for risk functions (often modified with subscripts), with the understanding that $\risk(\cdot)$ maps random losses $\loss(h)$ to real values. We will overload this notation, writing $\risk(\loss)$ when the role of $h$ is unimportant, and writing $\risk(h) \defeq \risk(\loss(h))$ when we want to emphasize the dependence on $h$. This convention will be applied to other quantities as well, such as writing $\dsp_{\rho}(h) \defeq \dsp_{\rho}(\loss(h)) \defeq \exx_{\ddist}\rho(\loss(h)-\theta)$ for the expected dispersion induced by $\rho$, first defined in (\ref{eqn:dsp_defn}).

\paragraph{Norms}
We will use $\Abs{\cdot}$ as a general-purpose notation for all norms that appear in this paper. That is, we do not use different notation to distinguish different norm spaces. The reason for this is that we will never consider two distinct norms on the same set; each norm is associated with a distinct set, and thus as long as it is clear which set a particular element belongs to, there should be no confusion. The only exception to this rule is the special case of $\RR$, in which we write $\abs{\cdot}$ for the absolute value, as is traditional. For a function $f:\RR \to \RR$ in one variable, we use $\dv{f}$ to denote the usual derivative. More general notions (e.g., Gateaux or Fr\'{e}chet differentials) only make an appearance in \S{\ref{sec:learning_theory}}, and the generality they afford us is not crucial to the main narrative, so the details can be easily skipped over if the reader is unfamiliar with such concepts. All the other undefined notation we use is essentially standard, and can be found in most introductory analysis textbooks.

\paragraph{Miscellaneous}
For a function $f:\RR \to \RR$ in one variable, we use $\dv{f}$ to denote the usual derivative. More general notions (e.g., Gateaux or Fr\'{e}chet differentials) only make an appearance in \S{\ref{sec:learning_theory}}, and the generality they afford us is not crucial to the main narrative, so the details can be easily skipped over if the reader is unfamiliar with such concepts. All the other undefined notation we use is essentially standard, and can be found in most introductory analysis textbooks. Particularly in formal proofs, we will frequently make use of the notation $\rho_{\sigma}(x) \defeq \rho(x/\sigma)$, where $\sigma > 0$ is a scaling parameter.

\subsection{Review of key risk functions}\label{sec:background_risk_review}

\paragraph{OCE-type risks}
As a computationally convenient way to interpolate between the mean and the extreme values of $\loss(h)$, the \term{tilted risk} \citep{li2021b,li2021a} is a natural choice, defined for $\gamma \neq 0$ as
\begin{align}\label{eqn:tilted_defn}
\risk_{\textup{tilt}}(h;\gamma) \defeq \frac{1}{\gamma}\log\left(\exx_{\ddist}\mathrm{e}^{\gamma\loss(h)}\right).
\end{align}
This is simply a re-scaling of the cumulant distribution function of $\loss(h)$, viewed as a function of $h$, where taking $\gamma \to \infty$ and $\gamma \to -\infty$ lets us approach the supremum and infimum of $\loss(h)$, respectively. Another important class of risk functions is based upon the \term{conditional value-at-risk} (CVaR) \citep{rockafellar2000a}, defined for $0 \leq \beta < 1$ as
\begin{align}
\risk_{\textup{CVaR}}(h;\beta) \defeq \exx_{\ddist}\left[ \loss(h) \cond \loss(h) \geq \quant_{\beta}(h)\right].
\end{align}
This is the expected loss at $h$, conditioned on the event that the loss exceeds the \term{$\beta$-quantile} of $\loss(h)$, denoted here by $\quant_{\beta}(h) \defeq \inf\left\{ x \in \RR: \prr\{\loss(h) \leq x\} \geq \beta \right\}$. Both of these risk functions can be re-written in a form similar to that of (\ref{eqn:tloc_general}), namely
\begin{align}\label{eqn:oce_generalized}
h \mapsto \inf_{\theta \in \RR} \left[ \theta + \exx_{\ddist}\phi(\loss(h)-\theta) \right]
\end{align}
where $\phi(x) = (\mathrm{e}^{\gamma x}-1)/\gamma$ yields $\risk_{\textup{tilt}}$ (basic calculus), and $\phi(x) = \max\{0,x\}/(1-\beta)$ yields $\risk_{\textup{CVaR}}$ (see \citet{rockafellar2000a,rockafellar2002a}). When $\phi:\RR \to \RR$ is restricted to be a non-decreasing, closed, convex function which satisfies both $\phi(0)=0$ and $1 \in \partial\phi(0)$, the mapping given in (\ref{eqn:oce_generalized}) is called an \term{optimized certainty equivalent (OCE) risk} \citep{bental1986a,bental2007a,lee2020a}. The class of OCE risks strictly generalizes the expected value (noting $\phi(x)=x$ is valid), and includes $\risk_{\textup{tilt}}(\cdot;\gamma)$ when $\gamma > 0$, as well as $\risk_{\textup{CVaR}}$. The mean-variance is sometimes stated to be an OCE risk \citep[Table 1]{lee2020a}, but this fails to hold when losses are unbounded above and below.

\paragraph{DRO-type risks}
Another important class of risk functions is that of \term{robustly regularized} risks which are designed to ensure that risk minimizers are robust to a certain degree of divergence in the underlying data model. Making this concrete, it is typical to assume the random losses are the outputs of a loss function $\ell$ depending on the candidate $h$ and some random data $\rdv{Z}$, i.e., $\loss(h) = \ell(h;\rdv{Z})$, with $\rdv{Z} \sim \ddist$ as our reference model. To measure divergence from this reference model, it is convenient to use the Cressie-Read family of divergence functions \citep{duchi2018b,zhai2021a}, namely for any $c > 1$ and assuming $\nu \ll \ddist$ (absolutely continuity) holds, functions of the form
\begin{align}
\Div_{c}(\nu;\ddist) \defeq \exx_{\ddist}f_{c}\left(\frac{\dif\nu}{\dif\ddist}\right), \text{ where } f_{c}(x) \defeq \frac{x^{c}-cx + c - 1}{c(c-1)}
\end{align}
and $\dif\nu/\dif\ddist$ is the Radon-Nikodym density of $\nu$ with respect to $\ddist$.\footnote{For background on absolute continuity and density functions, see \citet[\S{2.2}]{ash2000a}.} The resulting robustly regularized risk, called the \term{DRO risk}, is defined as
\begin{align}\label{eqn:dro_defn}
\risk_{\DRO}(h) \defeq \sup\left\{ \exx\loss(h): \loss \in \LL \right\}
\end{align}
where the constrained set of random losses $\LL$, determined by $c > 1$ and $a > 0$, is defined as
\begin{align}\label{eqn:dro_conset}
\LL \defeq \left\{ \loss(\cdot) = \ell(\cdot;\rdv{Z}): \rdv{Z} \sim \nu \text{ and } \Div_{c}(\nu;\ddist) \leq a \right\}.
\end{align}
For this particular family of divergences, the risk can be characterized as the optimal value of a simple optimization problem \citep{duchi2018b}, namely we have that
\begin{align}\label{eqn:dro_convenient}
\risk_{\DRO}(h) = \inf_{\theta \in \RR}\left[ \theta + \left(1+c(c-1)a\right)^{1/c}\left(\exx_{\ddist}\left(\loss(h)-\theta\right)_{+}^{c_{\ast}}\right)^{1/c_{\ast}} \right]
\end{align}
where $c_{\ast} \defeq c / (c-1)$. While strictly speaking this is not an OCE risk, note that if we set $\phi(x) = (1+c(c-1)a)^{c_{\ast}/c}(x)_{+}^{c_{\ast}}$, then the DRO risk can be written as
\begin{align}\label{eqn:dro_like_tloc}
\risk_{\DRO}(h) = \inf_{\theta \in \RR}\left[ \theta + [\exx_{\ddist}\phi(\loss(h)-\theta)]^{1/c_{\ast}} \right],
\end{align}
giving us an expression of this risk as the sum of a threshold and an asymmetric dispersion. When we set $c=2$, this yields the well-known special case of \term{$\chi^{2}$-DRO risk} \citep{hashimoto2018a,zhai2021a}. In addition to the one-directional nature of the dispersion term in these risks, all of these risks are at least as sensitive to loss tails (on the upside) as the classical expected loss $\exx_{\ddist}\loss(h)$ is; this holds for CVaR (with $\beta > 0$), tilted risk (with $\gamma > 0$), and even robust variants of DRO risk \citep{zhai2021a}.

\paragraph{Key differences}
While it is clear that the form of the preceding risk classes given in (\ref{eqn:oce_generalized}) and (\ref{eqn:dro_like_tloc}) based on various choices of $\phi(\cdot)$ is the same as our $\rho(\cdot)$-based risk of interest in (\ref{eqn:tloc_general}), they are fundamentally different in that \emph{none} of the choices of $\phi(\cdot)$ induce a meaningful M-location; since all these $\phi(\cdot)$ are monotonic on $\RR$, both minimization and maximization of $\theta \mapsto \exx_{\ddist}\phi(\loss(h)-\theta)$ is trivially accomplished by taking $\abs{\theta} \to \infty$. In stark contrast, $\rho(\cdot)$ is assumed to be such that the solution set in (\ref{eqn:mloc_general}) is a subset of the real line. We will introduce a concrete and flexible class from which $\rho$ will be taken in \S{\ref{sec:trisk}}, and in Figure \ref{fig:dispersion_compare} give a side-by-side comparison with the $\phi$ functions discussed in the preceding paragraphs.

\subsection{Closely related work}

This work falls into the broad context of machine learning driven by novel risk functions \citep{holland2021survey}. Of all the papers cited above, the works of \citet{lee2020a} on OCE risks, and \citet{li2021a,li2021b} on tilted risks are of a similar nature to our paper, with the obvious difference being that the class of risks is fundamentally different, as described in the preceding paragraphs. Indeed, many of our empirical tests involve direct comparison with the risk classes studied in these works (e.g., Figures \ref{fig:fliptest_overshape_lognormal}, \ref{fig:unbounded_below_unhinged}, and \ref{fig:outliers_phones_loss_tails}), and so they provide critical context for our work. Previous work by \citet{holland2022c} studies a rudimentary special case of what we call ``minimal T-risk'' here; the focus in that work was on obtaining learning guarantees (in expectation) when the risk is potentially non-convex and non-smooth in $h$, but with convex $\rho$, and no comparison was made with OCE/DRO risk classes. We build upon these results here, considering a broad class of dispersions $\rho$ which are differentiable but need not be convex (see (\ref{eqn:barron})); we how such risk classes can readily admit high-probability learning guarantees for stochastic gradient-based algorithms (Theorem \ref{thm:sgd_convergence}), provide bounds on the average loss incurred by empirical risk minimizers using our risk (Proposition \ref{prop:risk_relations}), and make detailed empirical comparisons with each of the key existing risk classes.

\section{Threshold risk}\label{sec:trisk}

To ground ourselves conceptually, let us refer to $\loss(h)$ as the \term{base loss} incurred by $h$. The exact nature of $h$ is left completely abstract for the moment, as all that matters is the probability distribution of this base loss. By selecting an arbitrary \term{threshold} $\theta \in \RR$, we define a broad class of properties as
\begin{align}\label{eqn:trisk_general}
\risk_{\rho}(h;\theta,\eta) \defeq \eta\theta + \exx_{\ddist}\rho\left(\loss(h)-\theta\right).
\end{align}
Here $\eta \in \RR$ is a weighting parameter allowed to be negative, and as a bare minimum, $\rho$ is assumed to be such that the resulting M-location(s) are well-defined in the sense that the inclusion in (\ref{eqn:mloc_general}) holds. We call $\rho\left(\loss(h)-\theta\right)$ the (random) \term{dispersion} of the base loss, taken with respect to threshold $\theta$, and we refer to $\risk_{\rho}(h;\theta,\eta)$ in (\ref{eqn:trisk_general}) as the \term{threshold risk} (or simply \term{T-risk}) under $\rho$.

\subsection{Minimal T-risk and M-location}

Arguably the most intuitive special case of T-risk is the \term{minimal T-risk}, where minimization is with respect to the threshold $\theta \in \RR$. Let us denote this risk and the \term{optimal threshold} set as
\begin{align}\label{eqn:trisk_minimal}
\underline{\risk}_{\rho}(h;\eta) \defeq \inf_{\theta \in \RR} \risk_{\rho}(h;\theta,\eta), \qquad \mt_{\rho}(h;\eta) \defeq \argmin_{\theta \in \RR} \risk_{\rho}(h;\theta,\eta).
\end{align}
Clearly, if $\rho$ is bounded above or grows too slowly, we will have $\underline{\risk}_{\rho}(h;\eta) = -\infty$ and no real-valued minimizers, i.e., $\mt_{\rho}(h;\eta) = \emptyset$. Letting $\mloc_{\rho}(h)$ denote the M-locations in (\ref{eqn:mloc_general}), for $\eta \neq 0$ we have
\begin{align}
\mt_{\rho}(h;\eta) \neq \emptyset \implies \mloc_{\rho}(h) \neq \emptyset,
\end{align}
although the converse does not hold in general.\footnote{For example, consider choices of $\rho$ that are ``re-descending'' \citep{huber2009a}.} When $\eta = 0$, these two solution sets align, i.e., we have $\mt_{\rho}(h;0) = \mloc_{\rho}(h)$. More generally, depending on the sign of $\eta$, the optimal thresholds can be either larger or smaller than the corresponding M-locations. More precisely, for any $\theta^{\prime} \in \mloc_{\rho}(h)$, as long as $\mt_{\rho}(h;\eta)$ is non-empty, there exists $\theta \in \mt_{\rho}(h;\eta)$ such that $\theta \sign(\eta) \leq \theta^{\prime}\sign(\eta)$.

\paragraph{Special case minimized by quantiles}
The form given in (\ref{eqn:trisk_general}) is very general, but it can be understood as a straightforward generalization of the convex objective used to characterize quantiles. More precisely, taking $\beta \in (0,1)$ and denoting the \term{$\beta$-quantile} of the base loss using $\quant_{\beta}(h) \defeq \inf\left\{ x \in \RR: \prr\{\loss(h) \leq x\} \geq \beta \right\}$, it is well-known that in the special case of $\rho(\cdot)=\abs{\cdot}$, we have
\begin{align}\label{eqn:locations_quantile}
\quant_{\beta}(h) \in \mt_{\rho}(h;\theta,1-2\beta)
\end{align}
for any choice of $0 < \beta < 1$, as long as $\exx_{\ddist}\abs{\loss(h)}$ is finite \citep{koltchinskii1997a}. The T-risk in (\ref{eqn:trisk_general}) simply allows for a more flexible choice of $\rho$, and thus generalizes the dispersion term in this objective function.

\subsection{T-risk with scaled Barron dispersion}\label{sec:trisk_barron}

\begin{figure}[t]
\centering
\includegraphics[width=1.0\textwidth]{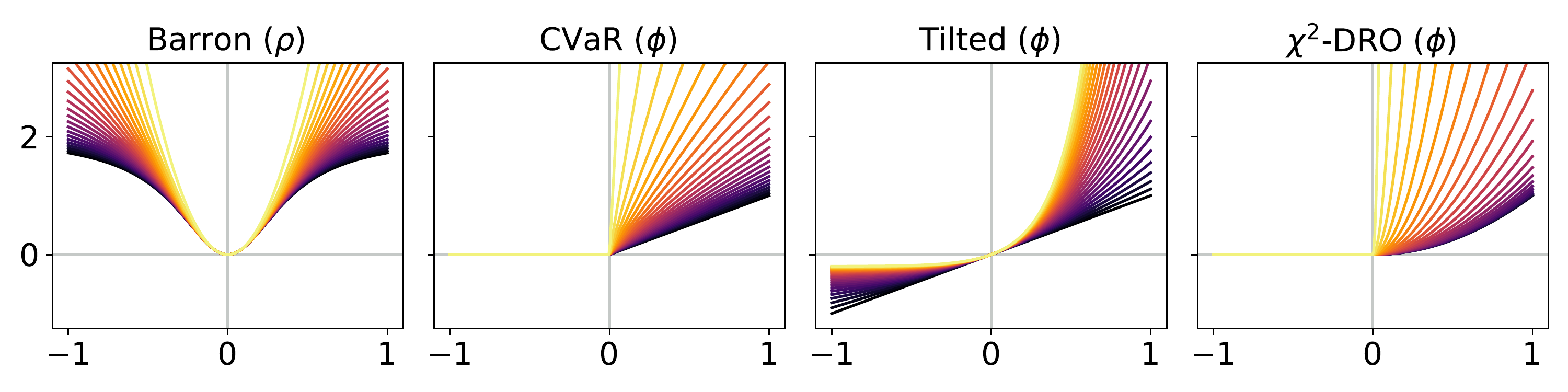}\\
\hfill\includegraphics[width=0.25\textwidth]{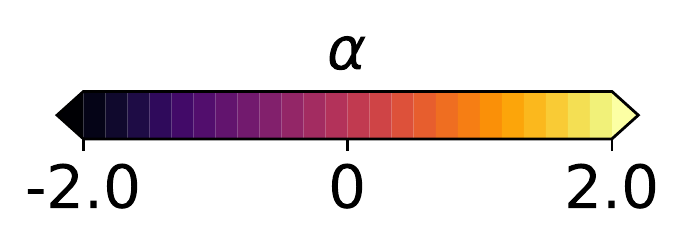}\includegraphics[width=0.25\textwidth]{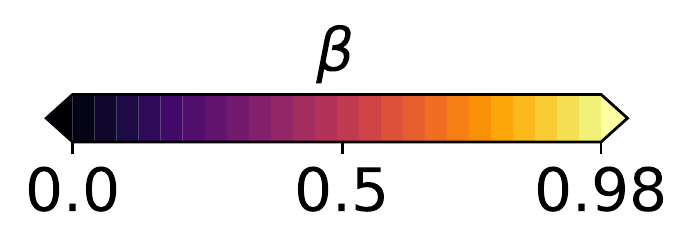}\includegraphics[width=0.25\textwidth]{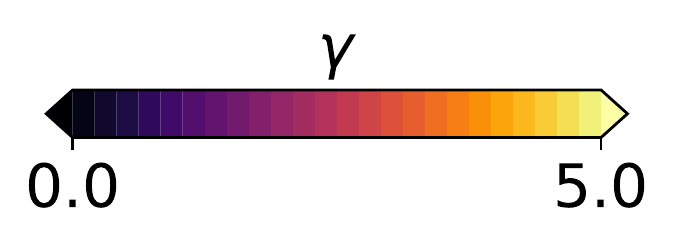}\includegraphics[width=0.25\textwidth]{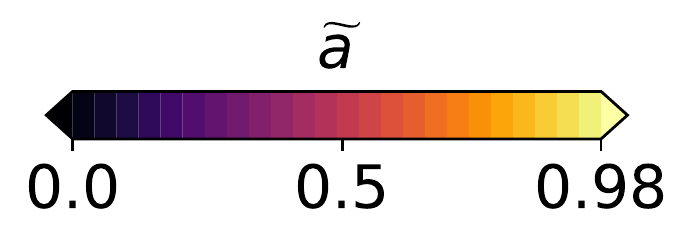}
\caption{Left-most plot: the graph of $x \mapsto \rho(x/\sigma;\alpha)$ from \S{\ref{sec:trisk_barron}} for varying choices of $\alpha$, with $\sigma$ fixed to $\sigma = 0.2$ for visual ease. Middle two plots: graphs of $\phi(x)$ in (\ref{eqn:oce_generalized}) for CVaR and tilted risk, respectively over different choices of $\beta$ and $\gamma$. Right-most plot: graph of $\phi(x)$ in (\ref{eqn:dro_like_tloc}) for the $\chi^{2}$-DRO risk, where $a \geq 0$ is re-parametrized using $0 \leq \widetilde{a} < 1$ using the relation $a = ((1-\widetilde{a})^{-1}-1)^{2}/2$.}
\label{fig:dispersion_compare}
\end{figure}

In order to capture a range of sensitivities to loss tails in both directions, we would like to select $\rho$ from a class of functions that gives us sufficient control over scale, boundedness, and growth rates. As a concrete choice, we propose to set $\rho$ in (\ref{eqn:trisk_general}) as $\rho(x) = \rho(x/\sigma;\alpha)$, where $\sigma > 0$ is a scaling parameter, and $\rho(\cdot;\alpha)$ with shape $\alpha \in [-\infty,2]$ is a family of functions that ranges from bounded and logarithmic growth on the lower end to quadratic growth on the upper end, defined as:
\begin{align}\label{eqn:barron}
\rho(x;\alpha) \defeq
\begin{cases}
x^{2}/2, & \text{if } \alpha = 2\\
\log\left(1 + x^{2}/2\right), & \text{if } \alpha = 0\\
1 - \exp\left(-x^{2}/2\right), & \text{if } \alpha = -\infty\\
\frac{\abs{\alpha-2}}{\alpha}\left(\left(1 + \frac{x^{2}}{\abs{\alpha-2}}\right)^{\alpha/2} - 1\right), & \text{otherwise}.
\end{cases}
\end{align}
At a high level, $\rho(\cdot;\alpha)$ is approximately quadratic near zero for any choice of shape $\alpha$, but its growth as one deviates far from zero depends greatly on $\alpha$. We refer to (\ref{eqn:barron}) as the \term{Barron class} of functions for computing dispersion.\footnote{The reason for this naming is that \citet{barron2019a} recently studied this class in the context of designing \emph{loss functions} for computer vision applications. We remark that this differs considerably from our usage in computing the \emph{dispersion of random losses}, where the loss function underlying the base loss is left completely arbitrary.} Recalling the risks reviewed in \S{\ref{sec:background_risk_review}}, since $\rho(\cdot;\alpha)$ is flat at zero and symmetric about zero, the Barron class clearly takes us beyond the functions $\phi(\cdot)$ allowed by OCE risks (\ref{eqn:oce_generalized}) and used in typical DRO risk definitions (\ref{eqn:dro_like_tloc}); see Figure \ref{fig:dispersion_compare} for a visual comparison.

As mentioned in \S{\ref{sec:background_notation}}, we will often use the generic shorthand notation $\rho_{\sigma}(x) \defeq \rho(x/\sigma)$, and drop the dependence on $\alpha$ when clear from context. The shape parameter $\alpha$ gives us direct control over the conditions needed for a finite T-risk $\risk_{\rho}(h;\theta,\eta)$, as the following lemma shows.
\begin{lem}[Finiteness and shape]\label{lem:dsp_finite}
Let $\rho$ be from the Barron class (\ref{eqn:barron}). Then in order to ensure $\exx_{\ddist}\rho_{\sigma}(\loss(h)-\theta) < \infty$ holds for all $\theta \in \RR$, each of the following conditions (depending on the value of $\alpha$) is sufficient. For $0 < \alpha \leq 2$, let $\exx_{\ddist}\lvert \loss(h) \rvert^{\alpha} < \infty$. For $\alpha = 0$, let $\exx_{\ddist}\lvert \loss(h) \rvert^{c} < \infty$ for some $c > 0$. For $-\infty \leq \alpha < 0$, let $\loss(h)$ be $\sigmafield$-measurable. Furthermore, for the cases where $\alpha \neq 0$, the above conditions are also necessary.
\end{lem}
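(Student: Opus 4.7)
The plan is to reduce the finiteness question to the asymptotic growth rate of $\rho(\cdot;\alpha)$ at infinity, and then match that rate against a moment on $\loss(h)$. As a preparatory step I would establish a clean asymptotic description of the Barron dispersion: (i) for $\alpha=2$, $\rho(x;\alpha)=x^{2}/2$; (ii) for $0<\alpha<2$, a direct expansion of $(1+x^{2}/|\alpha-2|)^{\alpha/2}$ yields two-sided bounds $c_{1}|x|^{\alpha}\leq \rho(x;\alpha)\leq c_{2}|x|^{\alpha}$ for $|x|\geq 1$; (iii) for $\alpha=0$, $\rho(x;0)=\log(1+x^{2}/2)$ is sandwiched between constants times $\log(1+|x|)$ for $|x|\geq 1$; (iv) for $-\infty\leq \alpha<0$, the factor $(1+x^{2}/|\alpha-2|)^{\alpha/2}$ lies in $(0,1]$ because $\alpha/2<0$, and combined with the negative prefactor $|\alpha-2|/\alpha$ it produces a non-negative, uniformly bounded $\rho(\cdot;\alpha)$ (and $1-\exp(-x^{2}/2)\leq 1$ trivially covers $\alpha=-\infty$). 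The rescaling $\rho_{\sigma}(y)=\rho(y/\sigma;\alpha)$ only changes the multiplicative constants in these asymptotics.

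For \emph{sufficiency}, I would split according to the value of $\alpha$. When $0<\alpha\leq 2$, combine the upper asymptotic with continuity of $\rho$ on compacts to get $\rho_{\sigma}(\loss(h)-\theta)\leq K_{1}+K_{2}|\loss(h)-\theta|^{\alpha}$ for some constants $K_{1},K_{2}$ depending on $\sigma,\alpha$; then $|\loss(h)-\theta|^{\alpha}\leq 2^{(\alpha-1)_{+}}(|\loss(h)|^{\alpha}+|\theta|^{\alpha})$ (subadditivity for $\alpha\leq 1$, convexity for $1<\alpha\leq 2$) and the assumed $\alpha$-moment finish the job. For $\alpha=0$, the bound $\log(1+y^{2}/2)\leq \log 2+2\log^{+}|y|$ together with the elementary inequality $\log^{+}|y|\leq |y|^{c}/c$ reduces matters to $\exx_{\ddist}|\loss(h)-\theta|^{c}<\infty$, which follows from the assumed positive-order moment by the same triangle-type inequality. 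For $-\infty\leq\alpha<0$, the uniform bound on $\rho$ makes $\exx_{\ddist}\rho_{\sigma}(\loss(h)-\theta)\leq \sup_{x}\rho(x;\alpha)<\infty$ for every measurable $\loss(h)$, so no moment assumption is needed.

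For \emph{necessity} when $\alpha\neq 0$, I would use the matching lower asymptotic. For $0<\alpha\leq 2$, taking $\theta=0$ gives $\rho_{\sigma}(\loss(h))\geq c_{1}\sigma^{-\alpha}|\loss(h)|^{\alpha}$ on $\{|\loss(h)|\geq \sigma\}$, so $\exx_{\ddist}\rho_{\sigma}(\loss(h))<\infty$ forces $\exx_{\ddist}|\loss(h)|^{\alpha}\indic\{|\loss(h)|\geq \sigma\}<\infty$, and combined with the trivial bound on $\{|\loss(h)|<\sigma\}$ this yields $\exx_{\ddist}|\loss(h)|^{\alpha}<\infty$. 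For $-\infty\leq\alpha<0$, measurability is obviously necessary just for the integral to be well defined, so there is nothing to prove.

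The main obstacle is case (iv) of the asymptotic analysis: for $\alpha<0$ the definition of $\rho(\cdot;\alpha)$ involves a ratio of two negative quantities, and one must carefully track signs to see that $\rho(\cdot;\alpha)$ is a non-negative and uniformly bounded function of $x$; everything else reduces to routine power-comparison and triangle inequalities. A minor point worth handling explicitly is that necessity in the range $0<\alpha\leq 2$ really does require the growth lower bound $\rho(x;\alpha)\geq c_{1}|x|^{\alpha}$ rather than a crude $\rho\geq 0$ estimate, so one should confirm the lower bound $c_{1}$ is strictly positive from the explicit formula.
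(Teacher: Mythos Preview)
Your proposal is correct and follows essentially the same approach as the paper: both arguments reduce finiteness of $\exx_{\ddist}\rho_{\sigma}(\loss(h)-\theta)$ to the asymptotic growth rate of $\rho(\cdot;\alpha)$ (power $|x|^{\alpha}$ for $0<\alpha\leq 2$, logarithmic bounded by any positive power for $\alpha=0$, uniformly bounded for $\alpha<0$), and then match upper/lower growth against a moment condition via elementary triangle-type inequalities. The only difference is organizational---you front-load a systematic two-sided asymptotic description before treating sufficiency and necessity, whereas the paper handles each $\alpha$-case in one pass---but the underlying estimates (including $\log(1+x)\leq x^{c}/c$ and $(1+x^{2})^{\alpha/2}\geq |x|^{\alpha}$) are the same.
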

Assuming $\ddist$-integrability as in Lemma \ref{lem:dsp_finite}, the Barron class furnishes a non-empty set of M-locations $\mloc_{\rho}(h)$ for any choice of $\alpha$, and when restricted to $\alpha \geq 1$ with appropriate settings of $\eta$ and $\sigma$, the optimal threshold set $\mt_{\rho}(h;\eta)$ contains a single unique solution (see Lemma \ref{lem:optimal_threshold_existence}). For any valid choice of $\alpha$, the function $\rho(\cdot;\alpha)$ is twice continuously differentiable on $\RR$ (see \S{\ref{sec:proofs_barron_derivs}} for exact expressions). All the limits in $\alpha$ behave as we would expect: $\rho(x;\alpha) \to \rho(x;c)$ as $\alpha \to c$ for $c \in \{-\infty,0,2\}$ (see \S{\ref{sec:proofs_barronlimits}} for details). For $\alpha \geq 0$, the dispersion function is unbounded, with growth ranging from logarithmic to quadratic depending on the choice of $\alpha$. For $\alpha < 0$, the dispersion function is bounded. The mapping $x \mapsto \rho_{\sigma}(x;\alpha)$ is convex on $\RR$ for $\alpha \geq 1$, and for $\alpha < 1$ it is only convex between $\pm \sigma\sqrt{\abs{\alpha-2}/(1-\alpha)}$, and concave elsewhere (see Lemma \ref{lem:basic_barron}). The class $\risk_{\rho}(h;\theta,\eta)$ of T-risks (\ref{eqn:trisk_general}) under the scaled Barron dispersion $\rho_{\sigma}(x;\alpha)$ is the central focus of this paper.

\subsection{Sensitivity to outliers and tail direction}\label{sec:trisk_expressive}

Before we consider the learning problem, which typically involves the evaluation of many different loss distributions over the course of training, here we consider a fixed distribution, and numerically compare the T-risk (\ref{eqn:trisk_general}) and M-location (\ref{eqn:mloc_general}) induced by $\rho$ from the Barron class in \S{\ref{sec:trisk_barron}}, along with the key OCE and DRO risks discussed in \S{\ref{sec:background_risk_review}}.

\paragraph{Experiment setup}
We generate random values to simulate loss distributions, and evaluate how the values returned by each risk function change as we modify their respective parameters. Letting $\loss$ denote the random base loss we are simulating, we specify a parametric distribution for $\loss$, from which we take an independent sample $\{\loss_{1},\ldots,\loss_{m}\}$. In all cases, we center the true distribution such that $\exx_{\ddist}\loss = 0$. We use this common sample to compare the values returned by each of the aforementioned risks, as well as the optimal choice of threshold parameter $\theta$. To ensure that key trends are consistent across samples, we take a large sample size of $m=10^{4}$. For T-risk, we adjust $\alpha$ and $\eta$, and for M-location, just $\alpha$. In both cases, we leave $\sigma=0.5$ fixed. For CVaR, we modify the quantile level $0 < \beta < 1$. For tilted risk, we modify the parameter $\gamma \in \RR$. For $\chi^{2}$-DRO risk, we modify $0 < \widetilde{a} < 1$, having re-parameterized $a$ in (\ref{eqn:dro_conset}) by $a \defeq ((1-\widetilde{a})^{-1}-1)^{2}/2$, as is common practice \citep{zhai2021a}.

\paragraph{Representative results}
An illustrative example is given in Figure \ref{fig:fliptest_overshape_lognormal}, where we look at how each risk class behaves under a centered asymmetric distribution, before and after flipping it (i.e., under $\loss$ and \textminus$\loss$). Starting from the two left-most plots, we show $\underline{\risk}_{\rho}(\loss;\eta)$ (dashed curves) and $\mt_{\rho}(\loss;\eta)$ (solid curves) from (\ref{eqn:trisk_minimal}) as a function of $\alpha$, coloring the area between these graphs in gray. The first plot corresponds to $\eta = 1$, the second to $\eta=-1$. Similarly for the M-location we plot $\mloc_{\rho}(\loss)$ (solid) and $\mloc_{\rho}(\loss)+\exx_{\ddist}\rho(\loss-\mloc_{\rho}(\loss))$ (dashed). Analogous values are plotted for each of the other classes; note that for the tilted risk (\ref{eqn:tilted_defn}) with $\gamma > 0$, the optimal threshold and the risk value are in fact the same value (see \S{\ref{sec:proofs_tilted}}). The right-most plot is a histogram of the random sample $\{\loss_{1},\ldots,\loss_{m}\}$, here from a centered log-Normal distribution. All plots share a common vertical axis, and horizontal rules are drawn at the median (red, solid) and at the mean (gray, dotted; always zero due to centering). The critical point to emphasize here is how all the OCE and DRO risks here are highly asymmetric in terms of their tail sensitivity, in stark contrast with both the M-location and the T-risk. Turning tail sensitivity high enough in each of these classes (e.g., $\alpha > 1.5$, $\beta > 0.5$, $\gamma > 1.0$, $\widetilde{a} > 0.5$), note how flipping the distribution tails from the upside (top row) to the downside (bottom row) leads to a dramatic decrease in all risks but the T-risk and M-location. Finally, note how the T-risk thresholds $\mt_{\rho}(\loss;\eta)$ close in on the M-location $\mloc_{\rho}(\loss)$ from above/below depending on whether $\eta$ is negative/positive. Results for numerous distributions are available in our online repository (cf.~\S{\ref{sec:intro}}).

\begin{figure}[t]
\centering
\includegraphics[width=1.0\textwidth]{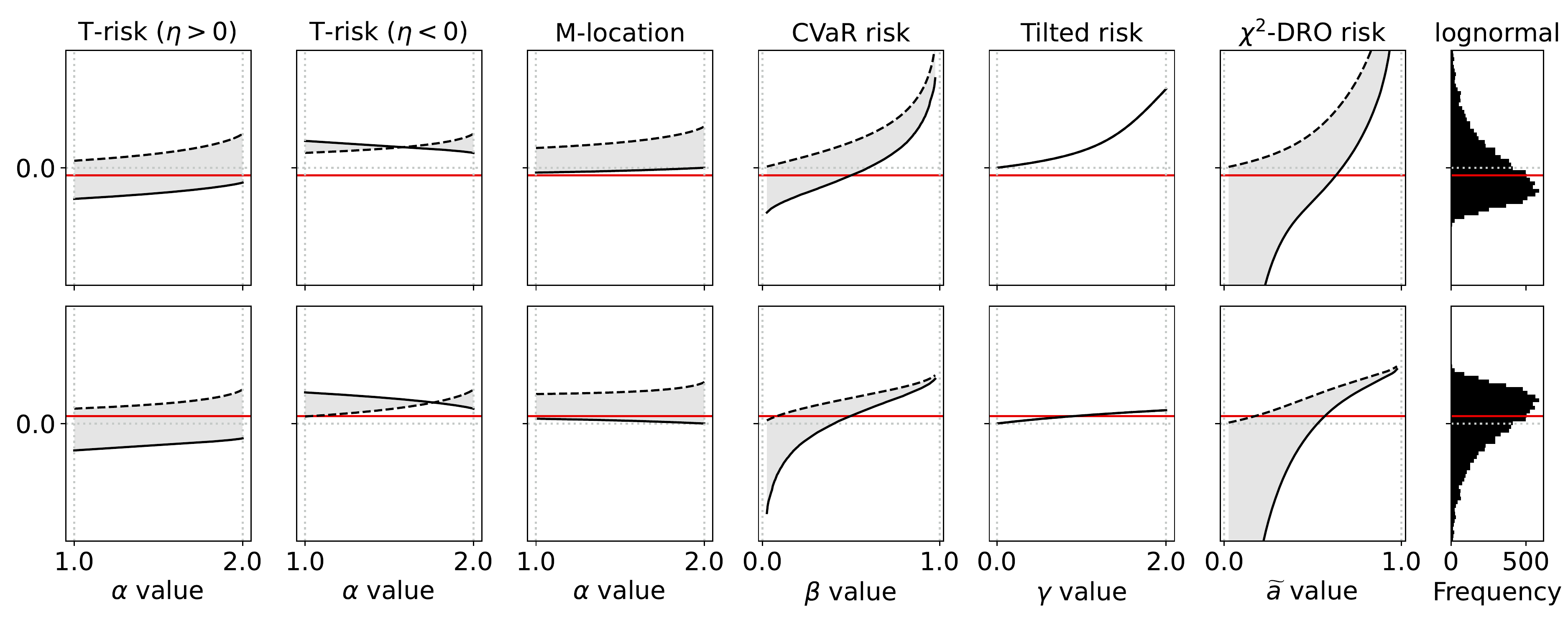}
\caption{Evaluation of risk class behavior after flipping an asymmetric distribution.}
\label{fig:fliptest_overshape_lognormal}
\end{figure}

\section{Learning algorithm analysis}\label{sec:learning_theory}

We now proceed to consider the learning problem, in which the goal is ultimately to select a candidate $h$ such that the distribution of $\loss(h)$ is ``optimal'' in the sense of achieving the smallest possible value of T-risk $\risk_{\rho}(h;\theta,\eta)$ in (\ref{eqn:trisk_general}), with $\rho$ taken from the Barron class (\ref{eqn:barron}), and $h$ taken from some set $\HH$. An obvious take-away of the integrability conditions (Lemma \ref{lem:dsp_finite}) is that even when the base loss is heavy-tailed in the sense of having infinite higher-order moments, we can always adjust the dispersion function $\rho(\cdot;\alpha)$ in such a way that transforming the base loss to obtain new feedback
\begin{align}\label{eqn:transformed_loss}
\loss_{\rho}(h;\theta,\eta) \defeq \eta\theta + \rho_{\sigma}(\loss(h)-\theta;\alpha)
\end{align}
gives us an unbiased estimator of the finite T-risk, i.e., $\exx_{\ddist}\loss_{\rho}(h;\theta,\eta) = \risk_{\rho}(h;\theta,\eta) \in \RR$. Intuitively, one expects that a similar property can be leveraged to control heavy-tailed stochastic gradients used in an iterative learning algorithm. We explore this point in detail in \S{\ref{sec:learning_theory_gradients}}. We will then complement this analysis by considering in \S{\ref{sec:learning_theory_mintrisk}} the basic properties of T-risk at the minimal threshold $\underline{\risk}_{\rho}(h;\eta)$ given in (\ref{eqn:trisk_minimal}), viewed from the perspectives of axiomatic risk design and empirical risk minimization.

\subsection{T-risk and stochastic gradients}\label{sec:learning_theory_gradients}

For the time being let us fix an arbitrary threshold $\theta \in \RR$, and assuming the gradient $\dv{\loss}(h)$ is $\ddist$-almost surely finite, denote the partial derivative of the transformed losses (\ref{eqn:transformed_loss}) with respect to $h$ by
\begin{align}\label{eqn:transformed_grad}
\partial_{h}\loss_{\rho}(h;\theta,\eta) \defeq \dv{\rho}_{\sigma}(\loss(h)-\theta) \dv{\loss}(h).
\end{align}
Writing $\partial_{h}\risk_{\rho}$ for the gradient of $h \mapsto \risk_{\rho}(h;\theta,\eta)$, an analogue of Lemma \ref{lem:dsp_finite} for gradients holds.
\begin{lem}[Unbiased gradients]\label{lem:unbiased_new}
Let $\UU$ be an open subset of any metric space such that $\HH \subseteq \UU$. Let the base loss map $h \mapsto \loss(h)$ be Fr\'{e}chet differentiable on $\UU$ ($\ddist$-almost surely), with gradient denoted by $\dv{\loss}(h)$ for each $h \in \UU$. Fixing any choice of $-\infty \leq \alpha \leq 2$, we have that
\begin{align}\label{eqn:unbiased_new}
\exx_{\ddist} \left[ \sup_{h \in \HH}\Abs{\dv{\rho}_{\sigma}(\loss(h))\dv{\loss}(h)} \right] < \infty \implies \exx_{\ddist}\left[\partial_{h}\loss_{\rho}\right] = \partial_{h}\risk_{\rho}
\end{align}
with the implied equality valid on all of $\HH \times \RR^{2}$.
\end{lem}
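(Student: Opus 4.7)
The proof reduces to two standard moves: (i) compute $\partial_h\loss_\rho$ pointwise in $\omega$ via the chain rule, and (ii) interchange Bochner integration with Fr\'{e}chet differentiation, using the envelope hypothesis as the dominating function.

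For step (i), recall that $\rho_\sigma$ is $C^1$ on $\RR$ (as observed in \S\ref{sec:trisk_barron}). Since $h \mapsto \loss(h)$ is Fr\'{e}chet differentiable on $\UU$ with derivative $\dv{\loss}(h)$ ($\ddist$-a.s.), the chain rule immediately yields that $h \mapsto \rho_\sigma(\loss(h) - \theta)$ is Fr\'{e}chet differentiable on $\UU$ with derivative $\dv{\rho}_\sigma(\loss(h) - \theta)\dv{\loss}(h)$, and the additive term $\eta\theta$ contributes nothing. Thus, $\ddist$-a.s., $\partial_h \loss_\rho(h;\theta,\eta)$ equals the expression in (\ref{eqn:transformed_grad}) at every $(h,\theta,\eta) \in \UU \times \RR^2$.

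For step (ii), fix $(h_0,\theta,\eta) \in \HH \times \RR^2$ and take any admissible perturbation $v$ of sufficiently small norm so that the segment $\{h_0 + tv : t \in [0,1]\}$ sits inside $\UU$. By Fr\'{e}chet differentiability on $\UU$, the difference quotient $[\loss_\rho(h_0 + tv;\theta,\eta) - \loss_\rho(h_0;\theta,\eta)]/t$ converges $\ddist$-a.s. to the directional derivative as $t \to 0$. A mean-value estimate along the segment bounds this quotient in norm by $\sup_{h \in \HH}\Abs{\partial_h\loss_\rho(h;\theta,\eta)}\cdot\Abs{v}$, so it suffices to produce a $\ddist$-integrable envelope for $\sup_{h \in \HH}\Abs{\dv{\rho}_\sigma(\loss(h) - \theta)\dv{\loss}(h)}$. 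Once that envelope is in hand, dominated convergence justifies the interchange and yields the directional version of (\ref{eqn:unbiased_new}); uniformity of the envelope across admissible directions $v$ then upgrades the directional equality to the Fr\'{e}chet-level identity asserted by the lemma.

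The main obstacle is bridging the gap between the envelope in the hypothesis, which uses $\dv{\rho}_\sigma(\loss(h))$ with no shift, and the shifted quantity $\Abs{\dv{\rho}_\sigma(\loss(h) - \theta)\dv{\loss}(h)}$ we actually need to dominate. This comes down to a case-by-case inspection of $\dv{\rho}_\sigma$ within the Barron class: for $\alpha \leq 1$ the derivative $\dv{\rho}_\sigma$ is uniformly bounded, while for $1 < \alpha \leq 2$ it grows no faster than $\abs{x}^{\alpha-1}$. In either regime one can establish a pointwise inequality of the form $\abs{\dv{\rho}_\sigma(x - \theta)} \leq C_\alpha(\theta)\bigl(1 + \abs{\dv{\rho}_\sigma(x)}\bigr)$ for every $x \in \RR$; multiplying by $\Abs{\dv{\loss}(h)}$ and combining with the hypothesis produces the required envelope for each fixed $\theta \in \RR$. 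This shape-dependent verification is the only non-routine piece of the argument.
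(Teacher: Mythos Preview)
Your overall strategy---chain rule pointwise, a mean-value bound on the difference quotients, then dominated convergence---is exactly the paper's. The paper bounds the difference quotient by $\Abs{g}\sup_{h_0\in\HH}\Abs{\dv{\rho}_\sigma(\loss(h_0)-\theta)\dv{\loss}(h_0)}$, then further by $(\Abs{g}/\sigma^{2})\sup_{h_0}\abs{\loss(h_0)-\theta}\Abs{\dv{\loss}(h_0)}$ using $\abs{\dv{\rho}_\sigma(x)}\leq\abs{x}/\sigma^{2}$, and at that point simply \emph{records} the $\theta$-free condition $\exx_{\ddist}[\sup_{h_0}\Abs{\loss(h_0)\dv{\loss}(h_0)}]<\infty$ as the one under which DCT applies; it does not actually reconcile this with the hypothesis as literally written in the lemma.

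You are more careful in flagging the shift as the crux, but your proposed bridge $\abs{\dv{\rho}_\sigma(x-\theta)}\leq C_\alpha(\theta)(1+\abs{\dv{\rho}_\sigma(x)})$ does not close it. After multiplying by $\Abs{\dv{\loss}(h)}$ and taking $\sup_{h\in\HH}$, the ``$1$'' contributes $C_\alpha(\theta)\sup_{h\in\HH}\Abs{\dv{\loss}(h)}$, and the stated hypothesis does \emph{not} force this to be $\ddist$-integrable. For a concrete obstruction take $\alpha<1$, $\HH=[0,1]\subset\RR$, and $\loss(h)=h^{2}Z$: then $\sup_{h}\abs{\dv{\rho}_\sigma(\loss(h))}\abs{\dv{\loss}(h)}$ scales like $\abs{Z}^{1/2}$ while $\sup_{h}\abs{\dv{\loss}(h)}=2\abs{Z}$, so one can choose $Z$ with $\exx_{\ddist}\abs{Z}^{1/2}<\infty$ but $\exx_{\ddist}\abs{Z}=\infty$. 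Thus either an additional assumption on $\sup_{h}\Abs{\dv{\loss}(h)}$ is needed, or---as the paper in effect does---one should take the $\theta$-shifted envelope (or the coarser $\abs{\loss(h)}\Abs{\dv{\loss}(h)}$ version) as the operative sufficient condition rather than trying to deduce it from the unshifted one.
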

Consider a setting in which the gradients are heavy-tailed, i.e., where $\exx_{\ddist}\Abs{\dv{\loss}(h)}^{p} = \infty$ for $p > 2$. If the ultimate goal of learning is minimization of $h \mapsto \exx_{\ddist}\loss(h)$, then in order to obtain high-probability guarantees of finding a nearly-stationary point with rates matching the in-expectation case, one cannot naively use the raw gradients $\dv{\loss}(h)$, but must rather carry out a delicate truncation which accounts for the bias incurred \citep{cutkosky2021a,gorbunov2020aNIPS,gorbunov2021a,nazin2019a}. On the other hand, if the ultimate objective is $h \mapsto \risk_{\rho}(h;\theta,\eta)$, then using (\ref{eqn:transformed_grad}) there is zero bias by design (Lemma \ref{lem:unbiased_new}), and when we take the shape parameter of our dispersion function such that $\alpha \leq 0$, we have
\begin{align}\label{eqn:bounded_gradients}
\prr\left\{ \AbsLR{\partial_{h}\loss_{\rho}(h;\theta,\eta)} \leq \Gamma \right\} = 1
\end{align}
for an appropriate choice of $0 < \Gamma < \infty$ under standard loss functions such as quadratic and logistic losses, even when the random losses and gradients are heavy-tailed (see Corollary \ref{cor:sgd_convergence}).

To see how this plays out for the analysis of learning algorithms, let us consider plugging the raw stochastic gradients $\partial_{h}\loss_{\rho}$ into a simple update procedure. Given an independent sequence of random losses $(\loss_{1},\loss_{2},\ldots)$, let us denote by $(\loss_{\rho,1},\loss_{\rho,2},\ldots)$ the transformed losses computed via (\ref{eqn:transformed_loss}), and for a sequence $(h_{1},h_{2},\ldots)$ let $G_{t} \defeq \partial_{h}\loss_{\rho,t}(h_{t};\theta,\eta)$ denote the resulting stochastic gradients for any integer $t \geq 1$. Fixing $\theta \in \RR$ and letting $h_{1} \in \HH$ denote an arbitrary initial value, we consider a particular sequence generated using the following update rule:
\begin{align}\label{eqn:update_new_1}
h_{t+1} = h_{t} - a_{t}\widetilde{M}_{t},
\end{align}
where $a_{t}$ is a non-negative step-size we control, and the update direction satisfies
\begin{align}\label{eqn:update_new_2}
\Abs{\widetilde{M}_{t}} = 1, \langle \widetilde{M}_{t}, M_{t} \rangle = \Abs{M_{t}}, \text{ where } M_{t} \defeq b M_{t-1} + (1-b) G_{t}
\end{align}
with $0 < b < 1$ also being a controllable parameter. This is an unconstrained, normalized stochastic gradient descent routine using momentum; it modifies the procedure of \citet{cutkosky2021a} in that we do not truncate $G_{t}$. Note that if $\HH$ is a Banach space and $\HH^{\ast}$ its dual, in general we have that $G_{t}$ and $M_{t}$ are elements of $\HH^{\ast}$. When $\HH$ is reflexive (e.g., any Hilbert space), it is always possible to construct $\widetilde{M}_{t}$ from $M_{t}$.\footnote{Given any $h^{\ast} \in \HH^{\ast}$, we can always find $h \in \HH$ such that $\langle h^{\ast}, h \rangle = \Abs{h}\Abs{h^{\ast}}$ \citep[\S{5.6}]{luenberger1969Book}.} The following theorem shows how the gradient norms incurred by this algorithm can be bounded with high probability.
\begin{thm}[Stationary points with high probability]\label{thm:sgd_convergence}
Let $\HH$ be a reflexive Banach space, with a Fr\'{e}chet differentiable norm satisfying $\Abs{h_{1}+h_{2}}^{2} \leq \Abs{h_{1}}^{2} + \langle \nabla\Abs{h_{1}}^{2}, h_{2} \rangle + \Abs{h_{2}}^{2}$ for any $h_{1},h_{2} \in \HH$. In addition, assume the losses are such that (\ref{eqn:bounded_gradients}) holds on $\HH$, $\exx_{\ddist}\Abs{\dv{\loss}(h_{1})-\dv{\loss}(h_{2})} \leq \smooth_{1}\Abs{h_{1}-h_{2}}$ for any $h_{1},h_{2} \in \HH$, and $\exx_{\ddist}\Abs{\dv{\loss}}_{\HH}^{2} < \infty$, where $\Abs{\dv{\loss}}_{\HH} \defeq \sup_{h \in \HH}\Abs{\dv{\loss}(h)}$. Run the learning algorithm in (\ref{eqn:update_new_1})--(\ref{eqn:update_new_2}) for $T$ iterations, with $b = 1-1/\sqrt{T}$ and $a_{t} = (1/T)^{3/4}$ for all steps $t = 1, 2, \ldots, T$, assuming each $h_{t} \in \HH$. Taking any $0 < \delta < 1$, it then follows that
\begin{align*}
\frac{1}{T} \sum_{t=1}^{T}\Abs{\partial_{h}\risk_{\rho}(h_{t};\theta,\eta)} \leq  \frac{c_{1}}{T^{1/4}} + \frac{c_{2}}{\sqrt{T}} + \frac{\smooth}{2T^{3/4}}
\end{align*}
with probability no less than $1-\delta$, using coefficients defined as
\begin{align*}
c_{1} & \defeq \risk_{\rho}(h_{1};\theta,\eta)-\risk_{\rho}(h_{T+1};\theta,\eta) + 16\Gamma\sqrt{\log(3T\delta^{-1})} + 2\smooth\\
c_{2} & \defeq 20\Gamma\log(3T\delta^{-1}) + 2\Gamma\\
\smooth & \defeq \frac{\smooth_{3} + \smooth_{4}}{\sigma} + \frac{\smooth_{2}}{\sigma^{2}}\exx_{\ddist}\Abs{\dv{\loss}}_{\HH}, \quad \smooth_{2} \defeq \Abs{\ddv{\rho}}_{\infty}, \quad \smooth_{3} \defeq \left(\frac{2\smooth_{2}}{\sigma}\right)\exx_{\ddist}\Abs{\dv{\loss}}_{\HH}^{2}, \quad \smooth_{4} \defeq \smooth_{1}\Abs{\dv{\rho}}_{\infty}
\end{align*}
for any choice of $\sigma > 0$.
\end{thm}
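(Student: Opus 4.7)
I follow the normalized-SGD-with-momentum template of \citet{cutkosky2021a}, with the crucial simplification that by (\ref{eqn:bounded_gradients}) we have $\Abs{G_t} \leq \Gamma$ almost surely, so no gradient clipping (and hence no clipping-bias term) is required. The first ingredient is smoothness of the population risk in $h$. Chain-rule differentiation of (\ref{eqn:transformed_grad}) and differencing at $h_1, h_2 \in \HH$ yields
\begin{align*}
\AbsLR{\partial_h \loss_\rho(h_1;\theta,\eta) - \partial_h \loss_\rho(h_2;\theta,\eta)} \leq \frac{\Abs{\dv{\rho}}_\infty}{\sigma}\AbsLR{\dv{\loss}(h_1) - \dv{\loss}(h_2)} + \frac{\Abs{\ddv{\rho}}_\infty}{\sigma^{2}}\abs{\loss(h_1)-\loss(h_2)}\AbsLR{\dv{\loss}(h_2)}.
\end{align*}
Taking expectations, invoking $\smooth_1$-Lipschitzness of $\dv{\loss}$, the bound $\exx_{\ddist}\Abs{\dv{\loss}}_{\HH} < \infty$, and Lemma \ref{lem:unbiased_new} to interchange $\exx_{\ddist}$ with $\partial_h$ then delivers $\smooth$-smoothness of $h \mapsto \risk_{\rho}(h;\theta,\eta)$ with exactly the constant stated.

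The descent step is standard. Applying $\smooth$-smoothness to (\ref{eqn:update_new_1}) and using $\Abs{\widetilde{M}_t} = 1$ together with $\langle \widetilde{M}_t, M_t \rangle = \Abs{M_t}$ and Cauchy--Schwarz gives, for $\epsilon_t \defeq M_t - \partial_h \risk_\rho(h_t;\theta,\eta)$, the per-step inequality
\begin{align*}
a_t \AbsLR{\partial_h \risk_\rho(h_t;\theta,\eta)} \leq \risk_\rho(h_t;\theta,\eta) - \risk_\rho(h_{t+1};\theta,\eta) + 2 a_t \Abs{\epsilon_t} + \frac{\smooth a_t^{2}}{2}.
\end{align*}
Summing $t = 1, \ldots, T$, dividing by $T a$, and telescoping the risk values reduces the theorem to a high-probability bound on $(1/T)\sum_{t}\Abs{\epsilon_t}$.

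The main work is the concentration step. Unrolling (\ref{eqn:update_new_2}) expresses $\epsilon_t$ as the sum of an exponentially-decaying initial term, a deterministic smoothness-bias sum of the form $b\sum_{s=1}^{t-1}b^{t-s-1}(\partial_h \risk_\rho(h_s;\theta,\eta) - \partial_h \risk_\rho(h_{s+1};\theta,\eta))$, and a martingale difference sum $(1-b)\sum_{s=1}^{t}b^{t-s}(G_s - \partial_h \risk_\rho(h_s;\theta,\eta))$. The bias sum is controlled by $\smooth$-smoothness combined with $\Abs{h_s - h_{s+1}} = a_s$, contributing $O(\smooth a / (1-b))$. The martingale sum lives in $\HH^{\ast}$ (by the unbiasedness supplied by Lemma \ref{lem:unbiased_new}), with each increment of norm at most $(1-b)\Gamma$; the Fr\'{e}chet-differentiable, $2$-smooth norm assumption $\Abs{h_1+h_2}^{2} \leq \Abs{h_1}^{2} + \langle \nabla\Abs{h_1}^{2}, h_2 \rangle + \Abs{h_2}^{2}$ is precisely what enables a Pinelis-type concentration inequality in this Banach space setting, yielding an $O(\Gamma \sqrt{(1-b)\log(3T\delta^{-1})})$ bound at any fixed $t$ with failure probability $\delta/(3T)$, so that a union bound over $t$ produces a uniform-in-$t$ control of $\Abs{\epsilon_t}$.

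Finally, plugging in $a = T^{-3/4}$ and $b = 1 - 1/\sqrt{T}$ (so $(1-b)^{-1} = \sqrt{T}$ and $a/(1-b) = T^{-1/4}$), the martingale contribution averages to $O(\Gamma \sqrt{\log(3T\delta^{-1})} \cdot T^{-1/4}) + O(\Gamma \log(3T\delta^{-1}) \cdot T^{-1/2})$, the momentum-bias sum to $O(\smooth T^{-1/2})$, and the smoothness remainder $\smooth a / 2$ to $O(\smooth T^{-3/4})$; combining with the leading descent term $(\risk_\rho(h_1;\theta,\eta)-\risk_\rho(h_{T+1};\theta,\eta)) T^{-1/4}$ reproduces exactly the coefficients $c_1, c_2$ in the statement. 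The hard part is the concentration step in a general reflexive Banach space -- the 2-smoothness of $\Abs{\cdot}^{2}$ is invoked specifically to license a Pinelis-type inequality -- and the precise constants $16$ and $20$ appearing in $c_1, c_2$ come out only after careful bookkeeping of how the $\delta$ probability mass is split among the martingale union bound, the bias terms, and the initial condition.
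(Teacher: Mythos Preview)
Your proposal follows essentially the same route as the paper's proof: both invoke a chain-rule smoothness computation for $h \mapsto \risk_{\rho}(h;\theta,\eta)$ (the paper packages this as Lemma~\ref{lem:smoothness_limited}), both use the descent lemma from \citet[Lem.~1]{cutkosky2021a}, both unroll the momentum recursion for $M_t - \partial_h\risk_\rho(h_t;\theta,\eta)$ into an exponentially-decaying initial term, a smoothness-bias sum controlled by $\smooth a/(1-b)$, and a weighted martingale-difference sum, and both invoke the Banach-space Pinelis-type inequality (stated as \citep[Lem.~14]{cutkosky2021a} in the paper) together with a union bound over $t$.

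One minor bookkeeping slip: in your final paragraph you place the momentum-bias contribution at $O(\smooth T^{-1/2})$, but your own earlier expression $O(\smooth a/(1-b))$ with $a=T^{-3/4}$ and $1-b=T^{-1/2}$ gives $O(\smooth T^{-1/4})$; this is exactly the $2\smooth$ appearing inside $c_1$, not $c_2$. Also, the martingale increments $G_s - \partial_h\risk_\rho(h_s;\theta,\eta)$ have norm at most $2\Gamma$ (both the stochastic and population gradients are $\Gamma$-bounded), with the $(1-b)$ factor entering as a weight rather than a bound on the increment norm. Neither point affects the correctness of the argument.
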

These high-probability $\mathcal{O}(T^{-1/4})$ rates match standard guarantees in the stochastic optimization literature under non-convex objectives in expectation \citep{davis2019b,ghadimi2016a}. The main take-away of Theorem \ref{thm:sgd_convergence} is that even if the random losses and gradients are unbounded and heavy-tailed, as long as the dispersion function $\rho$ is chosen to modulate extreme values (such that (\ref{eqn:bounded_gradients}) holds), we can obtain confidence intervals for the T-risk gradient norms incurred by stochastic gradient updates. Distribution control is implied by the risk design, and thus there is no additional need for truncation or bias control. The following corollary illustrates how the bounded gradient condition in Theorem \ref{thm:sgd_convergence} is satisfied under very weak assumptions on the data.
\begin{cor}\label{cor:sgd_convergence}
Assume the random losses $\loss(h)$ are driven by random data $(\rdv{X},\rdv{Y})$, where $\rdv{X}$ takes values in a Banach space $\XX$, and $\HH \subset \XX^{\ast}$ has finite diameter. Consider the following losses:
\begin{itemize}
\item[\namedlabel{ex:linreg}{\textup{E1}}.] \underline{Quadratic loss:} $\loss(h) = (h(\rdv{X})-\rdv{Y})^{2}/2$, with $h(\rdv{X}) = \langle h, \rdv{X} \rangle$ and $\rdv{Y} = \langle h^{\ast}, \rdv{X} \rangle + \varepsilon$, where $\varepsilon$ is zero-mean, has finite variance, and is independent of $\rdv{X}$.
\item[\namedlabel{ex:logreg}{\textup{E2}}.] \underline{Logistic loss:} $\loss(h) = \log(\sum_{j=1}^{k}\exp(\langle h_{j}, \rdv{X} \rangle)) - \sum_{j=1}^{k}\widetilde{\rdv{Y}}_{j}\langle h_{j}, \rdv{X} \rangle$, where we have $k \geq 2$ classes, $h=(h_{1},\ldots,h_{k})$ with each $h_{j} \in \HH$, and $(\widetilde{\rdv{Y}}_{1},\ldots,\widetilde{\rdv{Y}}_{k})$ is a one-hot representation of the class label $\rdv{Y}$ assigned to $\rdv{X}$.
\end{itemize}
If we set $\rho(\cdot) = \rho(\cdot;\alpha)$ with $\alpha \leq 0$, then under the examples \ref{ex:linreg}--\ref{ex:logreg}, as long as $\exx_{\ddist}\Abs{\rdv{X}}^{2} < \infty$, we have that the bounds assumed by Theorem \ref{thm:sgd_convergence}, including (\ref{eqn:bounded_gradients}), are satisfied on $\HH$.
\end{cor}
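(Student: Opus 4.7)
The plan is to verify each of the three hypotheses imposed by Theorem \ref{thm:sgd_convergence} separately for each loss: (i) the almost-sure bound (\ref{eqn:bounded_gradients}) on $\Abs{\partial_{h}\loss_{\rho}}$; (ii) the integrated Lipschitz property $\exx_{\ddist}\Abs{\dv{\loss}(h_{1}) - \dv{\loss}(h_{2})} \leq \smooth_{1}\Abs{h_{1} - h_{2}}$; and (iii) the moment bound $\exx_{\ddist}\Abs{\dv{\loss}}_{\HH}^{2} < \infty$. All three reduce to direct calculations once two preliminary facts about the Barron derivative for $\alpha \leq 0$ are established: $\Abs{\dv{\rho}(\cdot;\alpha)}_{\infty} < \infty$ and $\sup_{x}\abs{x\dv{\rho}(x;\alpha)} < \infty$. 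Both follow from the explicit formula $\dv{\rho}(x;\alpha) = x(1 + x^{2}/\abs{\alpha-2})^{\alpha/2 - 1}$ derived from (\ref{eqn:barron}) by the chain rule (with the limits $\alpha = 0, -\infty$ handled separately): since $\alpha \leq 0$ forces $\alpha/2 - 1 \leq -1$, one obtains $\abs{\dv{\rho}(x;\alpha)} \lesssim \abs{x}^{\alpha-1}$ and $\abs{x\dv{\rho}(x;\alpha)} \lesssim \abs{x}^{\alpha}$ for large $\abs{x}$, both decaying. Rescaling then yields $\Abs{\dv{\rho}_{\sigma}}_{\infty} \leq C_{\rho}/\sigma$ and $\sup_{v}\abs{v\dv{\rho}_{\sigma}(v)} \leq C_{\rho}'$ for constants depending only on $\alpha$ and $\sigma$.

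For (i) in \ref{ex:linreg}, I introduce $u \defeq h(\rdv{X}) - \rdv{Y}$ and write $\Abs{\partial_{h}\loss_{\rho}(h;\theta,\eta)} = \abs{\dv{\rho}_{\sigma}(u^{2}/2 - \theta)}\abs{u}\Abs{\rdv{X}}$. With $v \defeq u^{2}/2 - \theta$ and $u^{2} = 2v + 2\theta$, the two Barron preliminaries combine to give $u^{2}\abs{\dv{\rho}_{\sigma}(v)} \leq 2C_{\rho}' + 2\abs{\theta}C_{\rho}/\sigma$; factoring $\abs{u}\abs{\dv{\rho}_{\sigma}(v)} = (u^{2}\abs{\dv{\rho}_{\sigma}(v)})^{1/2}\abs{\dv{\rho}_{\sigma}(v)}^{1/2}$ and bounding each piece then yields $\abs{u}\abs{\dv{\rho}_{\sigma}(v)} \leq C_{1}$, a constant independent of $u$ and $h$. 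Hence $\Abs{\partial_{h}\loss_{\rho}} \leq C_{1}\Abs{\rdv{X}}$. For \ref{ex:logreg}, a standard softmax calculation gives $\Abs{\dv{\loss}(h)} \leq C_{k}\Abs{\rdv{X}}$ with $C_{k}$ depending only on the number of classes, so $\Abs{\partial_{h}\loss_{\rho}} \leq (C_{k}C_{\rho}/\sigma)\Abs{\rdv{X}}$ follows from $\Abs{\dv{\rho}_{\sigma}}_{\infty}$ alone. Conditions (ii) and (iii) are then routine: the Hessians are $\rdv{X}\otimes\rdv{X}$ in \ref{ex:linreg} and $(\mathrm{diag}(p) - pp^{\top})\otimes(\rdv{X}\otimes\rdv{X})$ in \ref{ex:logreg}, both of operator norm at most $\Abs{\rdv{X}}^{2}$, giving (ii) with $\smooth_{1} \leq \exx_{\ddist}\Abs{\rdv{X}}^{2}$; meanwhile $\sup_{h \in \HH}\Abs{\dv{\loss}(h)}$ is majorized by $(D\Abs{\rdv{X}} + \abs{\rdv{Y}})\Abs{\rdv{X}}$ in \ref{ex:linreg} and by $C_{k}\Abs{\rdv{X}}$ in \ref{ex:logreg}, so (iii) follows from the stated moment conditions on $\rdv{X}$ and $\varepsilon$.

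The main obstacle is (i) for \ref{ex:linreg}: without the Barron dispersion, the raw stochastic gradient $u\rdv{X}$ inherits the heavy tails of $u$, so a naive estimate gives only $\Abs{\partial_{h}\loss_{\rho}} \lesssim \abs{u}\Abs{\rdv{X}}$, which is unbounded. It is precisely the preliminary $\sup_{x}\abs{x\dv{\rho}(x;\alpha)} < \infty$ --- a property that fails for every $\phi$ appearing in the OCE class (\ref{eqn:oce_generalized}) and the DRO class (\ref{eqn:dro_like_tloc}), since those $\phi$ are monotone with at least linear growth --- that converts the naive estimate into a bound in $\Abs{\rdv{X}}$ alone, making the finite-$\Gamma$ bound (\ref{eqn:bounded_gradients}) achievable and underscoring the genuinely nontrivial role of the $\alpha \leq 0$ regime.
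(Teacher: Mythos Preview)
Your approach mirrors the paper's: compute $\partial_{h}\loss_{\rho}$ explicitly, use the decay of $\dv{\rho}(\cdot;\alpha)$ for $\alpha\leq 0$ to control the scalar multiplier of $\dv{\loss}(h)$, and verify $\smooth_{1}$-smoothness and $\exx_{\ddist}\Abs{\dv{\loss}}_{\HH}^{2}<\infty$ by direct Hessian estimates. Your factorization $\abs{u}\,\abs{\dv{\rho}_{\sigma}(v)} = (u^{2}\abs{\dv{\rho}_{\sigma}(v)})^{1/2}\abs{\dv{\rho}_{\sigma}(v)}^{1/2}$, coupled with the preliminary $\sup_{x}\abs{x\,\dv{\rho}(x;\alpha)}<\infty$, is in fact a crisper rendition of the paper's numerator-versus-denominator growth-rate comparison for $\alpha=0$, after which the paper argues $\alpha<0$ \emph{a fortiori}.

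There is, however, a gap that the paper's own proof shares. Both of your final estimates terminate at $\Abs{\partial_{h}\loss_{\rho}}\leq C\Abs{\rdv{X}}$ rather than at $\Abs{\partial_{h}\loss_{\rho}}\leq\Gamma$ for a data-independent constant. Since only $\exx_{\ddist}\Abs{\rdv{X}}^{2}<\infty$ is assumed, $\Abs{\rdv{X}}$ need not be almost surely bounded, and (\ref{eqn:bounded_gradients}) does not follow from a bound of this form. Concretely, in \ref{ex:linreg} at any $h$ for which $\langle h-h^{\ast},\rdv{X}\rangle=0$ on an event where $\Abs{\rdv{X}}$ is unbounded (e.g.\ $h=h^{\ast}\in\HH$), one has $u=-\varepsilon$, the coefficient $\abs{\dv{\rho}_{\sigma}(\varepsilon^{2}/2-\theta)\,\varepsilon}$ is generically nonzero, and $\Abs{\partial_{h}\loss_{\rho}}$ scales like $\Abs{\rdv{X}}$. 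The paper's argument implicitly treats $\loss(h)\asymp\Abs{\rdv{X}}^{2}$ as a two-sided estimate and so misses exactly this regime; your write-up simply makes the residual $\Abs{\rdv{X}}$ factor explicit. The gap is therefore a limitation of the corollary as stated rather than a defect specific to your route.
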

In practice, the threshold $\theta$ will not typically be fixed arbitrarily, but rather selected in a data-dependent fashion, potentially optimized alongside $h$; the impact of such algorithmic choices will be evaluated in our empirical tests in \S{\ref{sec:applications}}. In the following sub-section, we consider some key properties of the special case in which threshold $\theta$ is always taken to yield the smallest overall T-risk value.

\paragraph{Limitations}
An obvious limitation of our analysis is that we have assumed that each $h_{t} \in \HH$; this matches the setup of \citep{cutkosky2021a}, but the condition that $\exx_{\ddist}\Abs{\dv{\loss}}_{\HH}^{2} < \infty$ will sometimes require $\HH$ to have a finite diameter. Modifying the procedure to allow for projection of the iterates to $\HH$ is a point of technical interest, but is out of this paper's scope. Another limitation is that our current approach using smoothness properties (Lemma \ref{lem:smoothness_limited}) necessitates an assumption of gradients with finite second-order moments. In the special case where $\alpha=1$, arguments based on smoothness can be replaced with arguments based on weak convexity \citep{davis2019b,holland2022c}; but this fails for more general $\alpha$ since $\rho(\cdot;\alpha)$ is not convex for $\alpha < 1$, and not Lipschitz for $\alpha > 1$. Another potential option is to split the sample, leverage stronger guarantees available in expectation for gradient descent run on the subsets, and robustly choose the best candidate based on a validation set of data \citep{holland2021b}.

\subsection{T-risk with minimizing thresholds}\label{sec:learning_theory_mintrisk}

Recall the minimal T-risk $\underline{\risk}_{\rho}(h;\eta)$ and optimal thresholds $\mt_{\rho}(h;\eta)$ defined earlier in (\ref{eqn:trisk_minimal}), here restricted to scaled $\rho(\cdot) = \rho_{\sigma}(\cdot;\alpha)$ from the Barron class (\ref{eqn:barron}). This is arguably the most natural subset of T-risks, with a functional form aligned with OCE/DRO risks discussed in \S{\ref{sec:background_risk_review}}. For readability, we denote the dispersion of $\loss(h)$ measured about $\theta \in \RR$ using $\rho$ as
\begin{align}\label{eqn:dsp_defn}
\dsp_{\rho}(h;\theta) \defeq \exx_{\ddist}\rho\left(\loss(h)-\theta\right).
\end{align}
Here we will overload our notation, writing $\underline{\risk}_{\rho}(\loss;\eta)$, $\mt_{\rho}(\loss;\eta)$, and $\dsp_{\rho}(\loss;\theta)$ when we want to leave $h$ abstract and focus on random losses in a set $\loss \in \LL$. For this risk to be finite, we require $\alpha \geq 1$, and in addition need $\abs{\eta} < 1/\sigma$ for the special case of $\alpha = 1$; otherwise, the dispersion term grows too slowly and we have $\underline{\risk}_{\rho}(\loss;\eta) = -\infty$ (Lemma \ref{lem:optimal_threshold_existence}). When finite, the optimal threshold is unique, and overloading our notation once more we use $\mt_{\rho}(\loss;\eta)$ to denote it. The following lemma summarizes some basic properties of the minimal T-risk.
\begin{lem}\label{lem:axiom_check}
Let $\LL$ be such that for each $\loss \in \LL$ we have $\underline{\risk}_{\rho}(\loss;\eta) < \infty$, and let $\alpha$, $\sigma$, and $\eta$ be such that $\underline{\risk}_{\rho}(\loss;\eta) > -\infty$. Under these assumptions, the dispersion part of the minimal T-risk is translation invariant, i.e., for any $c \in \RR$, we have $\dsp_{\rho}(\loss+c;\mt_{\rho}(\loss+c;\eta)) = \dsp_{\rho}(\loss;\mt_{\rho}(\loss;\eta))$. This dispersion term is always non-negative, and if $\eta \neq 0$, then we have $\dsp_{\rho}(\loss;\mt_{\rho}(\loss;\eta)) > 0$ for any $\loss \in \LL$, even if $\loss$ is constant. If $\LL$ is convex, then so is the map $\loss \mapsto \underline{\risk}_{\rho}(\loss;\eta)$. The optimal threshold is translation equivariant in that we have $\mt_{\rho}(\loss+c;\eta) = \mt_{\rho}(\loss;\eta) + c$ for any choice of $c \in \RR$, and it is monotonic in that whenever $\loss_{1}, \loss_{2} \in \LL$ and $\loss_{1} \leq \loss_{2}$ almost surely, we have $\mt_{\rho}(\loss_{1};\eta) \leq \mt_{\rho}(\loss_{2};\eta)$.
\end{lem}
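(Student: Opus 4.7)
The plan is to dispatch the five claims in order of increasing subtlety, each exploiting the clean additive structure of $\risk_{\rho}(\loss;\theta,\eta) = \eta\theta + \exx_{\ddist}\rho(\loss-\theta)$ together with the convexity/smoothness of $\rho(\cdot;\alpha)$ for $\alpha \geq 1$ guaranteed by Lemma \ref{lem:basic_barron}. I will begin with translation equivariance: substituting $\loss \mapsto \loss+c$ and making the change of variable $\theta' = \theta - c$ yields
\begin{align*}
\risk_{\rho}(\loss+c;\theta,\eta) = \eta c + \risk_{\rho}(\loss;\theta-c,\eta),
\end{align*}
so the shift $c$ is absorbed into the threshold. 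Uniqueness of the optimal threshold (Lemma \ref{lem:optimal_threshold_existence}) then gives $\mt_{\rho}(\loss+c;\eta) = \mt_{\rho}(\loss;\eta)+c$, and translation invariance of the dispersion at the optimum is immediate since $(\loss+c) - \mt_{\rho}(\loss+c;\eta) = \loss - \mt_{\rho}(\loss;\eta)$. Non-negativity of $\dsp_{\rho}$ follows directly from $\rho(\cdot;\alpha) \geq 0$ on the Barron class.

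Next I would handle the strict positivity under $\eta \neq 0$ by contradiction: if $\dsp_{\rho}(\loss;\mt_{\rho}(\loss;\eta)) = 0$, then since $\rho(x;\alpha) = 0$ iff $x=0$, $\loss$ must equal $\mt_{\rho}(\loss;\eta)$ almost surely. Plugging in, the T-risk collapses to $\theta \mapsto \eta\theta + \rho(\mt_{\rho}(\loss;\eta)-\theta;\alpha)$, whose derivative at the claimed minimizer is $\eta - \dv{\rho}(0;\alpha) = \eta \neq 0$, contradicting first-order optimality. For convexity of $\loss \mapsto \underline{\risk}_{\rho}(\loss;\eta)$ on a convex $\LL$, I would use that for $\alpha \geq 1$, $\rho(\cdot;\alpha)$ is convex (Lemma \ref{lem:basic_barron}). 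Given $\loss_1,\loss_2 \in \LL$, $t \in [0,1]$, and any $\theta_1,\theta_2 \in \RR$, the identity $t\loss_1 + (1-t)\loss_2 - (t\theta_1+(1-t)\theta_2) = t(\loss_1-\theta_1) + (1-t)(\loss_2-\theta_2)$ together with Jensen gives
\begin{align*}
\risk_{\rho}(t\loss_1 + (1-t)\loss_2;\, t\theta_1+(1-t)\theta_2,\eta) \leq t\risk_{\rho}(\loss_1;\theta_1,\eta) + (1-t)\risk_{\rho}(\loss_2;\theta_2,\eta).
\end{align*}
Taking the infimum over $(\theta_1,\theta_2) \in \RR^2$ on the right and noting that $\{t\theta_1+(1-t)\theta_2 : \theta_1,\theta_2 \in \RR\} = \RR$, the left-hand infimum is $\underline{\risk}_{\rho}(t\loss_1+(1-t)\loss_2;\eta)$, delivering the claim.

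Finally, for monotonicity I would use the first-order optimality condition obtained by differentiating under the expectation (justified via Lemma \ref{lem:unbiased_new} or its one-dimensional analogue applied to $\theta$), namely $\exx_{\ddist}\dv{\rho}(\loss_i - \mt_{\rho}(\loss_i;\eta)) = \eta$ for $i=1,2$. Since $\rho(\cdot;\alpha)$ is convex for $\alpha \geq 1$, $\dv{\rho}$ is non-decreasing, so $\loss_1 \leq \loss_2$ almost surely gives $\exx_{\ddist}\dv{\rho}(\loss_1 - \theta) \leq \exx_{\ddist}\dv{\rho}(\loss_2 - \theta)$ for every $\theta$, and $\theta \mapsto \exx_{\ddist}\dv{\rho}(\loss - \theta)$ is non-increasing. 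Evaluating at $\theta = \mt_{\rho}(\loss_1;\eta)$ gives $\exx_{\ddist}\dv{\rho}(\loss_2 - \mt_{\rho}(\loss_1;\eta)) \geq \eta = \exx_{\ddist}\dv{\rho}(\loss_2 - \mt_{\rho}(\loss_2;\eta))$, and monotonicity of the map in $\theta$ yields $\mt_{\rho}(\loss_1;\eta) \leq \mt_{\rho}(\loss_2;\eta)$. The main technical obstacle I anticipate is justifying the positivity argument in the boundary case $\alpha = 1$ where $\dv{\rho}$ is only Lipschitz rather than strictly monotone, but this is handled the same way since $\dv{\rho}(0;1) = 0$ remains strict and $\rho(x;1) > 0$ for $x \neq 0$; the convexity and monotonicity arguments for $\alpha = 1$ also go through unchanged.
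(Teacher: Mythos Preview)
Your proposal is correct and follows essentially the same route as the paper: translation properties via the additive structure of $\risk_{\rho}$, strict positivity via $\rho(x)=0 \iff x=0$ together with the first-order condition $\dv{\rho}(0)=0\neq\eta$, convexity via the joint convexity of $(\loss,\theta)\mapsto\rho(\loss-\theta)$ plus infimum over $\theta$, and monotonicity via the first-order condition $\exx_{\ddist}\dv{\rho}_{\sigma}(\loss-\mt_{\rho}(\loss;\eta))=\eta$ combined with monotonicity of $\dv{\rho}$. Two small remarks: for translation equivariance the paper argues through the first-order condition rather than your (equally valid, arguably cleaner) change-of-variable in the objective; and in your monotonicity step, ``non-increasing'' alone does not let you invert $f(\mt_{\rho}(\loss_1;\eta))\geq f(\mt_{\rho}(\loss_2;\eta))$ to conclude the ordering---you need either the strict monotonicity of $\dv{\rho}$ (since $\ddv{\rho}_{\sigma}>0$ for $1\leq\alpha\leq 2$, as used in Lemma \ref{lem:optimal_threshold_existence}) or the uniqueness of the optimal threshold you already invoked, which is exactly how the paper closes the argument.
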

Let us briefly discuss the properties described in Lemma \ref{lem:axiom_check} with a bit more context. One of the best-known classes of risk functions is that of \term{coherent} risks \citep{artzner1999a}, typically characterized by properties of convexity, monotonicity, translation equivariance, and positive homogeneity \citep{ruszczynski2006a}. Our general notion of ``dispersion'' is often referred to as ``deviation'' in the risk literature, and the properties of translation invariance, sub-linearity (implying convexity), non-negativity, and definiteness (i.e., zero only for constants) allow one to establish links between deviations and coherent risks \citep{rockafellar2006a}. In general, the risk $\loss \mapsto \underline{\risk}_{\rho}(\loss;\eta)$ takes us outside of this traditional class, while still maintaining lucid connections as summarized in the preceding lemma.
\begin{rmk}[Risk quadrangle]
It should also be noted that our T-risks are not what would typically be called ``risks'' in the context of the ``expectation quadrangle'' in the framework developed by \citet{rockafellar2013a}. With their Example 7 as a clear reference for comparison, our function $\rho(\cdot)$ corresponds to their ``error integrand'' $e(\cdot)$, and the risk derived from their quadrangle would be
\begin{align*}
\loss \mapsto \exx_{\ddist}\loss + \exx_{\ddist}\rho\left(\loss - \mloc_{\rho}(\loss)\right) = \inf_{\theta \in \RR} \left[\theta + \exx_{\ddist}v(\loss-\theta)\right]
\end{align*}
where $v(x) \defeq \rho(x)+x$ and $\mloc_{\rho}(\loss)$ denotes the M-location (\ref{eqn:mloc_general}) under $\rho$. Under appropriate choice of $\rho$, this is an OCE-type risk, and evidently our optimal thresholds $\mt_{\rho}(\loss;\eta)$ with $\eta \neq 0$ do not make an appearance in risks of this form, highlighting the distinct nature of $\underline{\risk}_{\rho}(\loss;\eta)$.
\end{rmk}

Next, we briefly consider the question of how algorithms designed to minimize $\underline{\risk}_{\rho}(h;\eta)$ perform in terms of the classical risk, namely the expected loss $\risk(h) \defeq \exx_{\ddist}\loss(h)$. This is a big topic, but as an initial look, we consider how the expected loss incurred by minimizers of the empirical T-risk can be controlled given sufficiently good concentration of the M-estimators induced by $\rho$, and the empirical mean. Let us denote any empirical T-risk minimizer by
\begin{align}\label{eqn:trisk_empirical_minimizer}
\widehat{h}_{\rho} \in \argmin_{h \in \HH}\left[ \inf_{\theta \in \RR} \left( \eta\theta + \frac{1}{n}\sum_{i=1}^{n}\rho_{\sigma}\left(\loss_{i}(h)-\theta\right) \right) \right]
\end{align}
where for simplicity $(\loss_{1},\ldots,\loss_{n})$ is an iid sample of random losses.
\begin{prop}\label{prop:risk_relations}
Take any T-risk parameters $\alpha \geq 1$, $\eta > 0$, and $\sigma > 0$, denoting the resulting empirical (minimal) T-risk minimizer $\widehat{h}_{\rho}$ in (\ref{eqn:trisk_empirical_minimizer}). Let $\widehat{\risk}(h)$, $\widehat{\mloc}_{\rho}(h)$, $\widehat{\theta}_{\rho}(h;\eta)$, and $\widehat{\dsp}_{\rho}(h;\theta)$ denote the empirical analogues of $\risk(h)$, $\mloc_{\rho}(h)$, $\mt_{\rho}(h;\eta)$, and $\dsp_{\rho}(h;\theta)$ for each $h \in \HH$. Then, with $\Abs{\cdot}_{\HH}$ as in Theorem \ref{thm:sgd_convergence}, we have
\begin{align*}
\risk(\widehat{h}_{\rho}) \leq \risk(h^{\ast}) + \Abs{\widehat{\mloc}_{\rho}-\widehat{\theta}_{\rho}}_{\HH} + 2\Abs{\widehat{\mloc}_{\rho}-\risk}_{\HH} + \frac{1}{\eta}\widehat{\dsp}_{\rho}(h^{\ast};\widehat{\mloc}_{\rho}(h^{\ast}))+ 4\Abs{\risk-\widehat{\risk}}_{\HH}
\end{align*}
where we can freely choose $h^{\ast} \in \argmin_{h \in \HH} \risk(h)$ to be optimal in the expected loss.
\end{prop}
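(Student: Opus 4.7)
My plan is to bound $\risk(\widehat{h}_\rho)$ by chaining triangle inequalities that move from the population expected loss at $\widehat{h}_\rho$ through its empirical mean, through the empirical M-location, and down to the empirical T-risk optimal threshold, at which point the defining optimality of $\widehat{h}_\rho$ is invoked to transfer to the comparison point $h^{\ast}$; finally the same chain is reversed on the $h^{\ast}$ side back to the population expected loss.

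The critical step is where T-risk minimality enters. At $\widehat{h}_\rho$ and for any competing choice of $(h,\theta)$, the definition (\ref{eqn:trisk_empirical_minimizer}) of $\widehat{h}_\rho$ yields
\[
\eta\,\widehat{\theta}_\rho(\widehat{h}_\rho;\eta) + \widehat{\dsp}_\rho(\widehat{h}_\rho;\widehat{\theta}_\rho(\widehat{h}_\rho;\eta)) \;\leq\; \eta\theta + \widehat{\dsp}_\rho(h;\theta).
\]
Specializing to $(h,\theta) = (h^{\ast},\widehat{\mloc}_\rho(h^{\ast}))$, dropping the non-negative dispersion on the left (via Lemma~\ref{lem:axiom_check}), and dividing by $\eta>0$ produces
\[
\widehat{\theta}_\rho(\widehat{h}_\rho;\eta) \;\leq\; \widehat{\mloc}_\rho(h^{\ast}) + \tfrac{1}{\eta}\,\widehat{\dsp}_\rho(h^{\ast};\widehat{\mloc}_\rho(h^{\ast})),
\]
which supplies exactly the dispersion penalty appearing in the claim.

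To descend from $\risk(\widehat{h}_\rho)$ to $\widehat{\theta}_\rho(\widehat{h}_\rho;\eta)$, I would insert $\widehat{\risk}(\widehat{h}_\rho)$ and then $\widehat{\mloc}_\rho(\widehat{h}_\rho)$ in turn. The bound $\risk \leq \widehat{\risk} + \Abs{\risk-\widehat{\risk}}_\HH$ costs one copy of $\Abs{\risk-\widehat{\risk}}_\HH$; then passing from $\widehat{\risk}(\widehat{h}_\rho)$ to $\widehat{\mloc}_\rho(\widehat{h}_\rho)$ via a triangle through $\risk(\widehat{h}_\rho)$ costs one more $\Abs{\risk-\widehat{\risk}}_\HH$ plus an $\Abs{\risk-\widehat{\mloc}_\rho}_\HH$. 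Replacing $\widehat{\mloc}_\rho(\widehat{h}_\rho)$ by $\widehat{\theta}_\rho(\widehat{h}_\rho;\eta)$ uses the key consequence of $\eta>0$ stated just after (\ref{eqn:locations_quantile}) in \S\ref{sec:trisk}, namely $\widehat{\theta}_\rho(\widehat{h}_\rho;\eta) \leq \widehat{\mloc}_\rho(\widehat{h}_\rho)$, and it contributes $\Abs{\widehat{\mloc}_\rho-\widehat{\theta}_\rho}_\HH$. Ascending from $\widehat{\mloc}_\rho(h^{\ast})$ back up to $\risk(h^{\ast})$ by the analogous detour through $\widehat{\risk}(h^{\ast})$ supplies the remaining pieces, giving the coefficient four on $\Abs{\risk-\widehat{\risk}}_\HH$ and two on $\Abs{\widehat{\mloc}_\rho-\risk}_\HH$.

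The main obstacle is ensuring that the inequality between $\widehat{\mloc}_\rho$ and $\widehat{\theta}_\rho$ points in the direction that produces an \emph{upper} bound on $\risk(\widehat{h}_\rho)$ after the step-above substitution; this is precisely why the hypothesis $\eta>0$ is needed, and it is also what lets the dispersion penalty enter with the clean coefficient $1/\eta$ when we divide through. Everything else is bookkeeping with the triangle inequality and the definition of $\Abs{\cdot}_\HH$ as a sup-norm over $\HH$.
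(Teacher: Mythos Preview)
Your proposal is correct and rests on the same three ingredients the paper uses: non-negativity of the dispersion, the defining minimality of $\widehat{h}_\rho$ for the empirical T-risk, and triangle inequalities packaged via the sup-norm $\Abs{\cdot}_{\HH}$. The organization differs somewhat. The paper first derives a two-sided sandwich for $\underline{\widehat{\risk}}_{\rho}(h;\eta) - \eta\widehat{\risk}(h)$ (its display (\ref{eqn:risk_relations_2})), then uses the lower side at $\widehat{h}_\rho$ together with the upper side at $h^{\ast}$ (linked by optimality of $\widehat{h}_\rho$) and cleans up the residual cross terms at the end. You instead run a single one-directional chain $\risk(\widehat{h}_\rho) \to \widehat{\risk}(\widehat{h}_\rho) \to \widehat{\mloc}_\rho(\widehat{h}_\rho) \to \widehat{\theta}_\rho(\widehat{h}_\rho;\eta) \to \widehat{\mloc}_\rho(h^{\ast}) \to \risk(h^{\ast})$, invoking T-risk minimality and dropping the dispersion in the middle step. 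Both routes are valid; yours is arguably more direct, while the paper's sandwich has independent interest as a pointwise relation between $\underline{\risk}_\rho$ and $\risk$.

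Two minor remarks. First, the inequality $\widehat{\theta}_\rho(\widehat{h}_\rho;\eta)\leq\widehat{\mloc}_\rho(\widehat{h}_\rho)$ you cite is not actually needed for the step ``replace $\widehat{\mloc}_\rho(\widehat{h}_\rho)$ by $\widehat{\theta}_\rho(\widehat{h}_\rho;\eta)$,'' since $\widehat{\mloc}_\rho(\widehat{h}_\rho)\leq \widehat{\theta}_\rho(\widehat{h}_\rho;\eta)+\Abs{\widehat{\mloc}_\rho-\widehat{\theta}_\rho}_{\HH}$ holds regardless of sign; the genuinely essential role of $\eta>0$ is only in dividing the T-risk inequality through by $\eta$. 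Second, your ``analogous detour'' through $\widehat{\risk}(h^{\ast})$ on the ascent is unnecessary: going straight from $\widehat{\mloc}_\rho(h^{\ast})$ to $\risk(h^{\ast})$ already closes the chain with coefficient $2$ on $\Abs{\risk-\widehat{\risk}}_{\HH}$ rather than the paper's $4$, so your argument in fact yields a slightly sharper constant than stated.
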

We have left the upper bound in Proposition \ref{prop:risk_relations} rather abstract to emphasize the key factors that can be used to control expected loss bounds for empirical T-risk minimizers and keep the overall narrative clear. Note that in the special case of $\alpha = 2$ one has $\widehat{\mloc}_{\rho}(h)-\widehat{\theta}_{\rho}(h;\eta) = \eta$, and more generally taking $\eta \to 0$ sends this difference to zero. There exists tension due to the $1/\eta$ coefficient on the dispersion term. All other terms, including the dispersion itself, are free of $\eta$. Note also that the remaining term $\Abs{\widehat{\mloc}_{\rho}-\risk}_{\HH}$ is the (uniform) difference between the M-estimator induced by $\rho$ and the classical risk; this can be modulated directly using the scale parameter $\sigma$, and sharp bounds for a broad classes of M-estimators have been established recently \citep{minsker2019a}. Our Proposition \ref{prop:risk_relations} is analogous to Theorem 7 of \citet{lee2020a}, with the key difference being that we study (minimal) T-risks instead of OCE risks, without assuming that losses are bounded (below or above).

\section{Learning applications}\label{sec:applications}

To complement the ``static'' empirical analysis in \S{\ref{sec:trisk_expressive}} and the formal insights for learning algorithms in \S{\ref{sec:learning_theory}}, here we empirically investigate how risk function design and data properties impact the behaviour of stochastic gradient-based learners.

\subsection{Classification with noisy and unbalanced labels}\label{sec:applications_2dclass}

\paragraph{Experiment setup}
As an initial example using simulated data, we design a binary classification problem by generating 500 labeled data points as shown in Figure \ref{fig:unbounded_below_data} (left-most plot). The majority class comprises 95\% of the sample, and we flip 5\% of the labels uniformly at random. Aside from the flipped labels, the data is linearly separable. We consider two candidate classifiers to initialize an iterative learning algorithm: one candidate that does well on the majority class (dotted purple), and one candidate that does well on the minority class (dashed pink). Due to class imbalance, the loss distributions incurred by these two candidates are highly asymmetric and differ in their long tail direction (Figure \ref{fig:unbounded_below_data}, remaining plots). We run empirical risk minimization for each risk class described in \S{\ref{sec:background_risk_review}} (joint in $(h,\theta)$), implemented by 15,000 iterations of full-batch gradient descent, using a step size of $0.01$, and three different choices of base loss function (logistic, hinge, unhinged). We run this procedure on the same data for a wide range of risk parameters $\alpha$ (T-risk), $\beta$ (CVaR), $\gamma$ (tilted), and $a$ ($\chi^{2}$-DRO), and choose a representative setting for each risk class as the one achieving the best final training classification error.

\paragraph{Representative results}
The full-batch (training) classification error and norm trajectories for these representatives initialized at each of the two candidates just mentioned (``mostly correct'' and ``mostly incorrect'') and trained using the unhinged loss as a base loss are shown in Figure \ref{fig:unbounded_below_unhinged}. Regardless of the direction of the initial loss distribution tails, we see that using the Barron-type T-risk has the flexibility to achieve both a stable and superior long-term error rate, while at the same time penalizing exceedingly overconfident correct examples (via bi-directionality), and thereby keeping the norm of the linear classifier small. An almost identical trend is also observed under the (binary) logistic loss, whereas classification error rates under the hinge loss tend to be less stable (see Figures \ref{fig:unbounded_below_logistic}--\ref{fig:unbounded_below_hinge} in \S{\ref{sec:supplementary_info_figures}}).

\begin{figure}[t]
\centering
\includegraphics[width=0.225\textwidth]{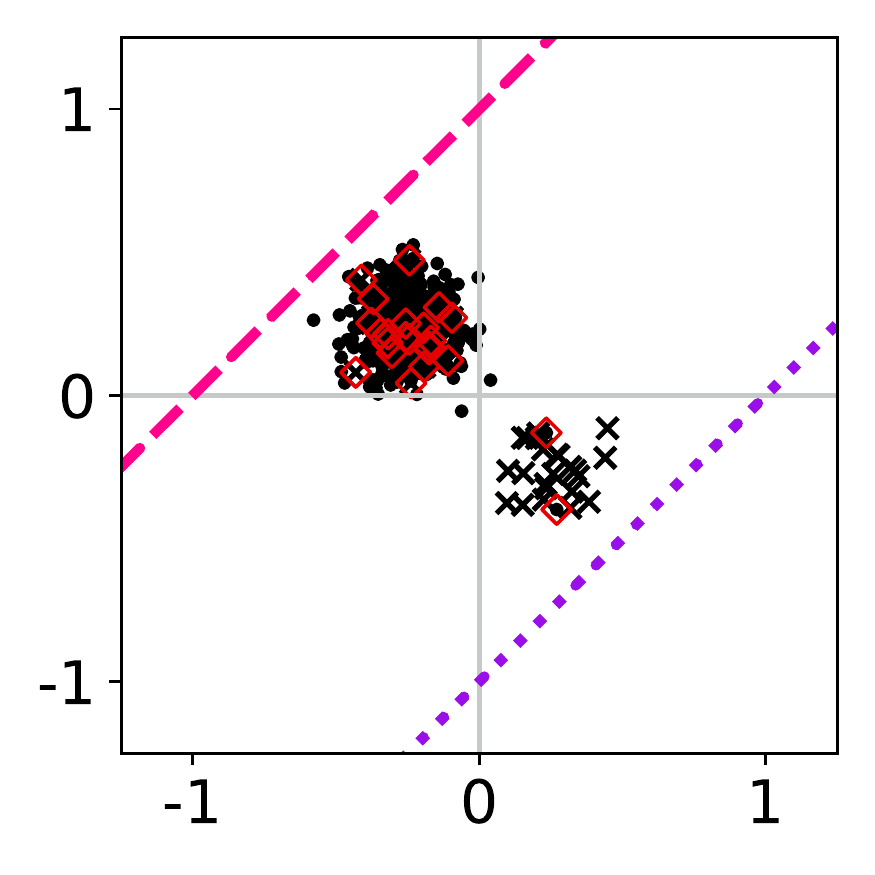}\includegraphics[width=0.775\textwidth]{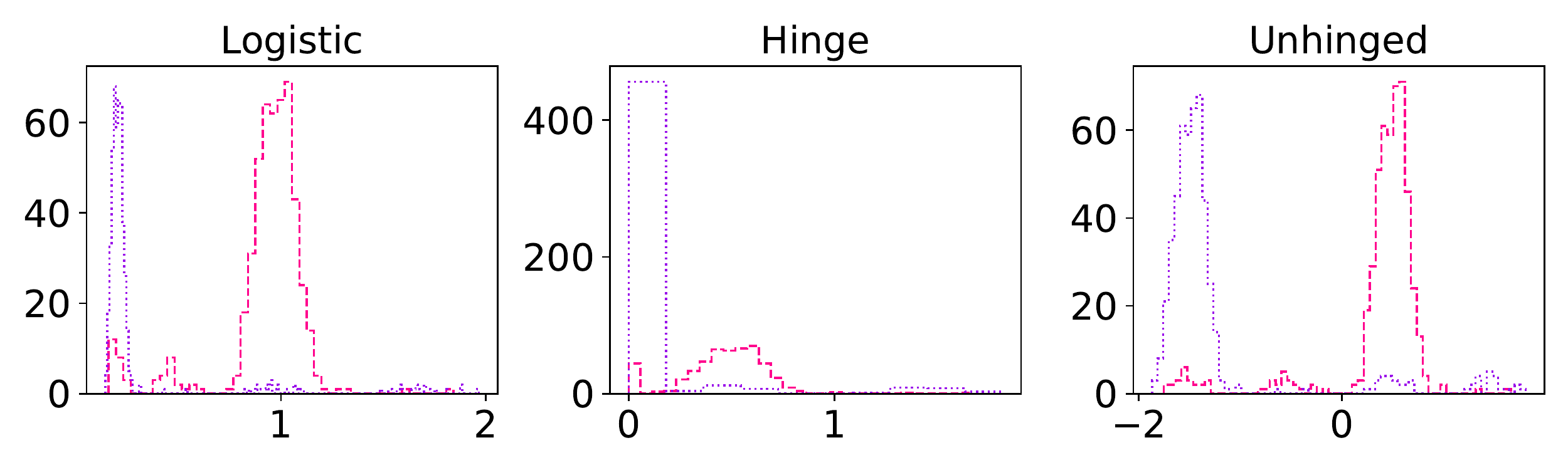}
\caption{Left-most plot: simulated binary classification data, with unbalanced classes and randomly flipped labels (indicated by red diamonds). Remaining plots: loss distributions for three types of loss functions, evaluated at two different candidates, corresponding to the dashed pink and dotted purple lines in the left-most plot.}
\label{fig:unbounded_below_data}
\end{figure}
\begin{figure}[t]
\centering
\includegraphics[width=0.5\textwidth]{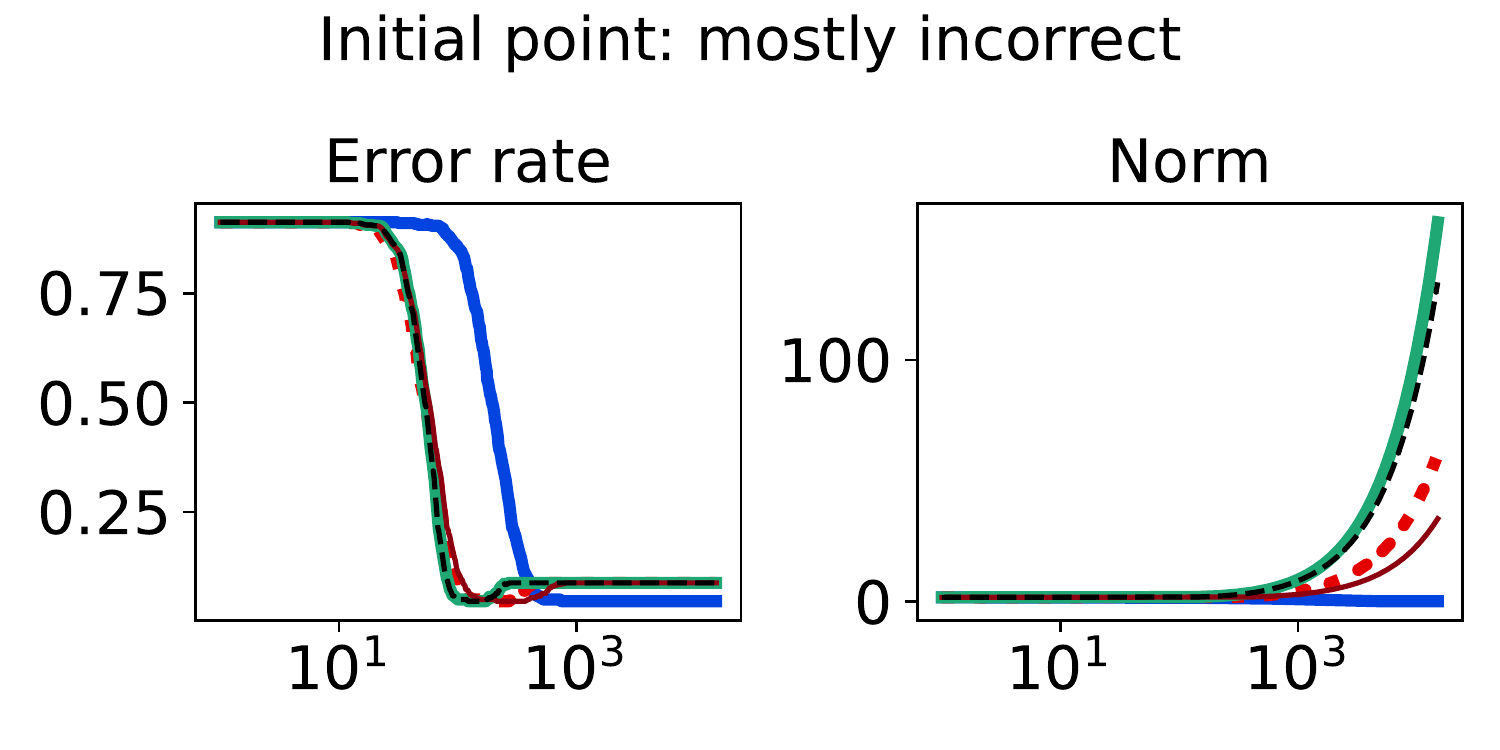}\includegraphics[width=0.5\textwidth]{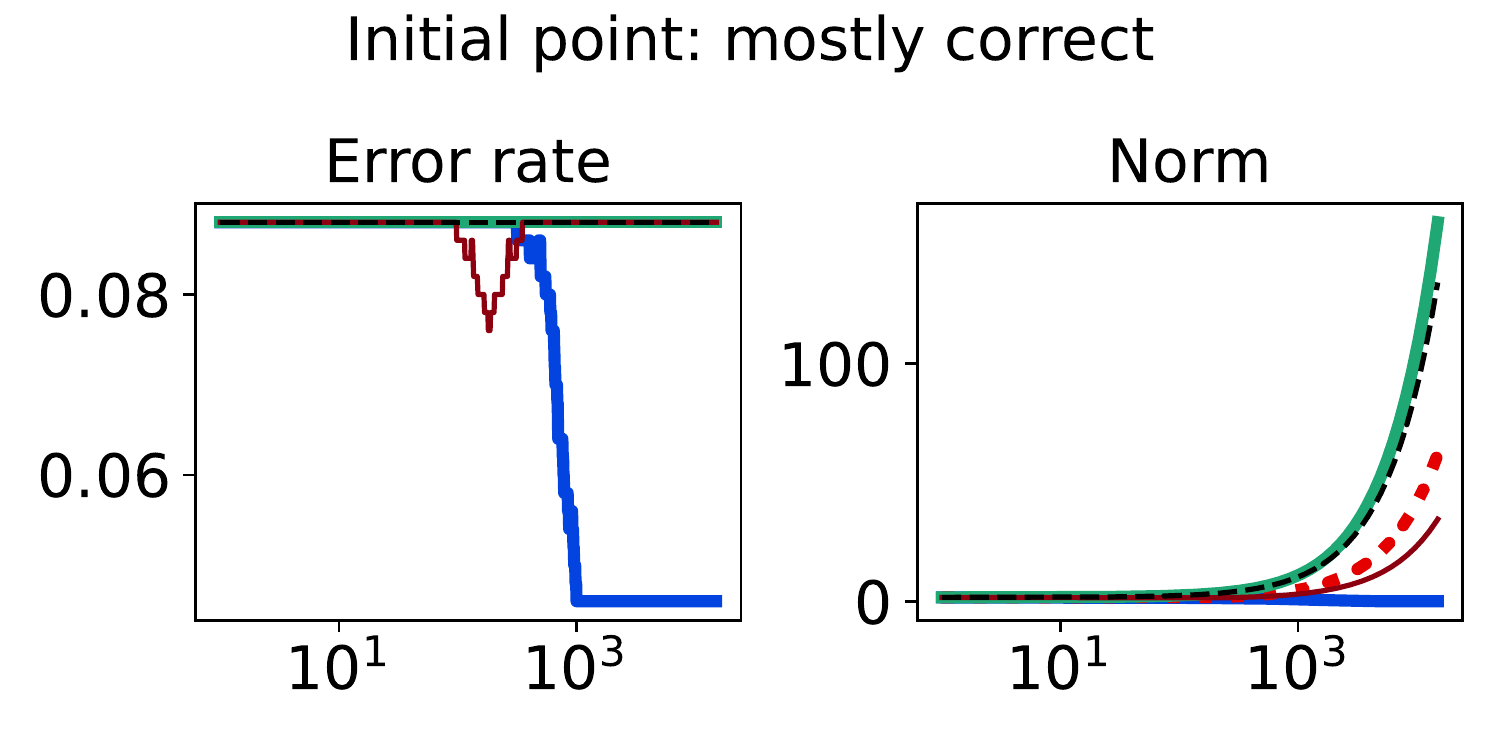}\\
\includegraphics[width=0.75\textwidth]{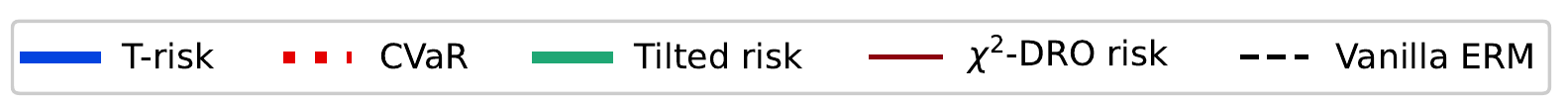}\hspace{0.1cm}
\caption{Classification error and norm trajectories (over iteration number, $\log_{10}$ scale) for gradient descent implementations of each risk class, using unhinged base loss and data given in Figure \ref{fig:unbounded_below_data}.}
\label{fig:unbounded_below_unhinged}
\end{figure}

\subsection{Control of outlier sensitivity}\label{sec:applications_1dlinreg}

\paragraph{Experiment setup}
For a crystal-clear example of real data that induces losses with heavy tails, we consider the famous Belgian phone call dataset \citep{rousseeuw1987RobReg,venables2002a} with input features normalized to the unit interval, and raw outputs used as-is. We conduct two regression tasks: the first uses the original data just stated, and the second has us modify such that it is an outlier on both the horizontal and vertical axes (we just multiply the original data point by 5); such points are said to have ``high leverage'' \citep[Ch.~7]{huber2009a}. For these two tasks, once again we run empirical risk minimization under each of the risk classes of interest, implemented using 15,000 iterations of full-batch gradient descent, with fixed step size 0.005. To illustrate the flexibility of the T-risk, here instead of minimizing (\ref{eqn:trisk_general}) jointly in $(h,\theta)$, we fix $\theta$ at the start of the learning process along with $\sigma$ and $\eta$, iteratively optimizing $h$ only (i.e., $\theta$ is not updated at any point). For simplicity, we set $\theta$ and $\sigma$ both to be the median of the losses incurred at initialization. All other risk classes are precisely as in the previous experiment.

\paragraph{Representative results}
The final regression lines obtained under each risk setting by running the learning procedure just described are plotted along with the data in Figure \ref{fig:tail_control}. Colors correspond to the individual risk function choices within each risk family, as denoted by color bars under each plot. The gray regression line denotes the common initial value used by all methods. Algorithm outputs using CVaR and $\chi^{2}$-DRO are always at least as sensitive to outliers as the ordinary least-squares (OLS) solution. While the tilted risk does let us interpolate between lower and upper quantiles, this transition is not smooth; even trying 20 values within the small window of $\gamma \in [-0.025,0.025]$, the algorithm outputs essentially jump between two extremes. This difference is particularly lucid in the bottom plots of Figure \ref{fig:tail_control}, where we have a high-leverage point. In contrast, with all other parameters of T-risk fixed, note that just tweaking $\alpha$ gives us a remarkable degree of flexibility to control the final output; while the base loss is fixed to the squared error, the regression lines range from those that ignore outliers to those that are very sensitive to outliers. In the high-leverage case, it is well-known that this cannot be achieved by simply changing to a different convex base loss (e.g., MAE instead of OLS), giving a concise illustration of the flexibility inherent in the T-risk class. Algorithm behavior can be sensitive to initial value; see \S{\ref{sec:supplementary_info_applications}} and Figure \ref{fig:tail_control_all_updown} in the appendix for an example where the naive median-based procedure described here causes gradient-based algorithms to stall out. 

\begin{figure}[t]
\centering
\includegraphics[width=1.0\textwidth]{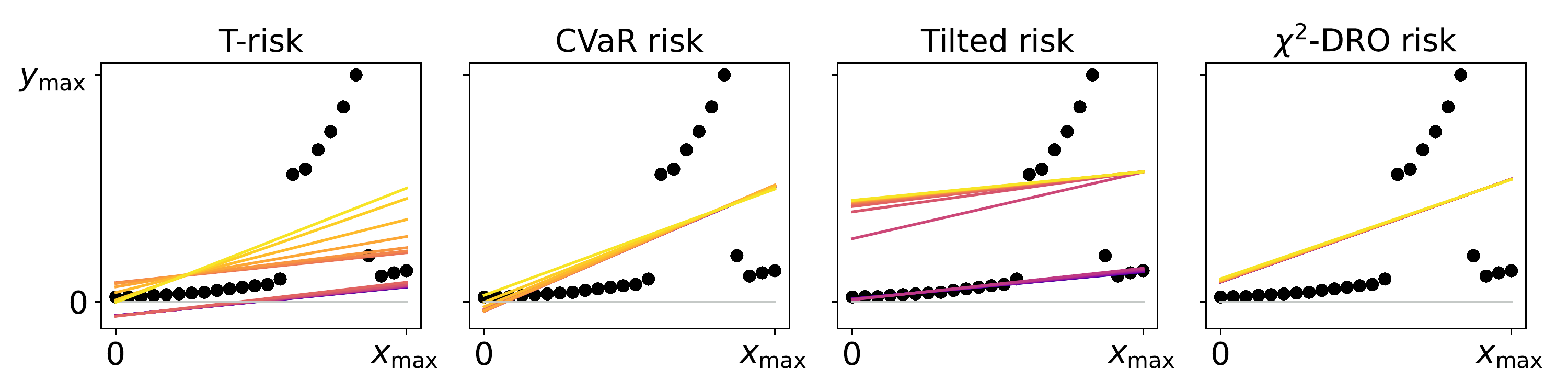}\\
\includegraphics[width=1.0\textwidth]{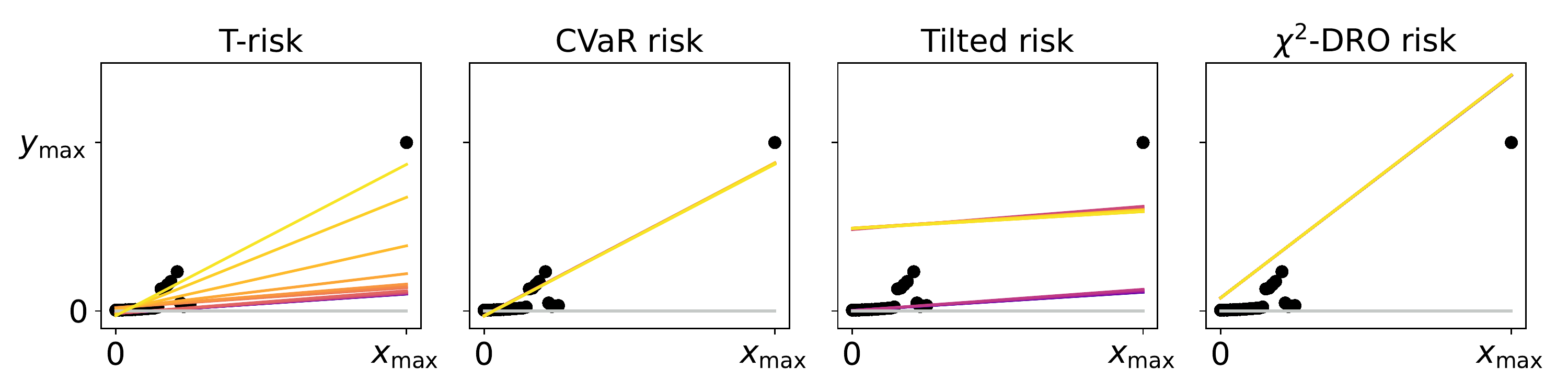}\\
\hfill\includegraphics[width=0.24\textwidth]{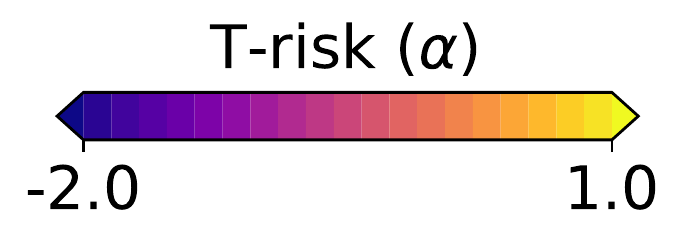}\includegraphics[width=0.24\textwidth]{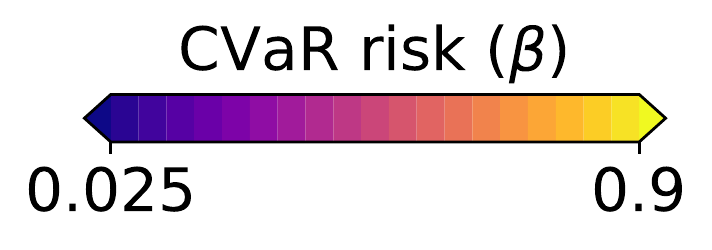}\includegraphics[width=0.24\textwidth]{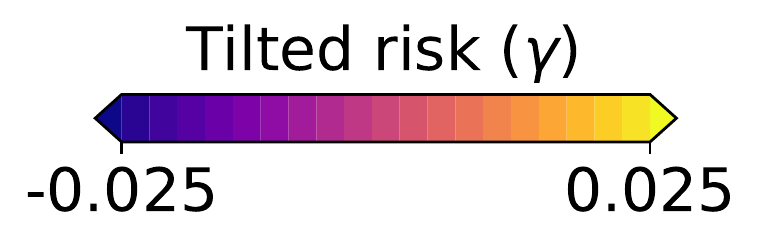}\includegraphics[width=0.24\textwidth]{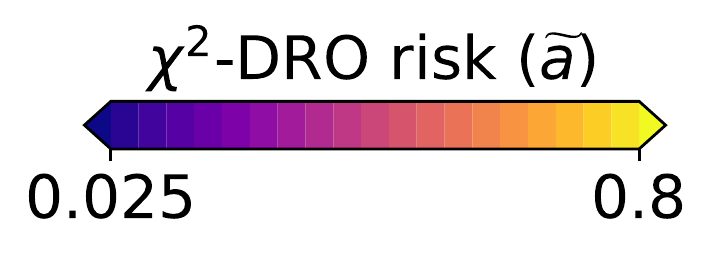}
\caption{Using T-risks with thresholds set to median initialization error, we can achieve a smooth transition between outlier-robust and outlier-sensitive solutions, even under high-leverage points. Top: original ``phones'' dataset. Bottom: modified data including a single high-leverage point.}
\label{fig:tail_control}
\end{figure}

\subsection{Distribution control under SGD on benchmark datasets}\label{sec:applications_real}

\paragraph{Experiment setup}
Finally we consider tests using datasets and models which are orders of magnitude larger than the previous two experimental setups. We use several well-known benchmark datasets for multi-class classification; details are given in \S{\ref{sec:supplementary_info_applications}}. All features are normalized to the unit interval (categorical variables are one-hot), and scores for each class are computed by a linear combination of features. As a base loss, we use the multi-class logistic regression loss. Here we investigate how a stochastic gradient descent implementation (with averaging) of the empirical T-risk minimization (joint in $(h,\theta)$) behaves as we control the shape parameter $\alpha$ of the Barron class, with $\sigma = 0.99$ and $\eta=1$ fixed throughout. We record the base loss (logistic loss) and zero-one loss (misclassification error) at the start and end of each epoch. We also record the full base loss distribution after the last epoch concludes. We run 5 independent trials, in which the data is shuffled and initial parameters are determined randomly. In each trial, for all datasets and methods, we use a mini-batch size of 32, and we run 30 epochs. As reference algorithms, we consider ``vanilla'' ERM, using the traditional risk $\exx_{\ddist}\loss(h)$, and mean-variance implemented as a special case of T-risk (with $\rho(\cdot)=(\cdot)^{2}/2$ and $\eta=1$). 80\% of the data is used for training, 10\% for validation, and 10\% for testing. For each risk class, we try five different step sizes. Validation data is used to evaluate different step sizes and choose the best one for each risk setting. All the results we present here are based on loss values computed on the test set: solid lines represent averages taken over trials, and shaded areas denote $\pm$ standard deviation over trials.

\paragraph{Representative results}
In Figures \ref{fig:real_emnist_balanced}--\ref{fig:real_covtype}, we give results based on the following two datasets: ``extended MNIST'' (47 classes, balanced) \citep{cohen2017a}, and ``cover type'' (7 classes, imbalanced) \citep{blackard1999a}. We plot the average and standard deviation of the base and zero-one losses as a function of epoch number, plus give a histogram of test (base) losses for a single trial, compared with the loss distribution incurred by a random initialization of the same model (left-most plot; gray is test, black is training). Colors are analogous to previous experiments, here evenly spaced over the allowable range ($1 \leq \alpha \leq 2$). It is clear how modifying the Barron dispersion function shape across this range lets us flexibly interpolate between the test (base) loss distribution achieved by a traditional ERM solution and a mean-variance solution, in terms of both the mean and standard deviation. This monotonicity (as a function of $\alpha$) is salient in the base loss, but this does not always appear for the zero-one loss. Since these datasets have normalized features with negligible label noise, egregious outliers are rare, and thus the trends observed for the mean and standard deviation here also hold for outlier-resistant location-dispersion pairs such as the median and the median-absolute-deviations about the median. Similar results for several other datasets are provided in \S{\ref{sec:supplementary_info_figures}}, and we remark that the key trends hold across all datasets tested. We have seen in our previous experiments how under heavy-tailed losses/gradients the T-risk solution can differ greatly from that of the vanilla ERM solution, so it is interesting to observe how under large, normalized, clean classification datasets, the T-risk allows us to very smoothly control a tradeoff between average test loss and variance on the test set.

\begin{figure}[t]
\centering
\includegraphics[width=0.5\textwidth]{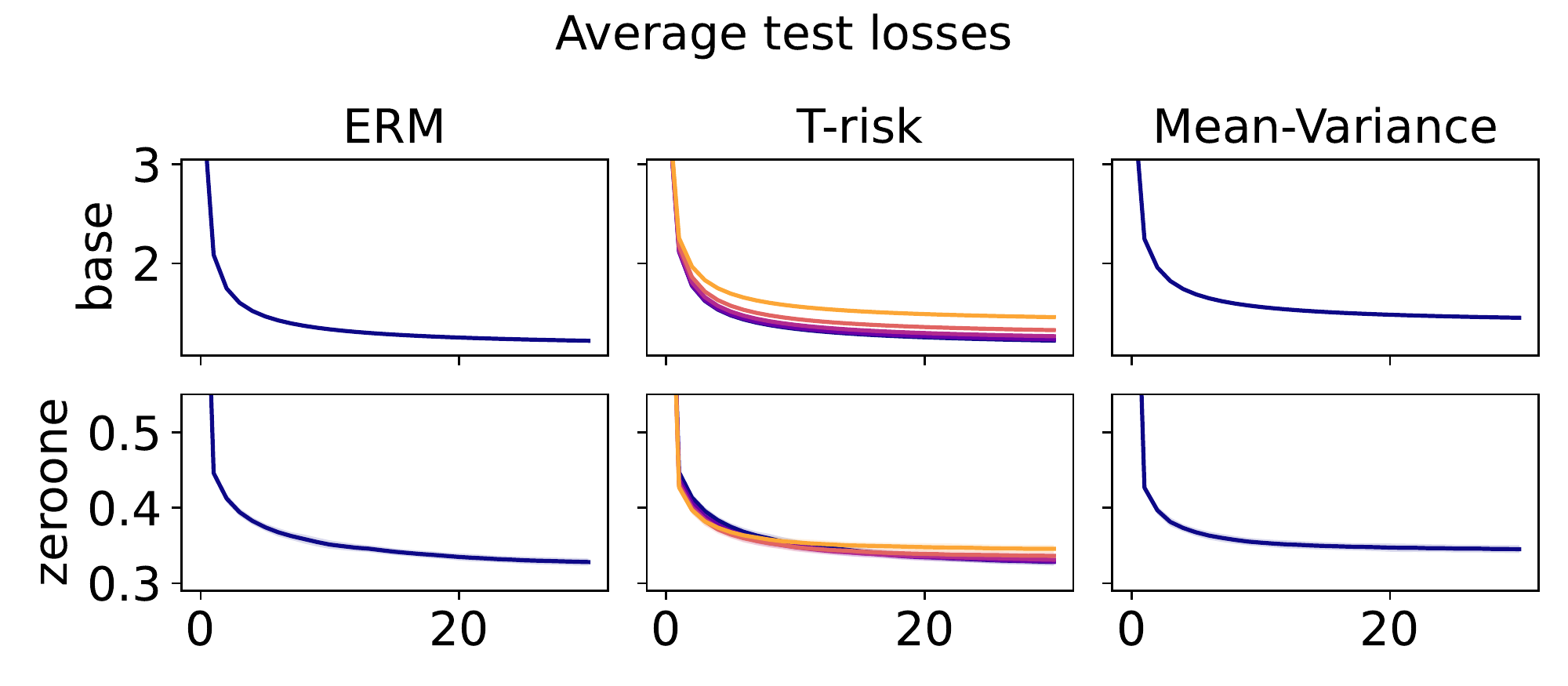}\includegraphics[width=0.5\textwidth]{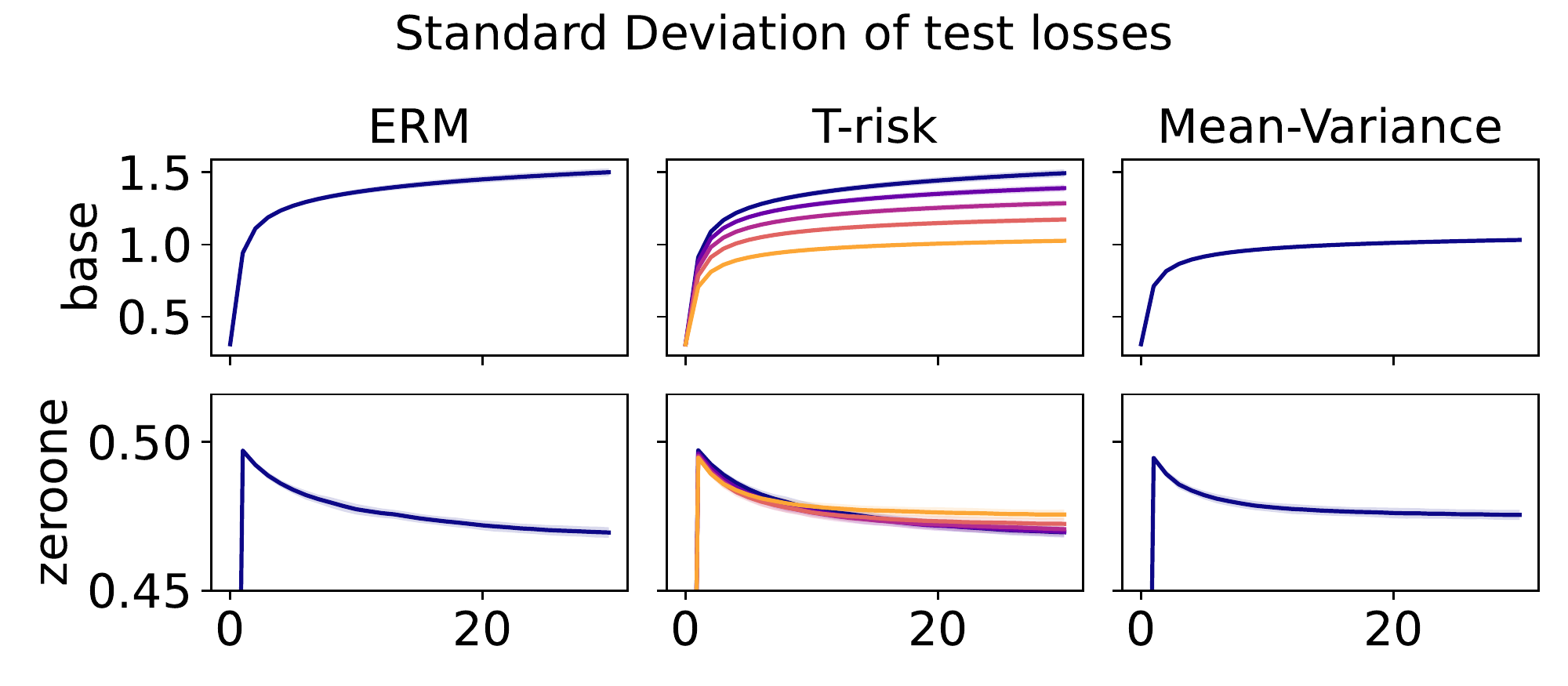}\\
\includegraphics[width=1.0\textwidth]{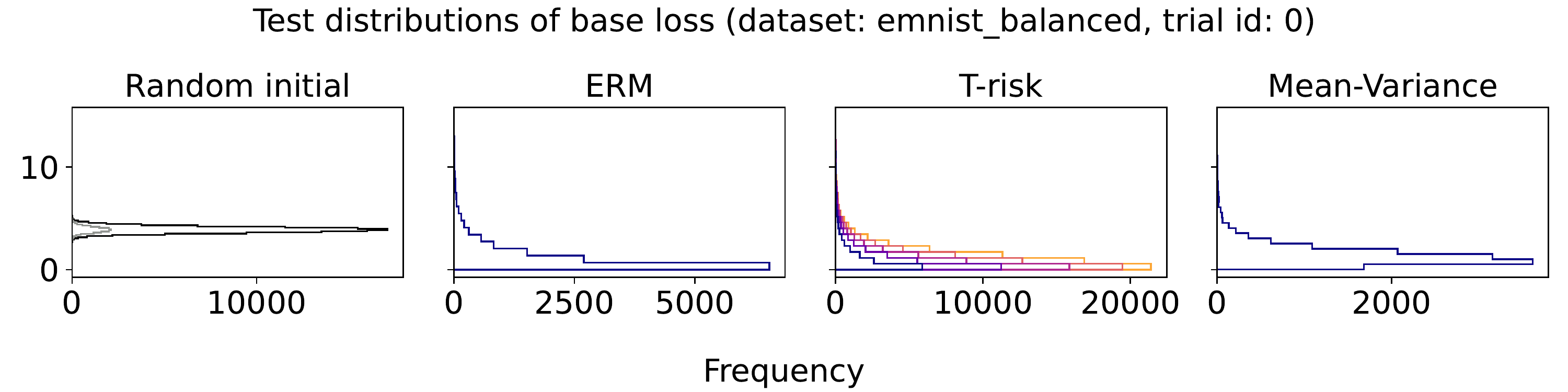}
\caption{Using T-risk to interpolate between test loss distributions (dataset: \texttt{emnist\_balanced}).}
\label{fig:real_emnist_balanced}
\end{figure}

\begin{figure}[t]
\centering
\includegraphics[width=0.5\textwidth]{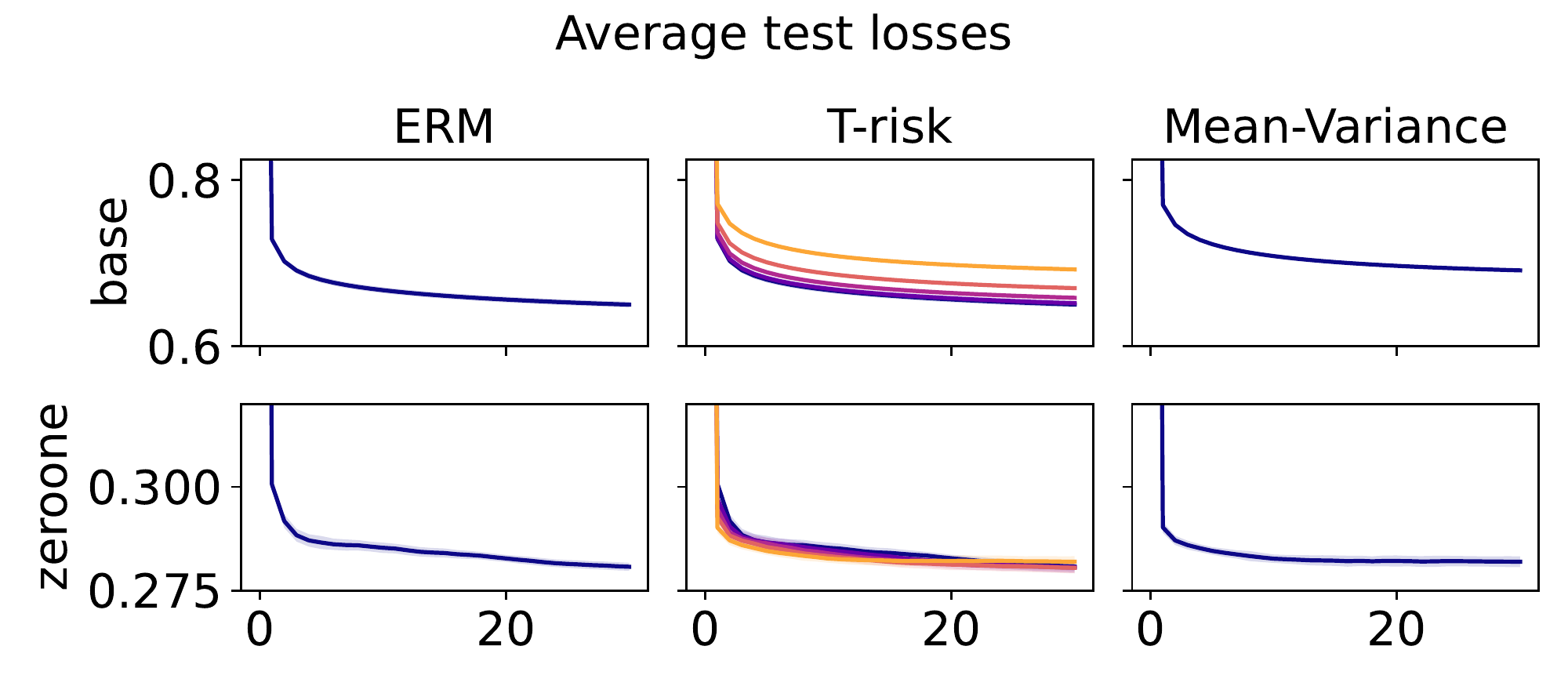}\includegraphics[width=0.5\textwidth]{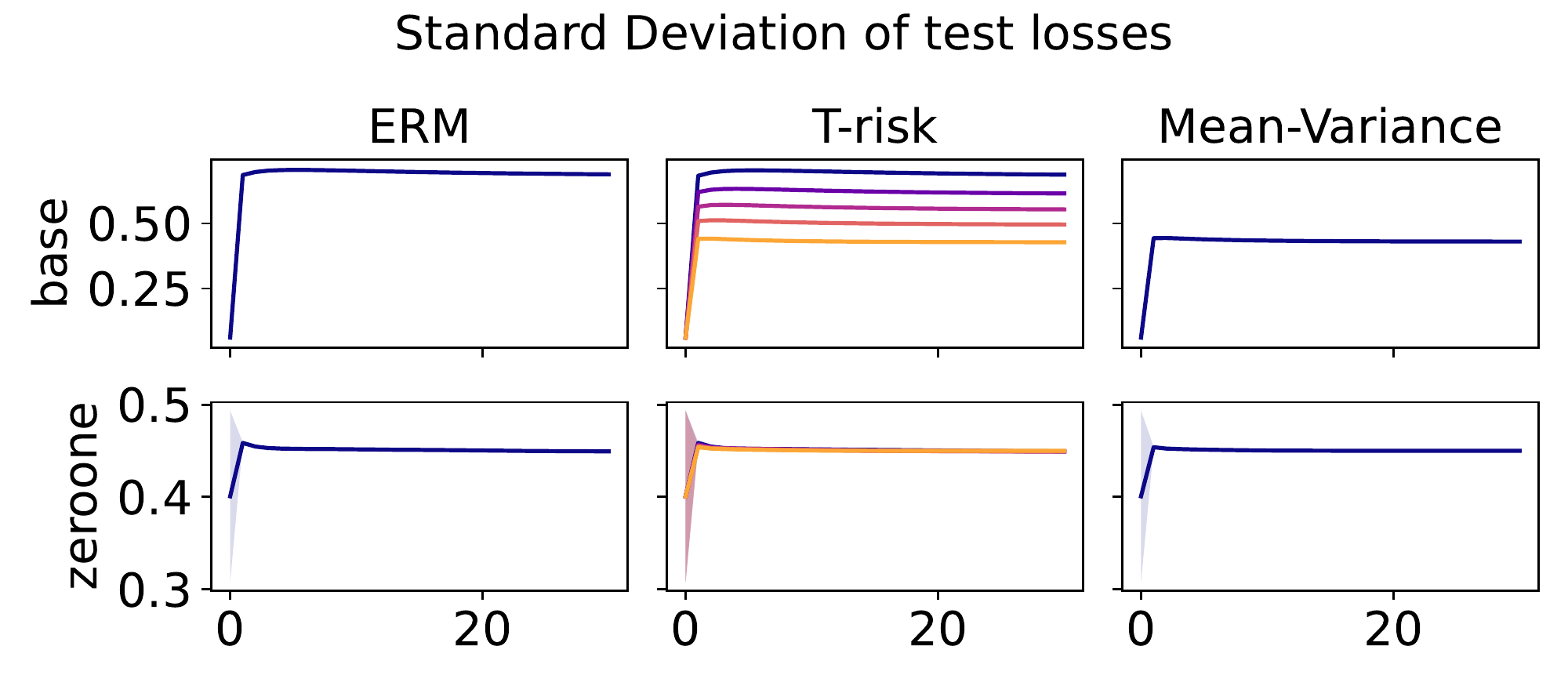}\\
\includegraphics[width=1.0\textwidth]{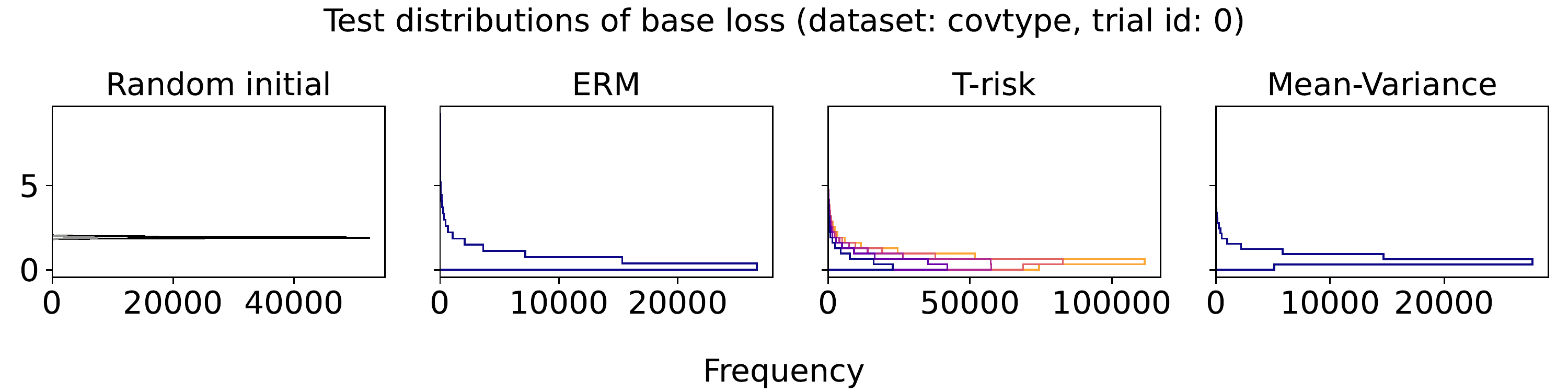}
\caption{Analogous to Figure \ref{fig:real_emnist_balanced} (dataset: \texttt{covtype}).}
\label{fig:real_covtype}
\end{figure}

\section*{Acknowledgments}
This work was supported by JST ACT-X Grant Number JPMJAX200O and JST PRESTO Grant Number JPMJPR21C6.

\bibliographystyle{./refs/apalike}
\bibliography{./refs/refs}

\clearpage

\appendix

\section{Appendix summary}\label{sec:appendix_summary}

For ease of reference, we start the appendix with a concise tables of contents.
\begin{itemize}
\item \S{\ref{sec:supplementary_info}}: Supplementary information
\begin{itemize}
\item \S{\ref{sec:proofs_tilted}}: Details for tilted risk
\item \S{\ref{sec:proofs_barronlimits}}: Details for Barron class limits
\item \S{\ref{sec:supplement_barron}}: Additional lemmas for Barron class and T-risk
\item \S{\ref{sec:learning_smoothness}}: Smoothness of the T-risk
\item \S{\ref{sec:supplementary_info_applications}}: Experimental details
\item \S{\ref{sec:supplementary_info_figures}}: Additional figures
\end{itemize}
\item \S{\ref{sec:proofs}}: Detailed proofs
\begin{itemize}
\item \S{\ref{sec:proofs_main}}: Proofs of results in the main text
\item \S{\ref{sec:proofs_smoothness_limited}}: Smoothness computations (proof of Lemma \ref{lem:smoothness_limited})
\end{itemize}
\item \S{\ref{sec:additional_facts}}: Additional technical facts
\begin{itemize}
\item \S{\ref{sec:additional_facts_lipschitz}}: Lipschitz properties
\item \S{\ref{sec:additional_facts_convex}}: Convexity
\item \S{\ref{sec:proofs_barron_derivs}}: Derivatives for the Barron class
\item \S{\ref{sec:additional_facts_ineqs}}: Elementary inequalities
\item \S{\ref{sec:additional_facts_coercive}}: Expected dispersion is coercive
\end{itemize}
\end{itemize}

\section{Supplementary information}\label{sec:supplementary_info}

Here we provide additional details that did not fit into the main body of the paper.

\subsection{Details for tilted risk}\label{sec:proofs_tilted}

Let $\rdv{X} \sim \ddist$ be an arbitrary random variable. Assuming the distribution is such that we can differentiate through the integral, we have
\begin{align}\label{eqn:proofs_tilted_1}
\frac{\dif}{\dif\theta}\left[ \theta + \frac{1}{\gamma}\left(\exx_{\ddist}\mathrm{e}^{\gamma(\rdv{X}-\theta)}-1\right) \right] = 0 \iff \exx_{\ddist}\mathrm{e}^{\gamma(\rdv{X}-\theta)} = 1.
\end{align}
Let $\theta^{\ast}$ be any value that satisfies the first-order optimality condition in (\ref{eqn:proofs_tilted_1}). It follows that
\begin{align*}
\theta^{\ast} + \frac{1}{\gamma}\left(\exx_{\ddist}\mathrm{e}^{\gamma(\rdv{X}-\theta^{\ast})}-1\right) = \theta^{\ast}.
\end{align*}
It is easy to confirm that setting $\theta^{\ast} = (1/\gamma)\log(\exx_{\ddist}\mathrm{e}^{\gamma \rdv{X}})$ gives us a valid solution. For more background, see the recent works of \citet{li2021a,li2021b} and the references therein. Note also that this log-exponential criterion appears (with $\gamma=1$) in \citet[Ex.~8]{rockafellar2013a}.

\subsection{Details for Barron class limits}\label{sec:proofs_barronlimits}

For the limit as $\alpha \to 0$, use the fact that for any $a > 0$, we have
\begin{align}
\lim\limits_{x \to 0_{+}} \frac{(a^{x}-1)}{x} = \log(a).
\end{align}
This equality is sometimes known as Halley's formula. For the limit as $\alpha \to -\infty$, first note that for any $\alpha < 0$ we can write $\abs{2-\alpha} = 2 + \abs{\alpha}$, and thus $\abs{\alpha}/2 = (\abs{2-\alpha}/2) - 1$. With this in mind, for any $\alpha < 0$ we can observe
\begin{align*}
\left(1 + \frac{x^{2}}{\abs{\alpha-2}}\right)^{\alpha/2} = \left(1 + \frac{x^{2}}{\abs{\alpha-2}}\right)^{-\abs{\alpha}/2} & = \left(1 + \frac{x^{2}}{\abs{\alpha-2}}\right)^{1-(\abs{2-\alpha}/2)}\\
& = \frac{\left(1 + \frac{x^{2}}{\abs{\alpha-2}}\right)}{\sqrt{\left(1 + \frac{x^{2}}{\abs{\alpha-2}}\right)^{\abs{\alpha-2}}}}\\
& \to \frac{1}{\sqrt{\exp(x^{2})}} = \exp(-x^{2}/2),
\end{align*}
where the limit is taken as $\alpha \to -\infty$, and follows from the classical limit characterization of the exponential function. For the limit as $\alpha \to 2_{-}$, first note that
\begin{align*}
\abs{\alpha-2}\left(1 + \frac{x^{2}}{\abs{\alpha-2}}\right)^{\alpha/2} = \left(\abs{\alpha-2}^{2/\alpha} + \abs{\alpha-2}^{(2/\alpha)-1}x^{2}\right)^{\alpha/2}
\end{align*}
and that as long as $\alpha < 2$, we can write
\begin{align*}
\abs{\alpha-2}^{(2/\alpha)-1} = (2-\alpha)^{(2-\alpha)/\alpha} = \left(u^{u}\right)^{1/(2-u)}
\end{align*}
where we have introduced $u \defeq 2-\alpha$. Taking $\alpha \to 2_{-}$ amounts to $u \to 0_{+}$, and thus using the fact that $u^{u} \to 1$ as $u \to 0_{+}$, the desired result follows from straightforward analysis.

\subsection{Additional lemmas for Barron class and T-risk}\label{sec:supplement_barron}

\begin{lem}[Dispersion function convexity and smoothness]\label{lem:basic_barron}
Consider $\rho_{\sigma}(x) \defeq \rho(x/\sigma;\alpha)$ with $\rho(\cdot;\alpha)$ from the Barron class (\ref{eqn:barron}), with parameter $-\infty \leq \alpha \leq 2$. The following properties hold for any choice of $\sigma > 0$.
\begin{itemize}
\item \underline{Case of $\alpha = 2$:}\\$\rho_{\sigma}$ is convex and $1/\sigma^{2}$-smooth on $\RR$.
\item \underline{Case of $\alpha = 0$:}\\$\rho_{\sigma}$ is convex on $[-\sqrt{2}\sigma,\sqrt{2}\sigma]$, and is $1/(\sqrt{2}\sigma)$-Lipschitz and $1/\sigma^{2}$-smooth on $\RR$.
\item \underline{Case of $\alpha = -\infty$:}\\$\rho_{\sigma}$ is convex on $[-\sigma,\sigma]$, and is $(1/\sigma)\exp(-1/2)$-Lipschitz and $1/\sigma^{2}$-smooth on $\RR$.
\item \underline{Otherwise:}\\
$\rho_{\sigma}$ is $1/\sigma^{2}$-smooth on $\RR$. When $\alpha \geq 1$, $\rho_{\sigma}$ is convex on $\RR$. When $\alpha = 1$, $\rho_{\sigma}$ is also $1/\sigma$-Lipschitz on $\RR$. Else, when $\alpha < 1$, we have that $\rho_{\sigma}$ is convex between $\pm \sigma\sqrt{\abs{\alpha-2}/(1-\alpha)}$, and is $(1/\sigma)(\sqrt{(1-\alpha)/\abs{\alpha-2}})^{1-\alpha}$-Lipschitz on $\RR$.
\end{itemize}
Furthermore, all these coefficients are tight (see also Figure \ref{fig:barron_bounds}).
\end{lem}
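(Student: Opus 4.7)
\textbf{Proof plan for Lemma \ref{lem:basic_barron}.}

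The plan is to first reduce to the unscaled setting ($\sigma = 1$) and then treat each of the four cases in the piecewise definition of $\rho(\cdot;\alpha)$ separately. The reduction is immediate because $\rho_{\sigma}(x) = \rho(x/\sigma)$ implies $\dv{\rho}_{\sigma}(x) = \dv{\rho}(x/\sigma)/\sigma$ and $\ddv{\rho}_{\sigma}(x) = \ddv{\rho}(x/\sigma)/\sigma^{2}$, so any uniform bound of the form $|\dv{\rho}| \leq L$ on a set $I$ transfers to $|\dv{\rho}_{\sigma}| \leq L/\sigma$ on $\sigma I$, and similarly $|\ddv{\rho}| \leq L$ becomes $|\ddv{\rho}_{\sigma}| \leq L/\sigma^{2}$. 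Convexity regions also scale linearly in $\sigma$ because the sign of $\ddv{\rho}_{\sigma}(x)$ matches that of $\ddv{\rho}(x/\sigma)$. All the first- and second-derivative formulas needed below are recorded in \S{\ref{sec:proofs_barron_derivs}}.

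For the three special cases, I would just plug in. When $\alpha = 2$, $\rho = x^{2}/2$ with $\ddv{\rho} \equiv 1$, giving convexity on $\RR$ and smoothness constant $1$. When $\alpha = 0$, $\dv{\rho}(x) = x/(1+x^{2}/2)$ and $\ddv{\rho}(x) = (1-x^{2}/2)/(1+x^{2}/2)^{2}$; the sign of $\ddv{\rho}$ gives convexity on $[-\sqrt{2},\sqrt{2}]$, while maximizing $|\dv{\rho}|$ via the first-order condition yields the critical point $x^{2}=2$ with value $1/\sqrt{2}$, and $\ddv{\rho}$ is maximized at $x=0$ with value $1$. When $\alpha = -\infty$, $\dv{\rho}(x) = xe^{-x^{2}/2}$ and $\ddv{\rho}(x) = (1-x^{2})e^{-x^{2}/2}$; convexity on $[-1,1]$ is immediate, the Lipschitz constant $e^{-1/2}$ comes from the critical point $x^{2}=1$, and the smoothness constant $1$ follows by comparing the value at $x=0$ to the secondary extremum at $x^{2}=3$ (where $|\ddv{\rho}| = 2e^{-3/2} < 1$).

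The general case is where the real work lies. With $b \defeq \abs{\alpha-2}$, one computes
\begin{align*}
\dv{\rho}(x) = x(1+x^{2}/b)^{\alpha/2 - 1}, \qquad \ddv{\rho}(x) = (1+x^{2}/b)^{\alpha/2 - 2}\bigl[1 + (\alpha-1)x^{2}/b\bigr].
\end{align*}
Since $(1+x^{2}/b)^{\alpha/2-2} > 0$, convexity reduces to the sign of $1 + (\alpha-1)x^{2}/b$: always non-negative when $\alpha \geq 1$, and non-negative exactly on $[-\sqrt{b/(1-\alpha)},\sqrt{b/(1-\alpha)}]$ when $\alpha < 1$. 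For the Lipschitz bound (needed only when $|\dv{\rho}|$ is bounded, i.e., $\alpha \leq 1$), differentiating $x^{2}(1+x^{2}/b)^{\alpha-2}$ in $x$ gives a factor of $1 + (\alpha-1)x^{2}/b$, so the unique positive critical point is $x^{2} = b/(1-\alpha)$ for $\alpha < 1$, yielding the stated value $(\sqrt{(1-\alpha)/b})^{1-\alpha}$ after simplification; the $\alpha = 1$ case reduces to $\dv{\rho}(x) = x/\sqrt{1+x^{2}}$, whose supremum is $1$.

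The main obstacle, and what I would spend the most care on, is proving the global bound $|\ddv{\rho}| \leq 1$ in the general case. Setting $u \defeq x^{2}/b \geq 0$, I would differentiate $g(u) \defeq (1+u)^{\alpha/2-2}[1+(\alpha-1)u]$ and show
\begin{align*}
\dv{g}(u) = (1+u)^{\alpha/2-3}\,(\alpha/2-1)\bigl[3 + (\alpha-1)u\bigr].
\end{align*}
For $1 \leq \alpha < 2$, both bracketed factors keep a fixed sign so $g$ is monotonically decreasing from $g(0)=1$ to $0$, giving $|\ddv{\rho}| \leq 1$. For $\alpha < 1$, $g$ has a single interior minimum at $u^{\ast} = 3/(1-\alpha)$, at which $g(u^{\ast}) = -2((1-\alpha)/(4-\alpha))^{2-\alpha/2}$; since $(1-\alpha)/(4-\alpha) < 1$ and $2 - \alpha/2 > 2$, a short estimate (checking the limit $\alpha \to -\infty$ gives $2e^{-3/2} < 1$, and the factor is monotone in $\alpha$) confirms $|g(u^{\ast})| < 1$, so the global maximum of $|\ddv{\rho}|$ is again $g(0)=1$. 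The stated scaled bounds then follow by the reduction in the first paragraph. Tightness of every constant is inherited from the specific maximizers identified in the above arguments: $x = 0$ for smoothness, the edge of the convexity interval for convexity, and the critical points $x^{2} = b/(1-\alpha)$ (or limits thereof) for Lipschitz continuity.
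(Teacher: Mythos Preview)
Your proposal is correct. The special cases and the convexity/Lipschitz claims in the general case are handled essentially the same way as in the paper (direct first- and second-derivative computations, locating stationary points). Where you genuinely diverge is the global smoothness bound $|\ddv{\rho}|\leq 1$ in the general case.

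The paper rewrites $\ddv{\rho}_{\sigma}$ as $(1/\sigma^{2})A(x)(1-B(x))$ and then proves $A(x)(B(x)-1)\leq 1$ by a three-way case split in $\alpha$ (namely $0\leq\alpha<1$, $-2\leq\alpha<0$, and $-\infty<\alpha<-2$), each sub-case appealing to a different elementary inequality (a power-comparison inequality, Bernoulli, and a rational upper bound on $(1+x)^{c}$), with the last sub-case requiring explicit minimization of an auxiliary polynomial $f_{\alpha}$. Your route---substitute $u=x^{2}/b$, differentiate $g(u)=(1+u)^{\alpha/2-2}[1+(\alpha-1)u]$ once to find the single interior critical point $u^{\ast}=3/(1-\alpha)$, and evaluate $|g(u^{\ast})|=2\bigl((1-\alpha)/(4-\alpha)\bigr)^{2-\alpha/2}$---is considerably shorter and treats all $\alpha<1$ uniformly. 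With the substitution $s=4-\alpha>3$ this becomes $2(1-3/s)^{s/2}$, which is increasing in $s$ to its limit $2e^{-3/2}<1$, so the bound follows in one line; this is what your ``monotone in $\alpha$'' sketch amounts to. One small slip: your aside that ``$2-\alpha/2>2$'' fails for $0\leq\alpha<1$, but it is not load-bearing since the monotonicity-plus-limit argument already closes the estimate.
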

\begin{figure}[t]
\centering
\includegraphics[width=1.0\textwidth]{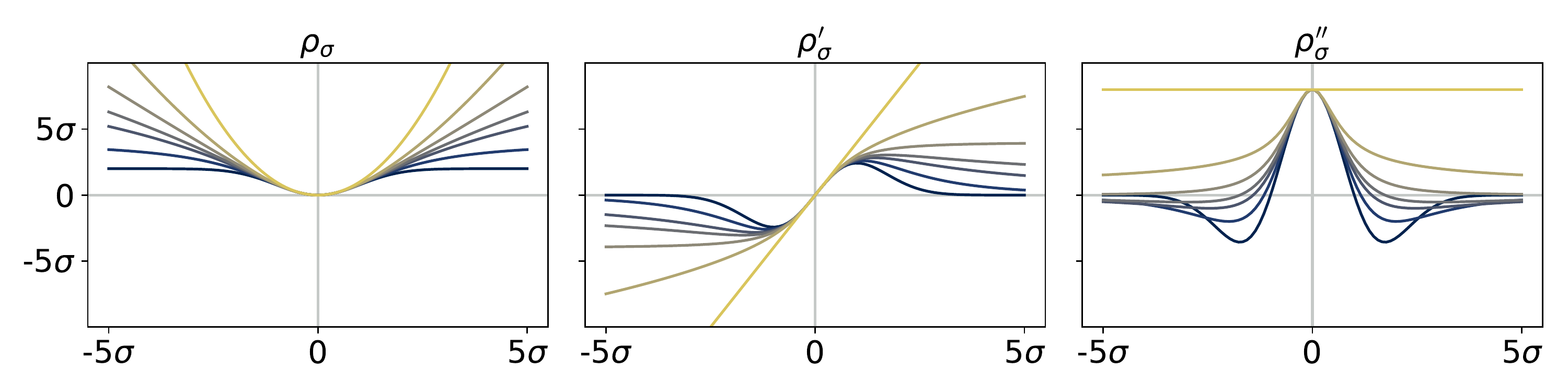}\\
\hfill\includegraphics[width=0.95\textwidth]{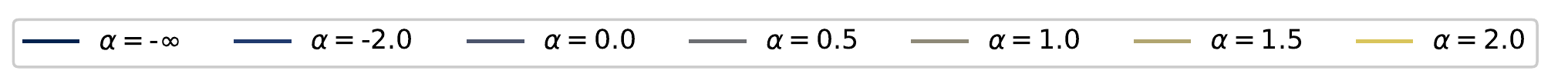}
\caption{Here we consider the function $x \mapsto \rho(x/\sigma;\alpha)$, with $\rho$ from the Barron class (\ref{eqn:barron}), for a variety of shape parameter values $\alpha$, with scale parameter $\sigma = 0.5$ fixed. We plot graphs for this function (left), its first derivative (center), and its second derivative (right), each computed over $\pm 5\sigma$.}
\label{fig:dispersion_barron}
\end{figure}
\begin{figure}[t]
\centering
\includegraphics[width=0.5\textwidth]{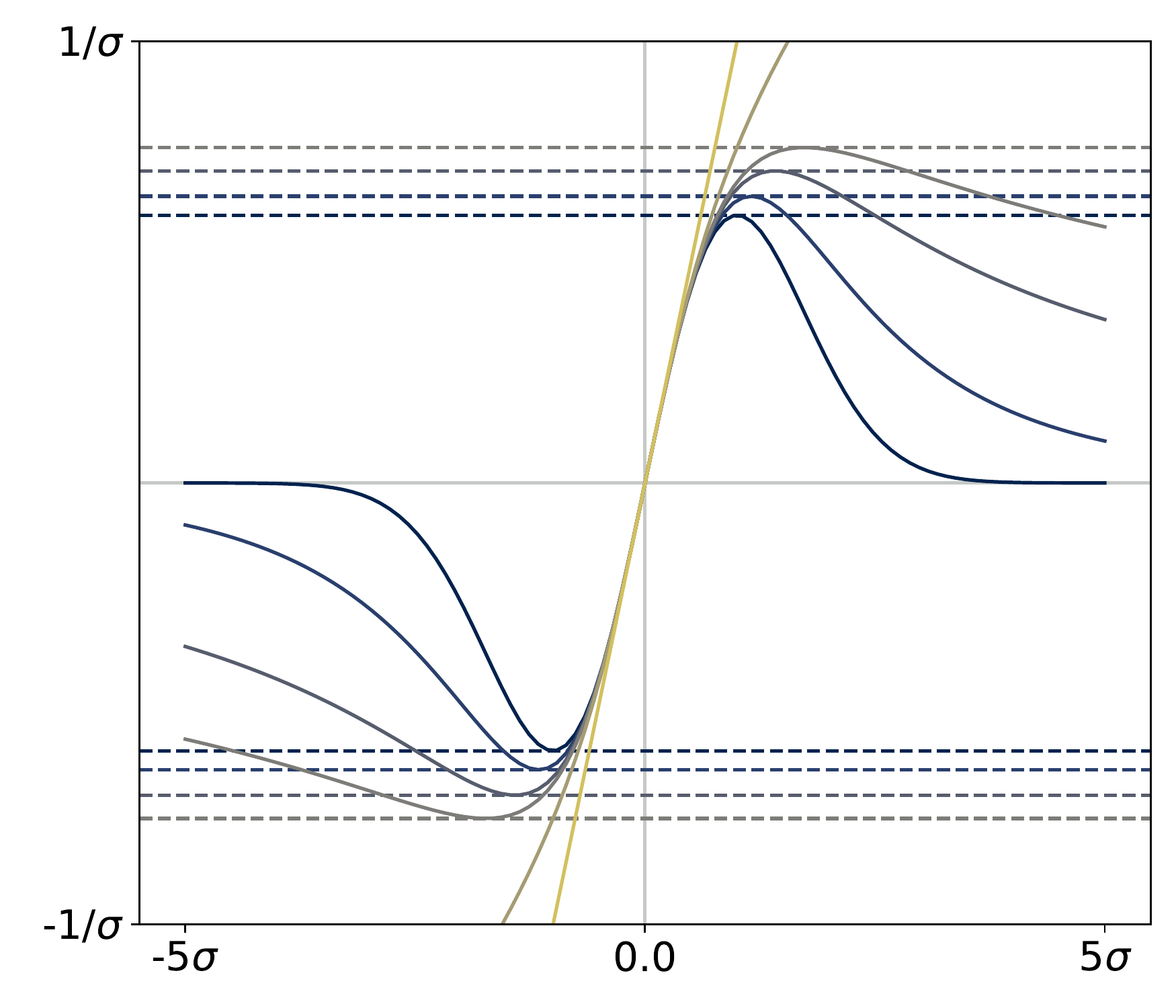}\includegraphics[width=0.5\textwidth]{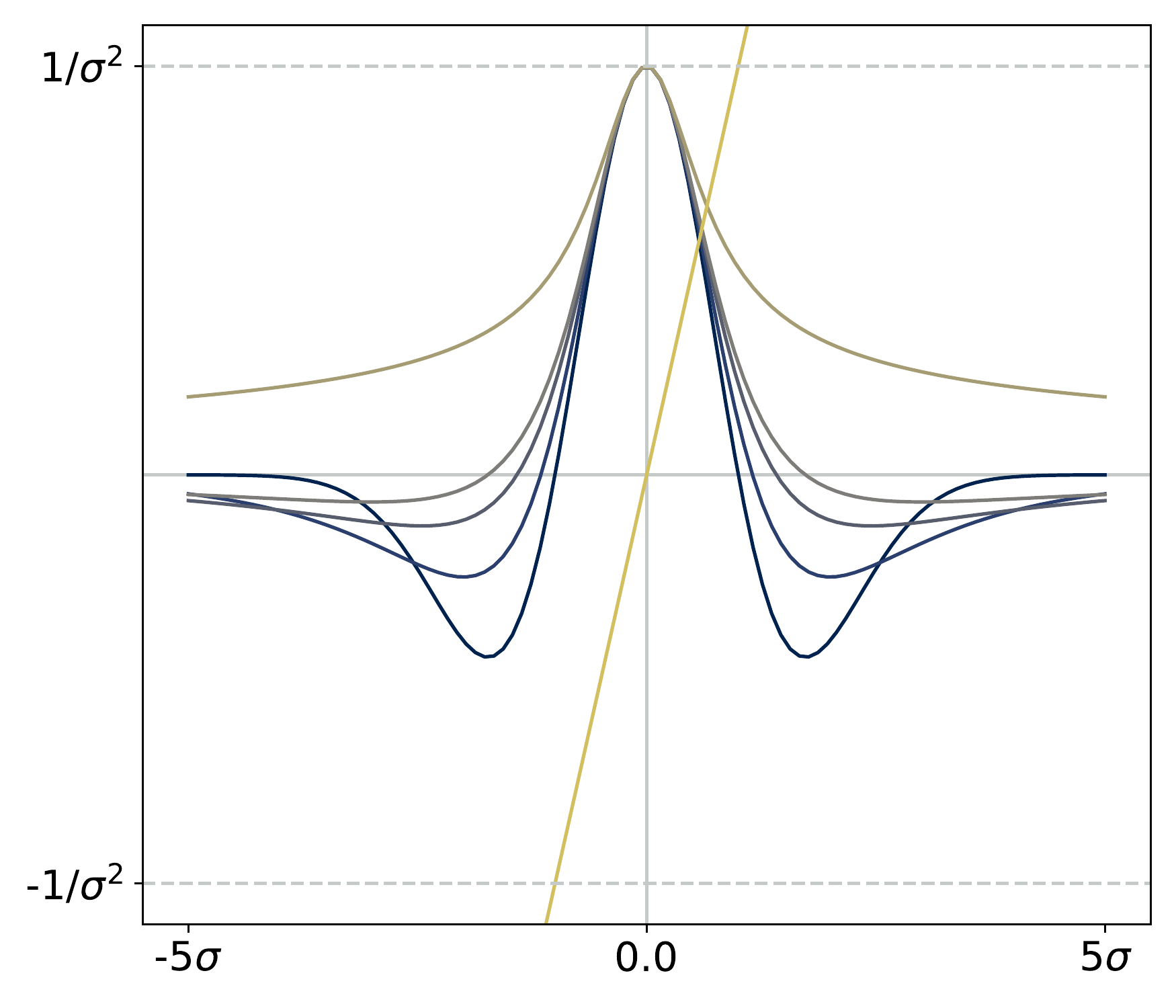}
\caption{Here we show zoomed-in versions of the two right-most plots in Figure \ref{fig:dispersion_barron}, now with dashed horizontal lines denoting the Lipschitz (left) and smoothness (right) coefficients stated in Lemma \ref{lem:basic_barron} (both positive and negative values). The Lipschitz coefficients depend on $\alpha$ and $\sigma$, and the colors in the left plot reflect the $\alpha$ value. The smoothness coefficient $1/\sigma^{2}$ is independent of $\alpha$, and is drawn in light gray. Numerically, we can see that the coefficients are tight, though tighter \emph{lower} bounds for the second derivatives are possible.}
\label{fig:barron_bounds}
\end{figure}
The dispersion $\dsp_{\rho}(h;\theta) \defeq \exx_{\ddist}\rho(\loss(h)-\theta)$ plays a prominent role in the risk definitions considered in this paper, and one is naturally interested in the properties of the map $\theta \mapsto \dsp_{\rho}(h;\theta)$. The following lemma shows that using $\rho = \rho_{\sigma}$ from the Barron class, we can differentiate under the integral without needing any additional conditions beyond those required for finiteness.
\begin{lem}\label{lem:dsp_diffable}
Let $\rho_{\sigma}$ be as in Lemma \ref{lem:basic_barron}. Assume that the random loss $\loss(h)$ is $\sigmafield$-measurable in general, and that $\exx_{\ddist}\lvert \loss(h) \rvert < \infty$ holds whenever $1 < \alpha \leq 2$. It follows that the first two derivatives are $\ddist$-integrable, namely that
\begin{align}
\lvert \exx_{\ddist}\dv{\rho}_{\sigma}(\loss(h)-\theta) \rvert < \infty, \qquad \lvert \exx_{\ddist}\ddv{\rho}_{\sigma}(\loss(h)-\theta) \rvert < \infty
\end{align}
for any $\theta \in \RR$. Furthermore, the function $\theta \mapsto \dsp_{\rho}(h;\theta)$ is twice-differentiable on $\RR$, and satisfies the Leibniz integration property for both derivatives, that is
\begin{align}
\left.\frac{\dif}{\dif\theta} \dsp_{\rho}(h;\theta)\right\vert_{\theta = u} = -\exx_{\ddist}\dv{\rho}_{\sigma}(\loss(h)-u), \qquad \left.\frac{\dif^{2}}{\dif\theta^{2}} \dsp_{\rho}(h;\theta)\right\vert_{\theta = u} = \exx_{\ddist}\ddv{\rho}_{\sigma}(\loss(h)-u)
\end{align}
for any $u \in \RR$.\footnote{Let us emphasize that $\dv{\rho}_{\sigma}$ and $\ddv{\rho}_{\sigma}$ denote the first and second derivatives of $x \mapsto \rho(x/\sigma;\alpha)$, which differ from $\dv{\rho}(x/\sigma;\alpha)$ and $\ddv{\rho}(x/\sigma;\alpha)$ by a $\sigma$-dependent factor; see \S{\ref{sec:proofs_barron_derivs}} for details.}
\end{lem}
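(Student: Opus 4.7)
The plan is a standard dominated convergence argument, split into two parts: (i) establishing that $\dv{\rho}_\sigma(\loss(h)-\theta)$ and $\ddv{\rho}_\sigma(\loss(h)-\theta)$ are $\ddist$-integrable for every $\theta \in \RR$, and (ii) producing, for each fixed $u$, a $\theta$-uniform dominating envelope on a neighborhood of $u$ so that the difference quotient can be passed under the integral sign. Lemma \ref{lem:basic_barron} already supplies the key building blocks: $\ddv{\rho}_\sigma$ is bounded by $1/\sigma^{2}$ uniformly on $\RR$ for every admissible $\alpha$, and $\dv{\rho}_\sigma$ is bounded whenever $\alpha \leq 1$. In those regimes both integrability and the dominating bound are immediate from $\sigmafield$-measurability of $\loss(h)$.

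The only genuine work is the case $1 < \alpha \leq 2$, where $\dv{\rho}_\sigma$ is unbounded. Here I would invoke the explicit derivative formula from \S{\ref{sec:proofs_barron_derivs}}, which for the unscaled function reads $\dv{\rho}(x;\alpha) = x(1 + x^{2}/\abs{\alpha-2})^{\alpha/2-1}$, with the limiting convention $x/\sigma^{2}$ recovering $\alpha = 2$. A short asymptotic computation shows $\abs{\dv{\rho}_\sigma(x)}$ grows at most like $\abs{x}^{\alpha-1}$ at infinity, so there exists a constant $C = C_{\sigma,\alpha}$ with $\abs{\dv{\rho}_\sigma(x)} \leq C(1 + \abs{x})$ on all of $\RR$. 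Consequently
\[
\abs{\dv{\rho}_\sigma(\loss(h)-\theta)} \leq C\left(1 + \abs{\loss(h)} + \abs{\theta}\right),
\]
which is $\ddist$-integrable thanks to the hypothesis $\exx_{\ddist}\abs{\loss(h)} < \infty$. Restricting $\theta$ to any bounded neighborhood of a fixed $u$ yields a single $\ddist$-integrable envelope valid throughout that neighborhood, as required for dominated convergence.

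For the Leibniz identities themselves, I would apply the mean value theorem to the difference quotient: for any $\theta \neq u$, there is $\xi = \xi(\omega,\theta)$ between $u$ and $\theta$ such that
\[
\frac{\rho_\sigma(\loss(h)-\theta) - \rho_\sigma(\loss(h)-u)}{\theta - u} = -\dv{\rho}_\sigma(\loss(h) - \xi).
\]
The envelope from the previous paragraph dominates the right-hand side for $\abs{\theta - u}$ in a bounded neighborhood of zero, so letting $\theta \to u$ and invoking dominated convergence yields the first Leibniz identity. Repeating the argument with $\dv{\rho}_\sigma$ in place of $\rho_\sigma$, and using the uniform bound $\abs{\ddv{\rho}_\sigma} \leq 1/\sigma^{2}$ as the dominating function, delivers the second identity.

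The principal obstacle, and really the only non-routine piece, is confirming the $O(\abs{x})$ growth of $\dv{\rho}_\sigma$ for $1 < \alpha \leq 2$ \emph{uniformly} down to the boundary $\alpha = 2$ (where the explicit formula degenerates into the linear $x/\sigma^{2}$) and up to $\alpha = 1$ (where it matches onto the Lipschitz regime of Lemma \ref{lem:basic_barron}); aside from that growth check, the rest of the argument is textbook bookkeeping with the dominated convergence theorem.
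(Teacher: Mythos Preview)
Your proposal is correct and follows essentially the same dominated convergence strategy as the paper: boundedness of $\dv{\rho}_\sigma$ and $\ddv{\rho}_\sigma$ from Lemma~\ref{lem:basic_barron} for $\alpha \leq 1$, a linear growth bound on $\dv{\rho}_\sigma$ for $1 < \alpha \leq 2$, and the mean value theorem to produce a dominating envelope for the difference quotient. The only cosmetic difference is that the paper uses the sharper explicit bound $\lvert \dv{\rho}_\sigma(x)\rvert \leq \lvert x\rvert/\sigma^{2}$ (immediate from the formula in \S\ref{sec:proofs_barron_derivs} since the exponent $(\alpha/2)-1$ is nonpositive), which sidesteps the asymptotic growth check you flagged as the ``principal obstacle.''
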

With first-order information about the expected dispersion in hand, one can readily obtain conditions under which the special case of T-risk $\underline{\risk}_{\rho}(h;\eta) \defeq \inf_{\theta \in \RR} \risk_{\rho}(h;\theta,\eta)$ is finite and determined by a meaningful ``optimal threshold.''
\begin{lem}\label{lem:optimal_threshold_existence}
Following the setup of Lemma \ref{lem:dsp_finite}, let $\dsp_{\rho}(h;\theta) < \infty$ for all $\theta \in \RR$. If $\alpha > 1$, then for any choice of $\sigma > 0$ and $\eta \in \RR$, there exists a finite optimal threshold $\mt_{\rho}(h;\eta) \in \RR$ such that
\begin{align}\label{eqn:optimal_threshold_existence_trisk}
\underline{\risk}_{\rho}(h;\eta) = \eta\mt_{\rho}(h;\eta) + \dsp_{\rho}(h;\mt_{\rho}(h;\eta)).
\end{align}
In the case of $\alpha = 1$, we have that (\ref{eqn:optimal_threshold_existence_trisk}) holds if and only if $\abs{\eta} < 1/\sigma$. In both cases, the optimal threshold is unique. In the case of $\alpha < 1$, then there is no minimum and thus $\underline{\risk}_{\rho}(h;\eta) = -\infty$.
\end{lem}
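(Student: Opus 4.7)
The plan is to study the scalar function $F(\theta) \defeq \eta\theta + \dsp_{\rho}(h;\theta)$ directly. By Lemma \ref{lem:dsp_diffable}, $F$ is twice differentiable on $\RR$, with $F'(\theta) = \eta - G(\theta)$ where $G(\theta) \defeq \exx_{\ddist}\dv{\rho}_{\sigma}(\loss(h)-\theta)$, and $F''(\theta) = \exx_{\ddist}\ddv{\rho}_{\sigma}(\loss(h)-\theta)$. For $\alpha \geq 1$, Lemma \ref{lem:basic_barron} gives convexity of $\rho_{\sigma}$ on $\RR$, and one can read off from the explicit derivative formulas in \S{\ref{sec:proofs_barron_derivs}} that $\ddv{\rho}_{\sigma}(x) > 0$ pointwise, so $F'' > 0$ and $F$ is strictly convex. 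This immediately yields uniqueness whenever a minimizer exists.

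The core step is a careful asymptotic analysis of $G$. Since $\loss(h)$ is a.s.~finite and the integrability granted by Lemma \ref{lem:dsp_finite} holds, dominated convergence lets me pass $\theta \to \mp\infty$ inside the expectation and read off the behavior of $\dv{\rho}_{\sigma}(x)$ as $|x|\to\infty$ directly from the Barron derivative formula. This produces a trichotomy: for $\alpha>1$, $\dv{\rho}_{\sigma}$ is unbounded, so $G(\RR) = \RR$; for $\alpha=1$, one has the strict pointwise bound $|\dv{\rho}_{\sigma}(x)|<1/\sigma$ at every finite $x$ while $\dv{\rho}_{\sigma}(x) \to \pm 1/\sigma$ as $x\to\pm\infty$, hence $G(\RR)\subset(-1/\sigma,1/\sigma)$ with the endpoints attained only in the limit; for $\alpha<1$, $\dv{\rho}_{\sigma}$ vanishes at infinity, so $G(\theta)\to 0$ as $|\theta|\to\infty$.

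Combining these with continuity of $G$ and the form $F' = \eta - G$ settles every case. For $\alpha>1$, the intermediate value theorem produces a zero of $F'$, unique by strict convexity. For $\alpha=1$ and $|\eta|<1/\sigma$ the same IVT argument applies; for $|\eta|\geq 1/\sigma$ the strict bound forces $F'$ to have constant sign, so by strict convexity no critical point and hence no minimizer exists. For $\alpha<1$ with $\eta\neq 0$, the Lipschitz (and, when $\alpha<0$, boundedness) estimates of Lemma \ref{lem:basic_barron} give $\dsp_{\rho}(h;\theta) = o(|\theta|)$, so $F(\theta)\sim\eta\theta \to -\infty$ as $\theta\to-\mathrm{sign}(\eta)\cdot\infty$, hence $\underline{\risk}_{\rho}(h;\eta) = -\infty$.

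The main obstacle I expect is the $\alpha=1$ borderline: establishing the strict inclusion $G(\RR)\subset(-1/\sigma,1/\sigma)$ (from the pointwise strict inequality together with a.s.~finiteness of $\loss(h)$) along with the correct endpoint limits $\pm 1/\sigma$ is what isolates the sharp iff condition $|\eta|<1/\sigma$. The remaining pieces — convexity for uniqueness, IVT for existence, and the linear-versus-sublinear growth comparison in the $\alpha<1$ case — are essentially routine.
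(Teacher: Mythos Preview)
Your approach is essentially the same as the paper's: both reduce to the first-order condition $\exx_{\ddist}\dv{\rho}_{\sigma}(\loss(h)-\theta)=\eta$, determine solvability from the range of $\dv{\rho}_{\sigma}$ in each $\alpha$-regime, and obtain uniqueness from strict positivity of $\ddv{\rho}_{\sigma}$ when $\alpha\geq 1$. Two small remarks: (i) for $\alpha>1$ the derivative $\dv{\rho}_{\sigma}$ is unbounded, so ``dominated convergence'' is the wrong tool to pass $\theta\to\mp\infty$ through $G$---use Fatou or monotonicity (since $\dv{\rho}_{\sigma}$ is increasing) instead, which is effectively what the paper's coercivity-style argument (Lemma~\ref{lem:dsp_coercive}) does; (ii) your explicit treatment of the $\alpha<1$ case via $\dsp_{\rho}(h;\theta)=o(|\theta|)$ (a consequence of $G(\theta)\to 0$, not merely of a Lipschitz bound) is actually more complete than the paper's own proof, which does not spell out that case.
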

\begin{rmk}
We have overloaded our notation $\mt_{\rho}(h;\eta)$ in Lemma \ref{lem:optimal_threshold_existence}, recalling that we have used the same notation to denote the \emph{set} of optimal thresholds for the T-risk in (\ref{eqn:trisk_general}). This saves us from having to introduce additional symbols, and should not lead to any confusion since we only do this overloading when the solution set contains a single unique solution.
\end{rmk}

\subsection{Smoothness of the T-risk}\label{sec:learning_smoothness}

When the objective $\risk_{\rho}(h;\theta,\eta)$ in (\ref{eqn:trisk_general}) is sufficiently smooth, we can apply well-established analytical techniques to control the gradient norms of stochastic gradient-based learning algorithms. Assuming we have unbiased first-order stochastic feedback as in (\ref{eqn:transformed_grad})--(\ref{eqn:unbiased_new}), we will always have to deal with terms of the form $\exx_{\ddist}\left[\dv{\rho}_{\sigma}(\loss(h)-\theta)\dv{\loss}(h)\right]$. Defining $f(h,\theta) \defeq \dv{\rho}_{\sigma}(\loss(h)-\theta)\dv{\loss}(h)$ for readability, and considering the function difference at two arbitrary points $(h_{1},\theta_{1})$ and $(h_{2},\theta_{2})$, first note that
\begin{align}
\nonumber
f&(h_{1},\theta_{1})-f(h_{2},\theta_{2})\\
\label{eqn:difference_tricky}
& = \underbrace{\dv{\rho}_{\sigma}(\loss(h_{1})-\theta_{1})\left[ \dv{\loss}(h_{1}) - \dv{\loss}(h_{2}) \right]}_{A} + \underbrace{\left[\dv{\rho}_{\sigma}(\loss(h_{1})-\theta_{1})-\dv{\rho}_{\sigma}(\loss(h_{2})-\theta_{2})\right]\dv{\loss}(h_{2})}_{B}.
\end{align}
In the case of $\rho(\cdot) = \rho(\cdot;\alpha)$ from the Barron class (\ref{eqn:barron}), when $-\infty \leq \alpha \leq 1$, we have that $\dv{\rho}$ is both bounded ($\Abs{\dv{\rho}}_{\infty} < \infty$) and Lipschitz continuous on $\RR$ (see Lemma \ref{lem:basic_barron}). This means that all we need in order to control $\exx_{\ddist}A + \exx_{\ddist}B$ is for $\loss$ to be smooth (for control of $A$) and for $\dv{\loss}(\cdot)$ to have a norm bounded over $\HH$ (for control of $B$); see \S{\ref{sec:proofs_smoothness_limited}} for more details. Things are more difficult in the case of $1 < \alpha \leq 2$, since the dispersion function $\rho$ is not (globally) Lipschitz, meaning that $\Abs{\dv{\rho}}_{\infty} = \infty$. Even if $\loss$ is smooth, when the threshold parameter is left unconstrained, it will always be possible for $\Abs{\exx_{\ddist}A} \to \infty$ as $\abs{\theta_{1}} \to \infty$, impeding smoothness guarantees for $\risk_{\rho}$ in this setting.

Let us proceed by distilling the preceding discussion into a set of concrete conditions that are sufficient to make $(h,\theta) \mapsto \risk_{\rho}(h;\theta,\eta)$ amenable to standard analysis techniques for stochastic gradient-based algorithms. For readability, we write $\Abs{\dv{\loss}}_{\HH} \defeq \sup\{\Abs{\dv{\loss}(h)}: h \in \HH\}$.
\begin{itemize}
\item[\namedlabel{asmp:grad_moments}{\textup{A1}}.] \textbf{Moment bound for loss gradient.} For any choice of $h_{1},h_{2} \in \HH$, $0 < c < 1$, and $k \in \{1,2\}$, the loss $\loss$ is differentiable at $ch_{1} + (1-c)h_{2}$, and satisfies
\begin{align}\label{eqn:grad_moments}
\exx_{\ddist}\left(\sup_{0 < c < 1}\Abs{\dv{\loss}(ch_{1} + (1-c)h_{2})}\right)^{k} \leq \exx_{\ddist}\Abs{\dv{\loss}}_{\HH}^{k} < \infty.
\end{align}

\item[\namedlabel{asmp:loss_smooth}{\textup{A2}}.] \textbf{Loss is smooth in expectation.} There exists $0 < \smooth_{1} < \infty$ such that for any choice of $h_{1},h_{2} \in \HH$, we have $\exx_{\ddist}\Abs{\dv{\loss}(h_{1})-\dv{\loss}(h_{2})} \leq \smooth_{1}\Abs{h_{1} - h_{2}}$.

\item[\namedlabel{asmp:dispersion}{\textup{A3}}.] \textbf{Dispersion is Lipschitz and smooth.} The function $\rho$ is such that $\Abs{\dv{\rho}}_{\infty} < \infty$, and there exists $0 < \smooth_{2} < \infty$ such that $\abs{\dv{\rho}(x_{1})-\dv{\rho}(x_{2})} \leq \smooth_{2}\abs{x_{1} - x_{2}}$ for all $x_{1},x_{2} \in \RR$.
\end{itemize}
If $\HH$ is a convex set, then the first inequality in \ref{asmp:grad_moments} holds trivially. Note that under \ref{asmp:loss_smooth}, the right-hand side of (\ref{eqn:grad_moments}) will be finite for $k=1$ whenever $\HH$ has bounded diameter and $\exx_{\ddist}\Abs{\dv{\loss}(h)} < \infty$ for some $h \in \HH$. As for \ref{asmp:dispersion}, all the requirements are clearly met by the Barron class with $-\infty \leq \alpha \leq 1$. These conditions imply a Lipschitz property for the gradients, as summarized in the following lemma.
\begin{lem}\label{lem:smoothness_limited}
Let the conditions \ref{asmp:grad_moments}, \ref{asmp:loss_smooth}, and \ref{asmp:dispersion} hold. Then, the T-risk map $(h,\theta) \mapsto \risk_{\rho}(h;\theta,\eta)$ defined in (\ref{eqn:trisk_general}) is smooth on $\HH \times \RR$ in the sense that
\begin{align*}
\Abs{\dv{\risk}_{\rho}(h_{1};\theta_{1},\eta) - \dv{\risk}_{\rho}(h_{2};\theta_{2},\eta)} & \leq \left(\frac{\smooth_{5}}{\sigma} + \frac{\smooth_{2}}{\sigma^{2}}\exx_{\ddist}\Abs{\dv{\loss}}_{\HH}\right)\left( \Abs{h_{1} - h_{2}} + \abs{\theta_{1} - \theta_{2}} \right)
\end{align*}
for any choice of $h_{1}, h_{2} \in \HH$ and $\theta_{1}, \theta_{2} \in \RR$. Here the factor $\smooth_{5}$ is defined $\smooth_{5} \defeq \smooth_{3} + \smooth_{4}$, where
\begin{align*}
\smooth_{3} \defeq \left(\frac{\smooth_{2}}{\sigma}\right)\left[\exx_{\ddist}\Abs{\dv{\loss}}_{\HH}^{2} + \sup_{h \in \HH}\exx_{\ddist}\Abs{\dv{\loss}(h)} \right], \qquad \smooth_{4} \defeq \smooth_{1}\Abs{\dv{\rho}}_{\infty}.
\end{align*}
\end{lem}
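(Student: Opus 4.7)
My plan is to compute the two partial Fr\'echet derivatives of $\risk_{\rho}$ explicitly, control each via the decomposition suggested in (\ref{eqn:difference_tricky}), and combine through the triangle inequality on the joint norm. To begin, Lemma~\ref{lem:unbiased_new} justifies differentiating under the integral in $h$ (the integrand is dominated by $(\Abs{\dv{\rho}}_{\infty}/\sigma)\Abs{\dv{\loss}}_{\HH}$, which is integrable by \ref{asmp:grad_moments} with $k=1$ together with \ref{asmp:dispersion}), giving $\partial_{h}\risk_{\rho}(h;\theta,\eta) = \exx_{\ddist}[\dv{\rho}_{\sigma}(\loss(h)-\theta)\dv{\loss}(h)]$. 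Lemma~\ref{lem:dsp_diffable} (combined with boundedness of $\dv{\rho}$) delivers the analogous identity $\partial_{\theta}\risk_{\rho}(h;\theta,\eta) = \eta - \exx_{\ddist}\dv{\rho}_{\sigma}(\loss(h)-\theta)$.

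For the $h$-gradient difference, I would plug the decomposition $A + B$ from (\ref{eqn:difference_tricky}) into $\partial_{h}\risk_{\rho}(h_{1};\theta_{1},\eta) - \partial_{h}\risk_{\rho}(h_{2};\theta_{2},\eta) = \exx_{\ddist}(A+B)$ and bound each term separately. For $A$, boundedness of $\dv{\rho}$ gives $\Abs{A} \leq (\Abs{\dv{\rho}}_{\infty}/\sigma)\Abs{\dv{\loss}(h_{1})-\dv{\loss}(h_{2})}$, and then \ref{asmp:loss_smooth} yields $\exx_{\ddist}\Abs{A} \leq (\smooth_{4}/\sigma)\Abs{h_{1}-h_{2}}$. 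For $B$, the scaling $\dv{\rho}_{\sigma}(x) = (1/\sigma)\dv{\rho}(x/\sigma)$ together with the $\smooth_{2}$-Lipschitzness of $\dv{\rho}$ implies that $\dv{\rho}_{\sigma}$ is $(\smooth_{2}/\sigma^{2})$-Lipschitz, so $\Abs{B} \leq (\smooth_{2}/\sigma^{2})(\abs{\loss(h_{1})-\loss(h_{2})} + \abs{\theta_{1}-\theta_{2}})\Abs{\dv{\loss}(h_{2})}$. A Fr\'echet mean value inequality along the segment $[h_{1},h_{2}]$ (which \ref{asmp:grad_moments} places in the domain of differentiability) produces $\abs{\loss(h_{1})-\loss(h_{2})} \leq \sup_{0 < c < 1}\Abs{\dv{\loss}(ch_{1}+(1-c)h_{2})}\Abs{h_{1}-h_{2}}$, pointwise dominated by $\Abs{\dv{\loss}}_{\HH}\Abs{h_{1}-h_{2}}$. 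Combining with $\Abs{\dv{\loss}(h_{2})} \leq \Abs{\dv{\loss}}_{\HH}$ on the $h$-summand and $\exx_{\ddist}\Abs{\dv{\loss}(h_{2})} \leq \sup_{h\in\HH}\exx_{\ddist}\Abs{\dv{\loss}(h)}$ on the $\theta$-summand yields $\exx_{\ddist}\Abs{B} \leq (\smooth_{2}/\sigma^{2})\exx_{\ddist}\Abs{\dv{\loss}}_{\HH}^{2}\Abs{h_{1}-h_{2}} + (\smooth_{2}/\sigma^{2})\sup_{h\in\HH}\exx_{\ddist}\Abs{\dv{\loss}(h)}\abs{\theta_{1}-\theta_{2}}$.

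For the $\theta$-partial, the same Lipschitz constant for $\dv{\rho}_{\sigma}$ combined with $\exx_{\ddist}\abs{\loss(h_{1})-\loss(h_{2})} \leq \exx_{\ddist}\Abs{\dv{\loss}}_{\HH}\Abs{h_{1}-h_{2}}$ (again via the Fr\'echet mean value step, this time with \ref{asmp:grad_moments} at $k=1$) gives $\abs{\partial_{\theta}\risk_{\rho}(h_{1};\theta_{1},\eta) - \partial_{\theta}\risk_{\rho}(h_{2};\theta_{2},\eta)} \leq (\smooth_{2}/\sigma^{2})(\exx_{\ddist}\Abs{\dv{\loss}}_{\HH}\Abs{h_{1}-h_{2}} + \abs{\theta_{1}-\theta_{2}})$. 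Summing the two partial-gradient bounds and collecting the coefficients of $\Abs{h_{1}-h_{2}}$ and $\abs{\theta_{1}-\theta_{2}}$ into a single common constant produces $\smooth_{5}/\sigma + (\smooth_{2}/\sigma^{2})\exx_{\ddist}\Abs{\dv{\loss}}_{\HH}$, with $\smooth_{5} = \smooth_{3} + \smooth_{4}$ matching the statement. The main technical friction I expect is precisely the Fr\'echet mean value step: extracting a segment-wise bound from pointwise differentiability requires the integrable envelope baked into \ref{asmp:grad_moments}, and the product $\Abs{\dv{\loss}(ch_{1}+(1-c)h_{2})}\Abs{\dv{\loss}(h_{2})}$ must survive expectation, which is what the square-integrability $\exx_{\ddist}\Abs{\dv{\loss}}_{\HH}^{2} < \infty$ is there to ensure. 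Everything else reduces to routine triangle-inequality bookkeeping.
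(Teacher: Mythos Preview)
Your proposal is correct and follows essentially the same route as the paper's own proof: compute the two partial derivatives, split the $h$-gradient difference via the product-difference identity (\ref{eqn:difference_tricky}), bound the $\dv{\rho}_{\sigma}$-difference piece using the $\smooth_{2}$-Lipschitzness of $\dv{\rho}$ together with the mean-value inequality (Lemma~\ref{lem:meanvalue_general}) and \ref{asmp:grad_moments}, bound the $\dv{\loss}$-difference piece using $\Abs{\dv{\rho}}_{\infty}$ and \ref{asmp:loss_smooth}, handle the $\theta$-partial analogously, and sum. The only cosmetic difference is that the paper swaps which factor is frozen in the $A+B$ split (it pairs $\Abs{\dv{\loss}(h_{1})}$ with the $\dv{\rho}_{\sigma}$-difference and $\dv{\rho}_{\sigma}(\loss(h_{2})-\theta_{2})$ with the $\dv{\loss}$-difference), but this is symmetric and yields the same constants.
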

\begin{rmk}[Norm on product spaces]\label{rmk:product_norm}
In Lemma \ref{lem:smoothness_limited} we have to deal with norms on product spaces, and in all cases we just use the traditional choice of summing the norms of the constituent elements, i.e., on $\HH \times \RR$, we have $\Abs{(h,\theta)} \defeq \Abs{h} + \abs{\theta}$. Similarly, we have that the gradient $\dv{\risk}_{\rho}(h;\theta,\eta) = (\partial_{h}\risk_{\rho}(h;\theta,\eta), \partial_{\theta}\risk_{\rho}(h;\theta,\eta))$, a pair of linear functionals. As such, the norm of $\dv{\risk}_{\rho}(h;\theta,\eta)$ defined as the sum of the norms of these two constituent functionals.
\end{rmk}
Proving Lemma \ref{lem:smoothness_limited} is straightforward but somewhat tedious. Detailed computations as well as a direct proof are organized in \S{\ref{sec:proofs_smoothness_limited}} for easy reference.

\subsection{Experimental details}\label{sec:supplementary_info_applications}

Here we provide some additional details for the empirical analysis carried out in \S{\ref{sec:trisk_expressive}} and \S{\ref{sec:applications}}. Detailed hyperparameter settings and seeds for exact re-creation of all the results in this paper are available at the GitHub repository cited in \S{\ref{sec:intro}}, and thus here we will not explicitly write all the step sizes, shape settings, etc., but rather focus on concise exposition of points on which we expect readers to desire clarification.

\paragraph{Static risk analysis}
For our experiments in \S{\ref{sec:trisk_expressive}}, we gave just one plot using a log-Normal distribution, but analogous tests were run for a wide variety of parametric distributions. In total, we have run tests for Bernoulli, Beta, $\chi^{2}$, Exponential, Gamma, log-Normal, Normal, Pareto, Uniform, Wald, and Weibull distributions. The settings of each distribution to be sampled from has not been tweaked at all; we set the parameters rather arbitrarily before running any tests. As for the fixed value of $\sigma = 0.5$ in the T-risk across all tests, we tested several values of $\sigma$ ranging from $0.1$ to $10$, and based on the rough position of $\mt_{\rho}(\loss;\eta)$ in the Normal case, we determined $0.5$ as a reasonable representative value; indeed, this settings performs quite well across a very wide range of distributions. Regarding the optimization involved in solving for the optimal threshold $\theta$ (for T-risk, OCE risks, and DRO risk), we use \texttt{minimize\_scalar} from SciPy, with bounded solver type, and valid brackets set manually.

\paragraph{Noisy linear classification}
In the tests described in \S{\ref{sec:applications_2dclass}}, we only give error/norm trajectories for ``representative'' settings of each risk class (Figure \ref{fig:unbounded_below_unhinged}). For T-risk we consider different choices of $1 \leq \alpha \leq 2$, for CVaR we consider $0.025 \leq \beta \leq 0.75$, for tilted risk we consider $\gamma$ between $\pm 0.05$, and for $\chi^{2}$-DRO we consider $0.025 \leq \widetilde{a} \leq 0.35$. For each of these ranges, we evaluate five evenly-spaced candidates (via \texttt{np.linspace}), and representative settings were selected as those which achieved the best classification error (average zero-one loss) after the final iteration. In the event of ties, the smaller hyperparameter value was always selected (via \texttt{np.argmin}).

\paragraph{Regression under outliers}
For the tests introduced in \S{\ref{sec:applications_1dlinreg}}, we have given results for learning algorithms started at a point which is quite accurate for the majority of the data points, but incurs extremely large errors on the outlying minority. This choice of initial value naturally has a strong impact on the behavior of learning algorithms under each risk. For some perspective, in Figure \ref{fig:tail_control_all_updown} we provide results using a different initial value (again, in gray), which complement Figure \ref{fig:tail_control}. Since the naive strategy fixing $\theta$ at the initial median sets the scale extremely large and close to the loss incurred at \emph{most} points, even a large number of gradient-based updates result in minimal change. The basic reason for this is quite straightforward. Since the T-risk gradient is modulated by $\rho_{\sigma}^{\prime}(\loss(h)-\theta)$, and all points are such that either $\loss_{i}(h) \ll \theta$ (the minority) or $\loss_{i}(h) \approx \theta$ (the majority), both cases shrink the norm of the update direction vector $\alpha \leq 1$. When implementing T-risk in the more traditional way (jointly in $(h,\theta)$) and choosing $\alpha \in [1,2]$, we see results that are very similar to CVaR. Finally, we remark that here we have set the hyperparameter ranges with upper bounds low enough that the learning procedure described here does not run into overflow errors.

\paragraph{Classification under larger benchmark datasets}
In \S{\ref{sec:applications_real}} we make use of several well-known benchmark datasets, and in our figures we identify them respectively by the following keywords: \texttt{adult},\footnote{\url{https://archive.ics.uci.edu/ml/datasets/Adult}} \texttt{australian},\footnote{\url{https://archive.ics.uci.edu/ml/datasets/statlog+(australian+credit+approval)}} \texttt{cifar10},\footnote{\url{https://www.cs.toronto.edu/~kriz/cifar.html}} \texttt{covtype},\footnote{\url{https://archive.ics.uci.edu/ml/datasets/covertype}} \texttt{emnist\_balanced},\footnote{\url{https://www.nist.gov/itl/products-and-services/emnist-dataset}} \texttt{fashion\_mnist},\footnote{\url{https://github.com/zalandoresearch/fashion-mnist}} and \texttt{protein}.\footnote{\url{https://www.kdd.org/kdd-cup/view/kdd-cup-2004/Data}} For further background on all of these datasets, please access the URLs provided in the footnotes. As mentioned in the main text, for a $k$-class problem with $d$ features, the predictor is characterized by $k$ weighting vectors $(w_{1},\ldots,w_{k})$, each of which is $w_{j} \in \RR^{d}$ and computes scores as $\tran{w}_{j} x$ for $x \in \RR^{d}$. These weight vectors are penalized using the usual multi-class logistic loss, namely the negative log-likelihood of the $k$-Categorical distribution that arises after passing these scores through the (logistic) softmax function. Regarding step sizes, as mentioned in the main text, we consider choices of factor $c \in \{0.1, 0.5, 1.0, 1.5, 2.0\}$, and set step size to $c / \sqrt{kd}$, where $d$ and $k$ are as just stated. In Figures \ref{fig:real_adult}--\ref{fig:real_protein} of \S{\ref{sec:supplementary_info_figures}}, we provide additional results for the datasets not covered in Figures \ref{fig:real_emnist_balanced}--\ref{fig:real_covtype} from the main text.

\subsection{Additional figures}\label{sec:supplementary_info_figures}

Here we include a number of figures that complement those provided in the main text. A brief summary is given below.
\begin{itemize}
\item Figures \ref{fig:unbounded_below_logistic}--\ref{fig:unbounded_below_hinge} are additional results from the experiments described in \S{\ref{sec:applications_2dclass}}, here using logistic and hinge losses instead of the unhinged loss.
\item Figures \ref{fig:outliers_phones_loss_tails}--\ref{fig:tail_control_all_updown} are related to the regression under outliers task described in \S{\ref{sec:applications_1dlinreg}}. The first figure shows how different regression lines incur very different loss distributions under different convex base loss functions. The second figure illustrates how a different initial value impacts the learned regression lines under each risk class.
\item Figures \ref{fig:real_adult}--\ref{fig:real_protein} are all completely analogous to Figure \ref{fig:real_emnist_balanced} given in the main text of \S{\ref{sec:applications_real}}, but for additional benchmark datasets.
\end{itemize}

\clearpage

\begin{figure}[t]
\centering
\includegraphics[width=0.5\textwidth]{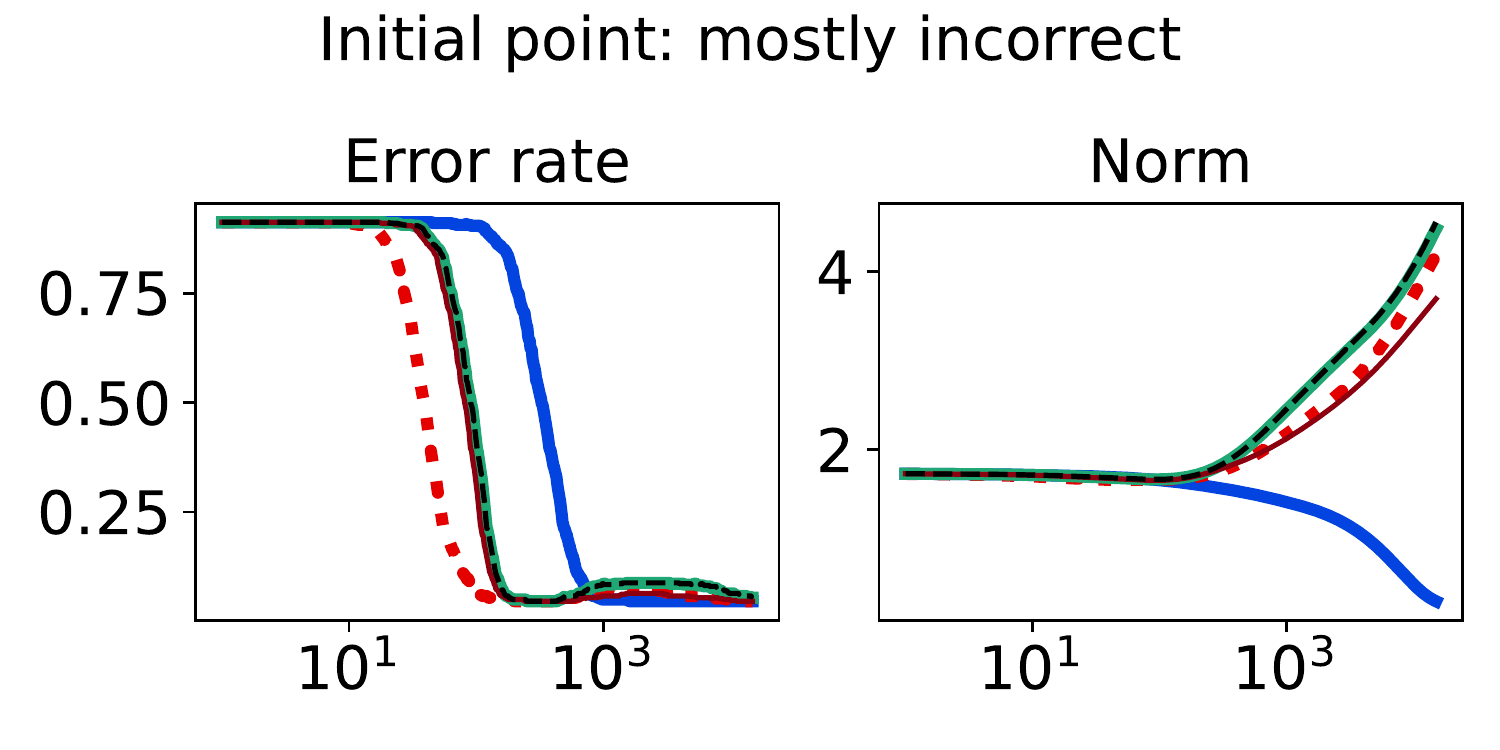}\includegraphics[width=0.5\textwidth]{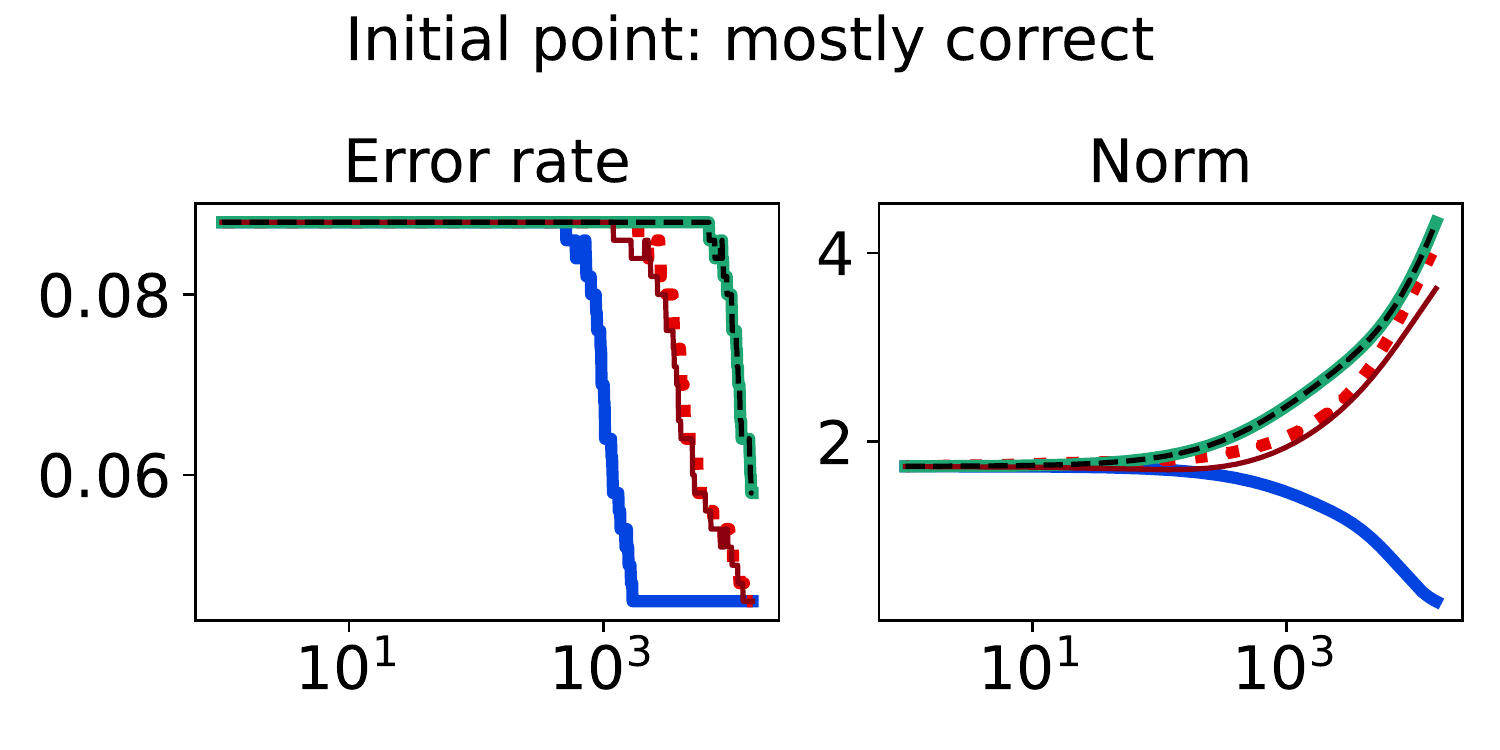}\\
\includegraphics[width=0.75\textwidth]{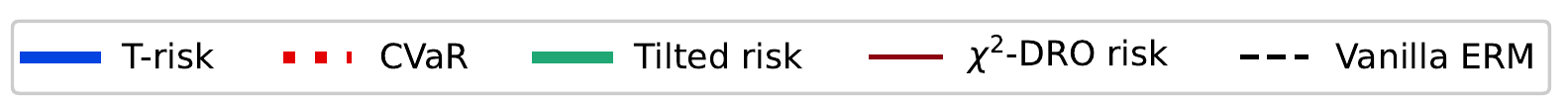}\hspace{0.1cm}
\caption{Analogous to Figure \ref{fig:unbounded_below_unhinged}, this time using logistic loss.}
\label{fig:unbounded_below_logistic}
\end{figure}

\begin{figure}[t]
\centering
\includegraphics[width=0.5\textwidth]{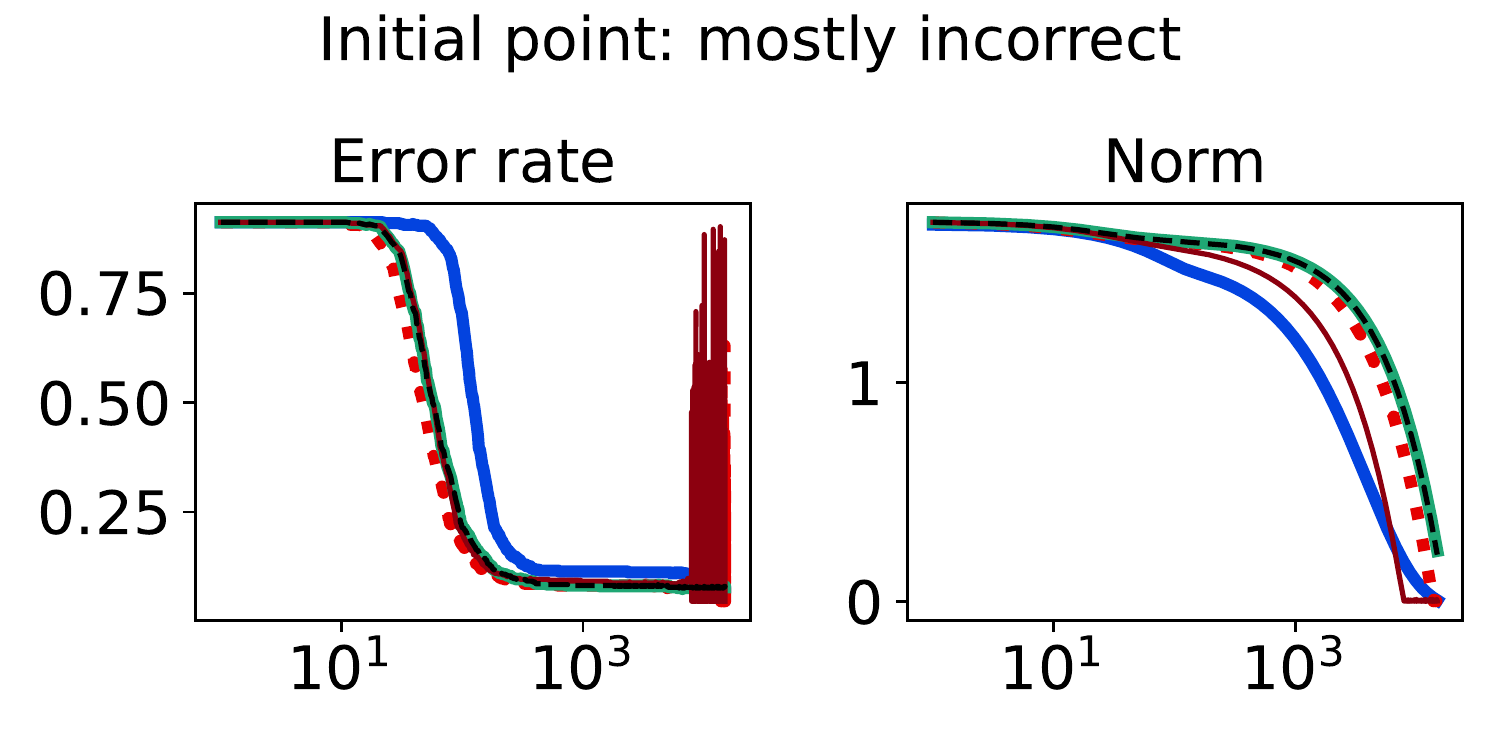}\includegraphics[width=0.5\textwidth]{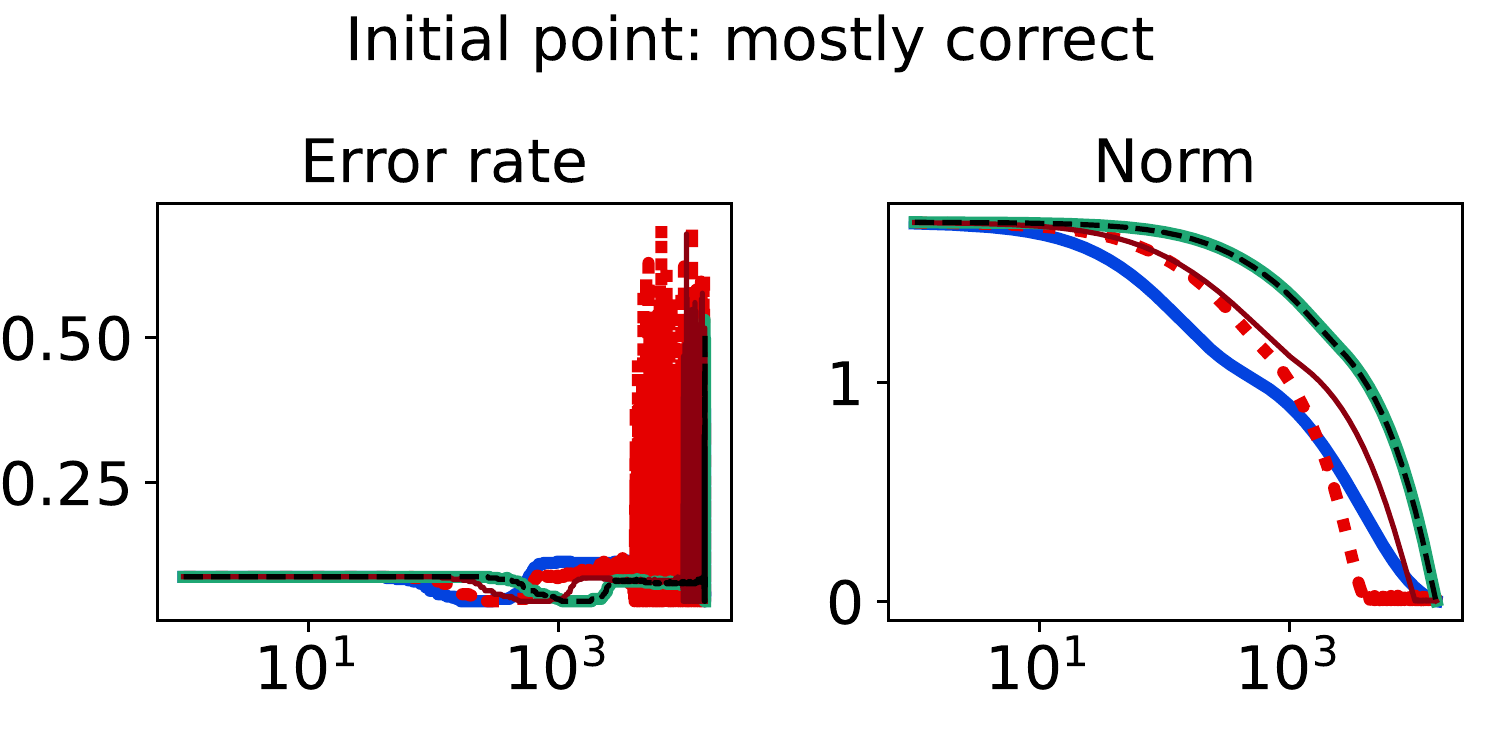}\\
\includegraphics[width=0.75\textwidth]{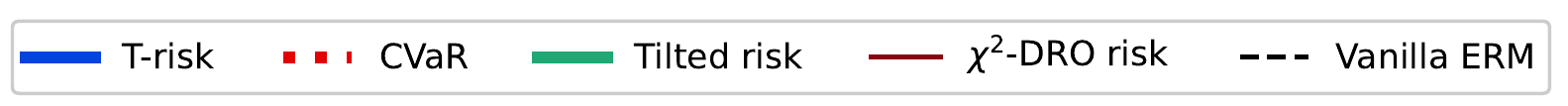}\hspace{0.1cm}
\caption{Analogous to Figure \ref{fig:unbounded_below_unhinged}, this time using the hinge loss.}
\label{fig:unbounded_below_hinge}
\end{figure}

\clearpage

\begin{figure}[t]
\centering
\includegraphics[width=0.28\textwidth]{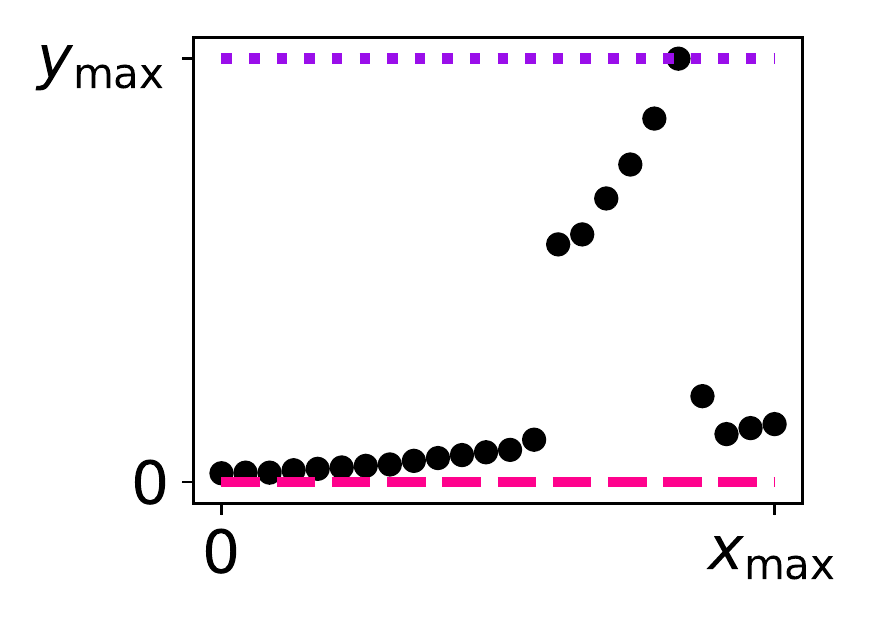}\includegraphics[width=0.72\textwidth]{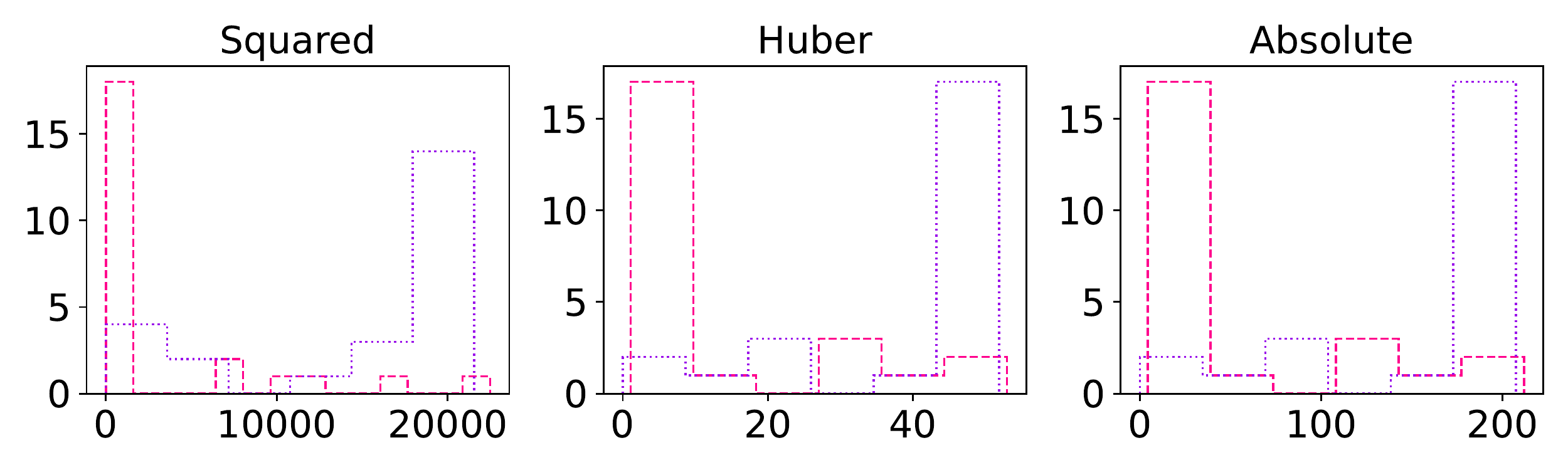}
\caption{Here the left-most plot is the Belgian phone call dataset for use in a one-dimensional regression task, with two linear regression candidates. The remaining three plots show histograms of the loss distributions incurred by each of these candidates using three loss functions commonly used in regression.}
\label{fig:outliers_phones_loss_tails}
\end{figure}

\begin{figure}[t]
\centering
\includegraphics[width=1.0\textwidth]{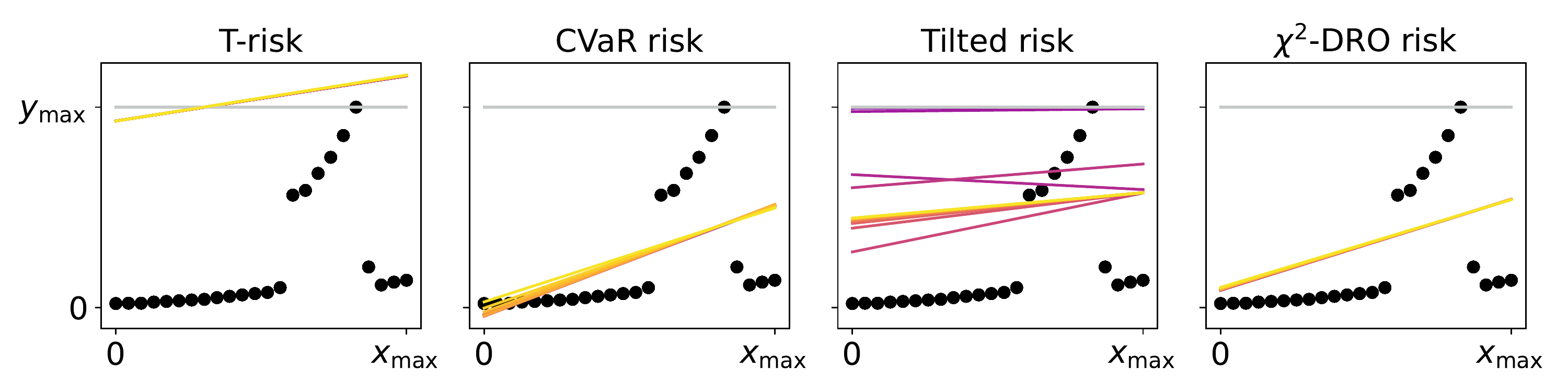}\\
\hfill\includegraphics[width=0.24\textwidth]{bddrob/outliers_phones_traj_all_cbar_triskGeneral}\includegraphics[width=0.24\textwidth]{bddrob/outliers_phones_traj_all_cbar_cvar}\includegraphics[width=0.24\textwidth]{bddrob/outliers_phones_traj_all_cbar_entropic}\includegraphics[width=0.24\textwidth]{bddrob/outliers_phones_traj_all_cbar_dro}
\caption{Same procedure as in Figure \ref{fig:tail_control}, but this time started from a different initial value.}
\label{fig:tail_control_all_updown}
\end{figure}

\clearpage

\begin{figure}[t]
\centering
\includegraphics[width=0.5\textwidth]{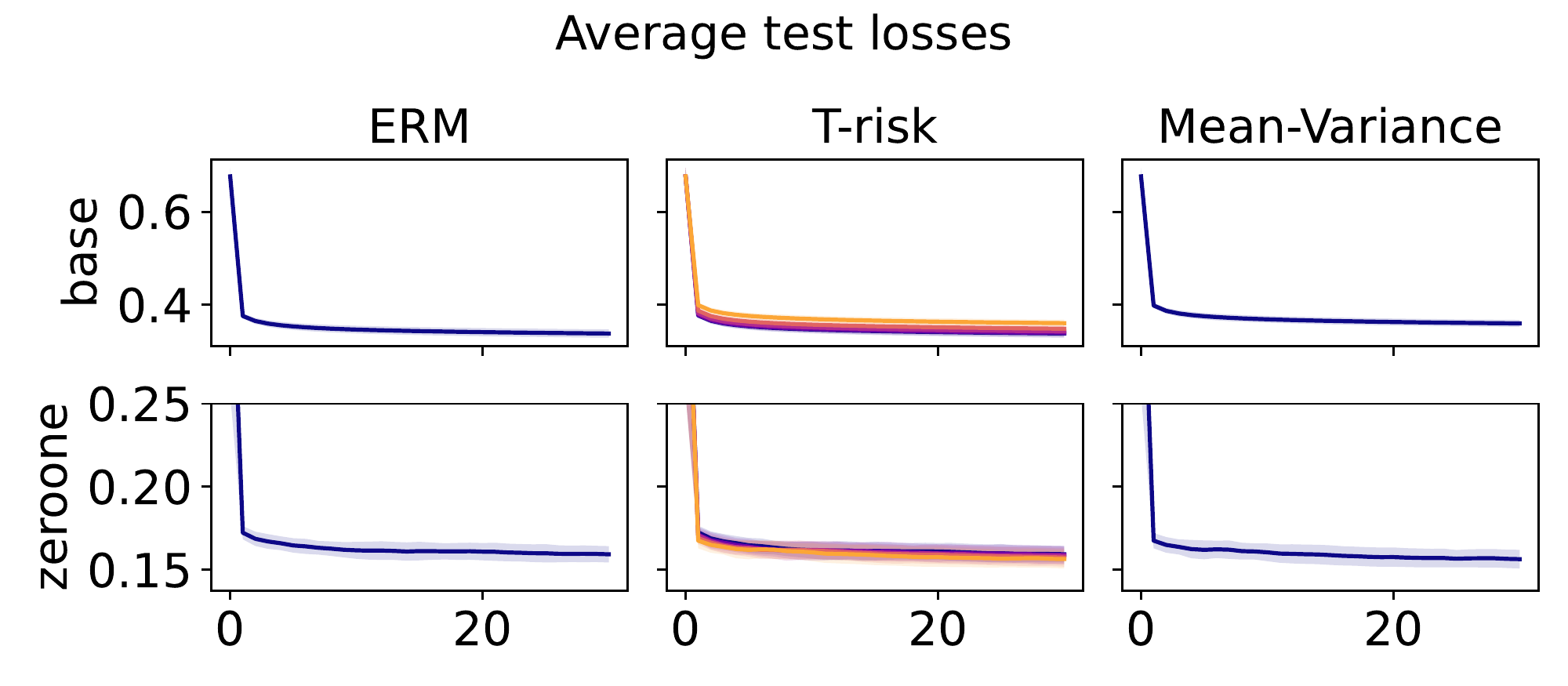}\includegraphics[width=0.5\textwidth]{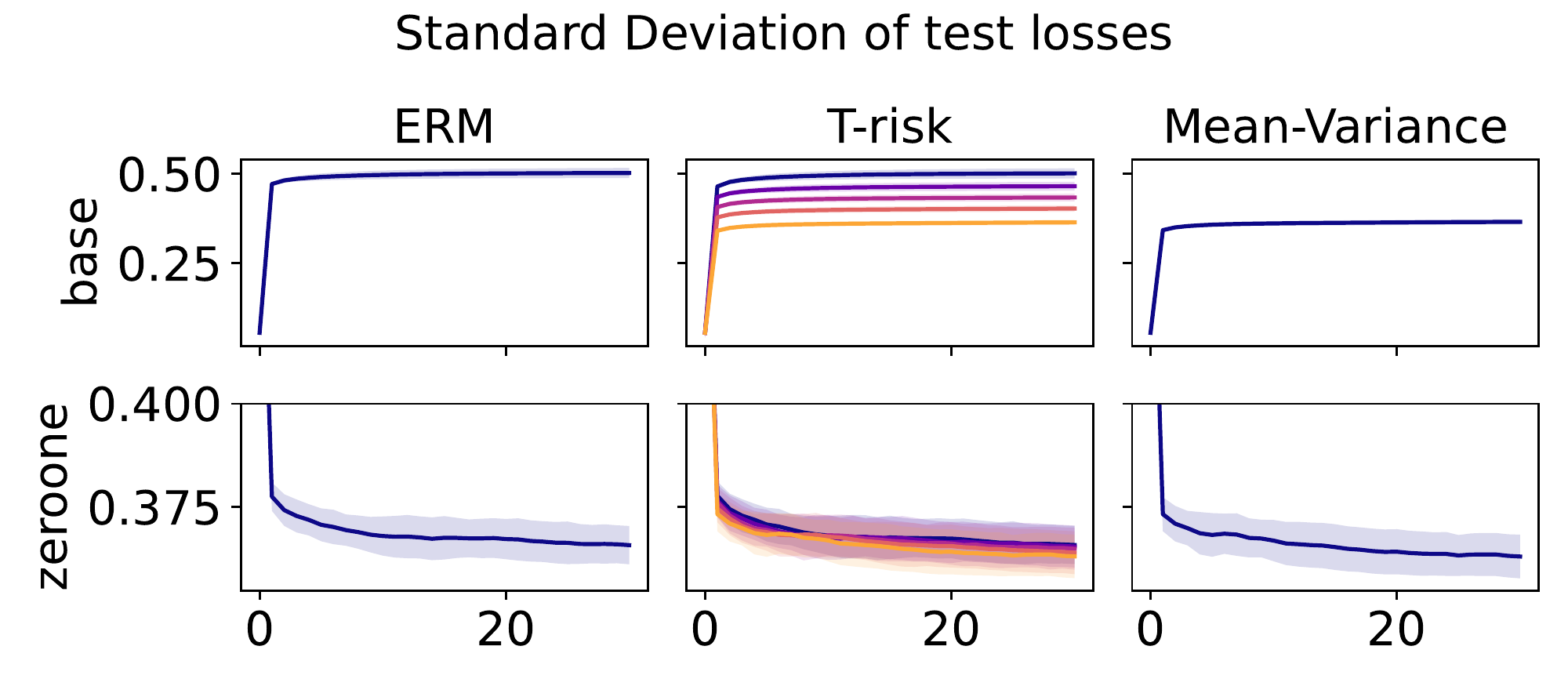}\\
\includegraphics[width=1.0\textwidth]{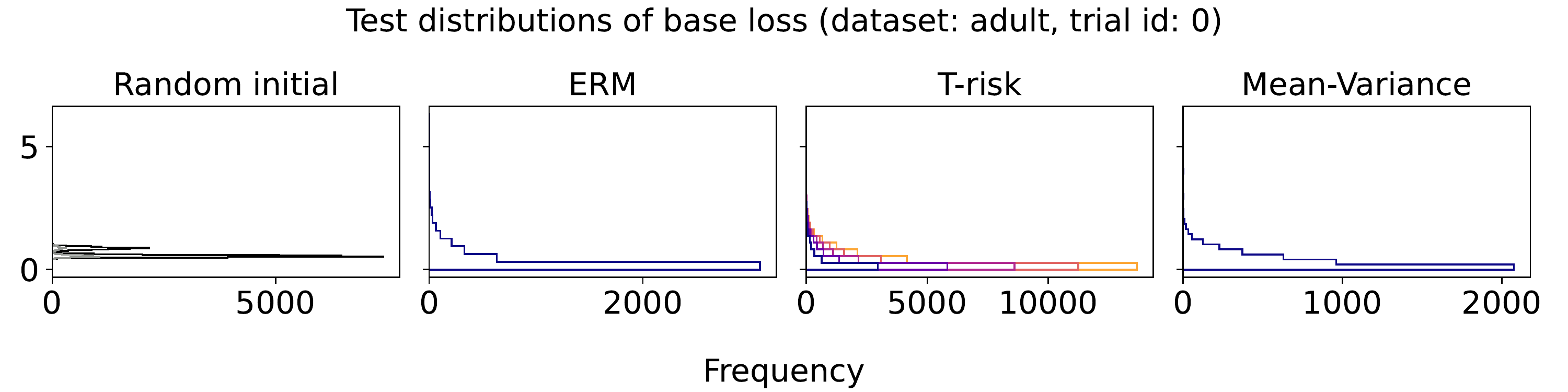}
\caption{Using T-risk to interpolate between test loss distributions (dataset: \texttt{adult}).}
\label{fig:real_adult}
\end{figure}

\begin{figure}[t]
\centering
\includegraphics[width=0.5\textwidth]{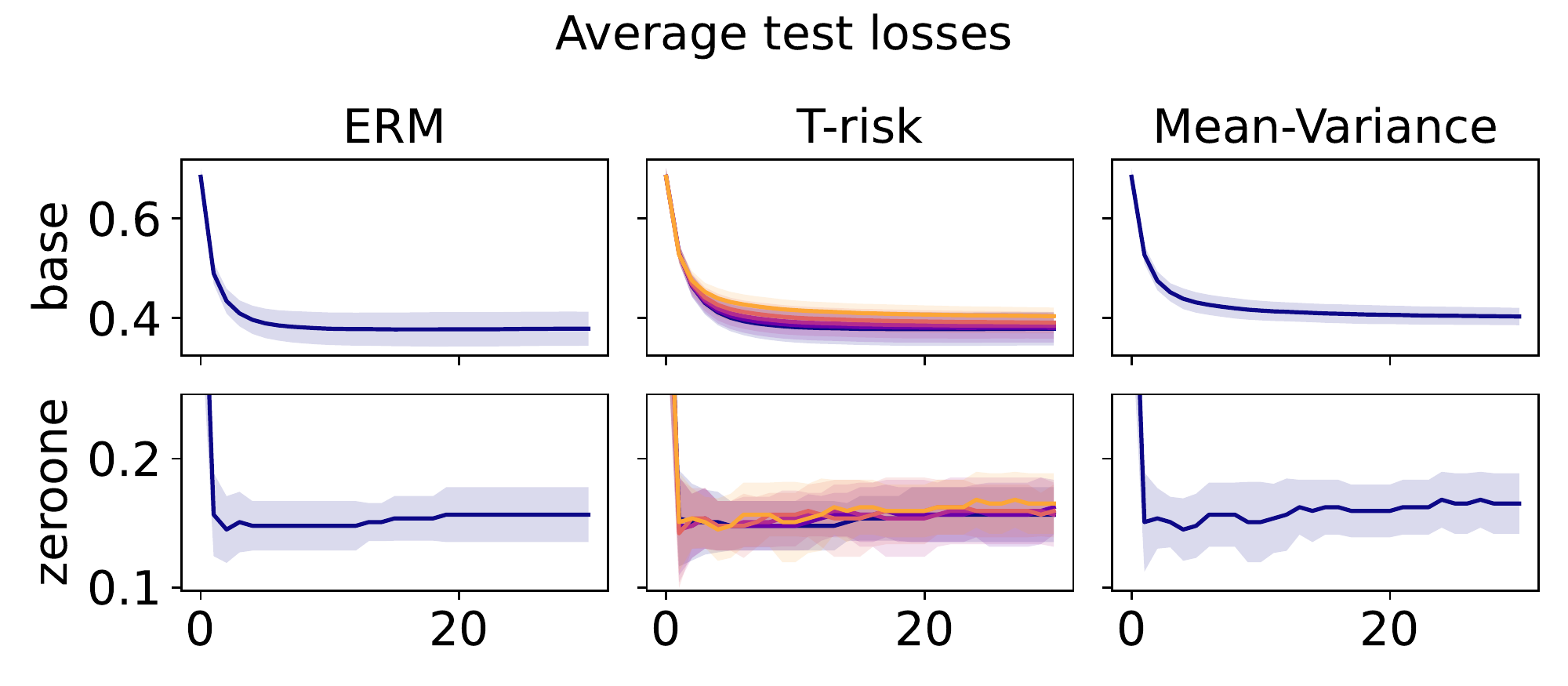}\includegraphics[width=0.5\textwidth]{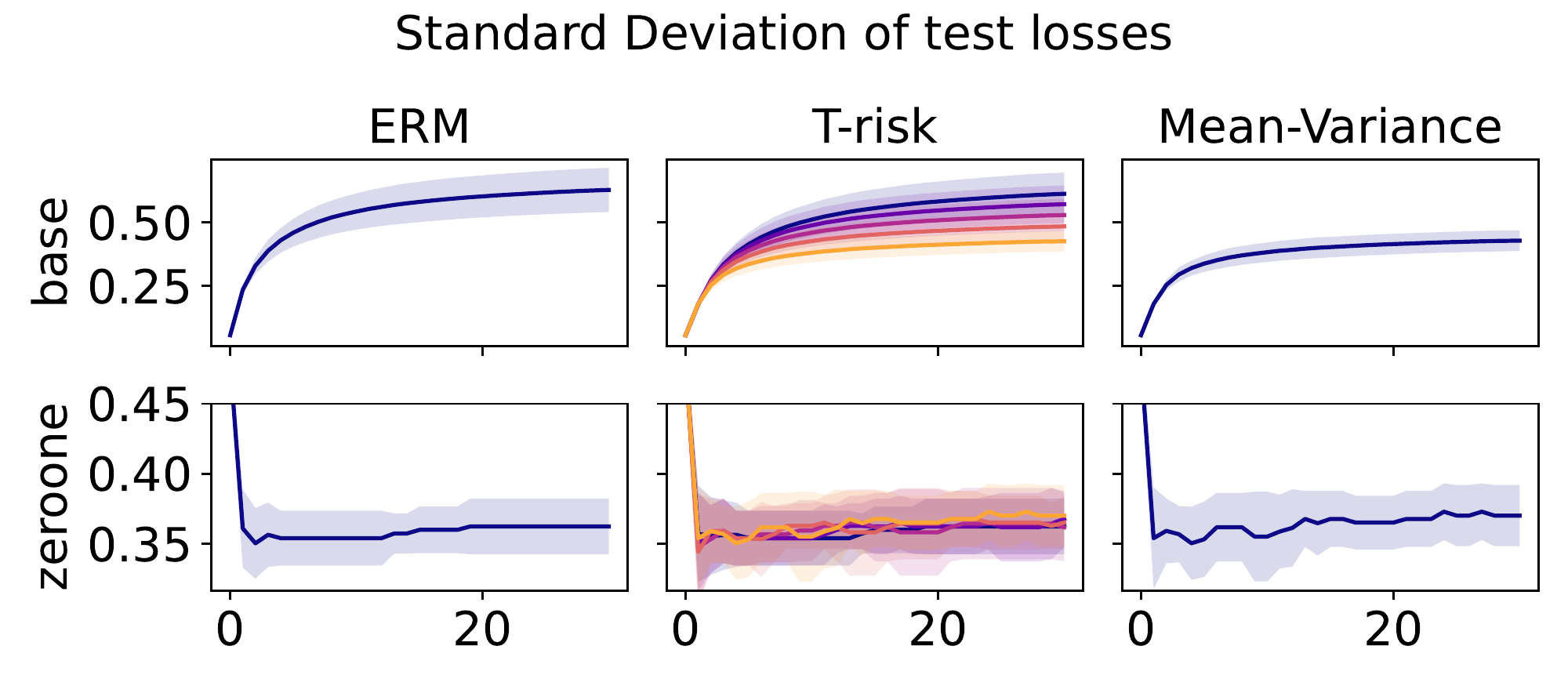}\\
\includegraphics[width=1.0\textwidth]{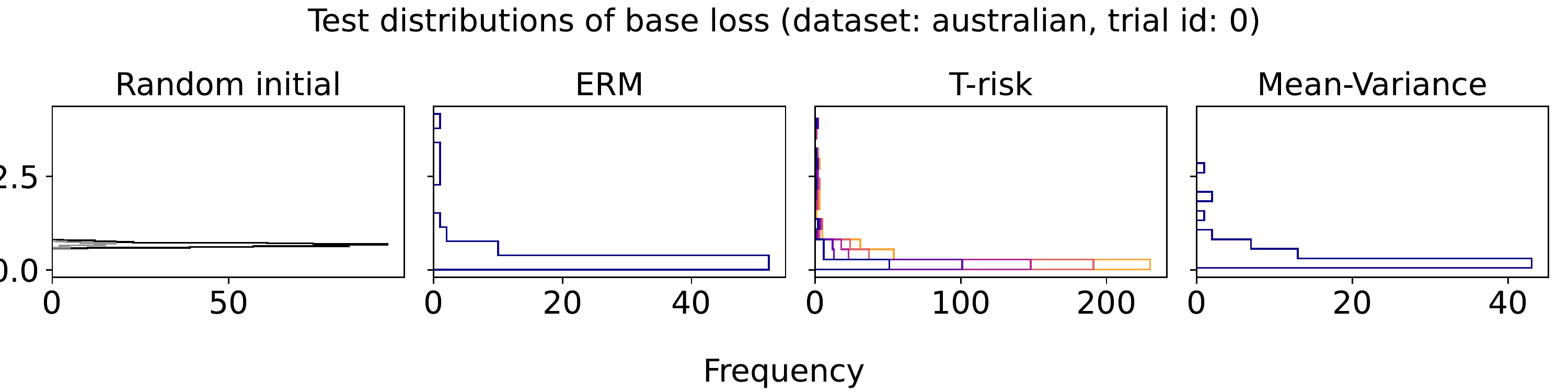}
\caption{Using T-risk to interpolate between test loss distributions (dataset: \texttt{australian}).}
\label{fig:real_australian}
\end{figure}

\begin{figure}[t]
\centering
\includegraphics[width=0.5\textwidth]{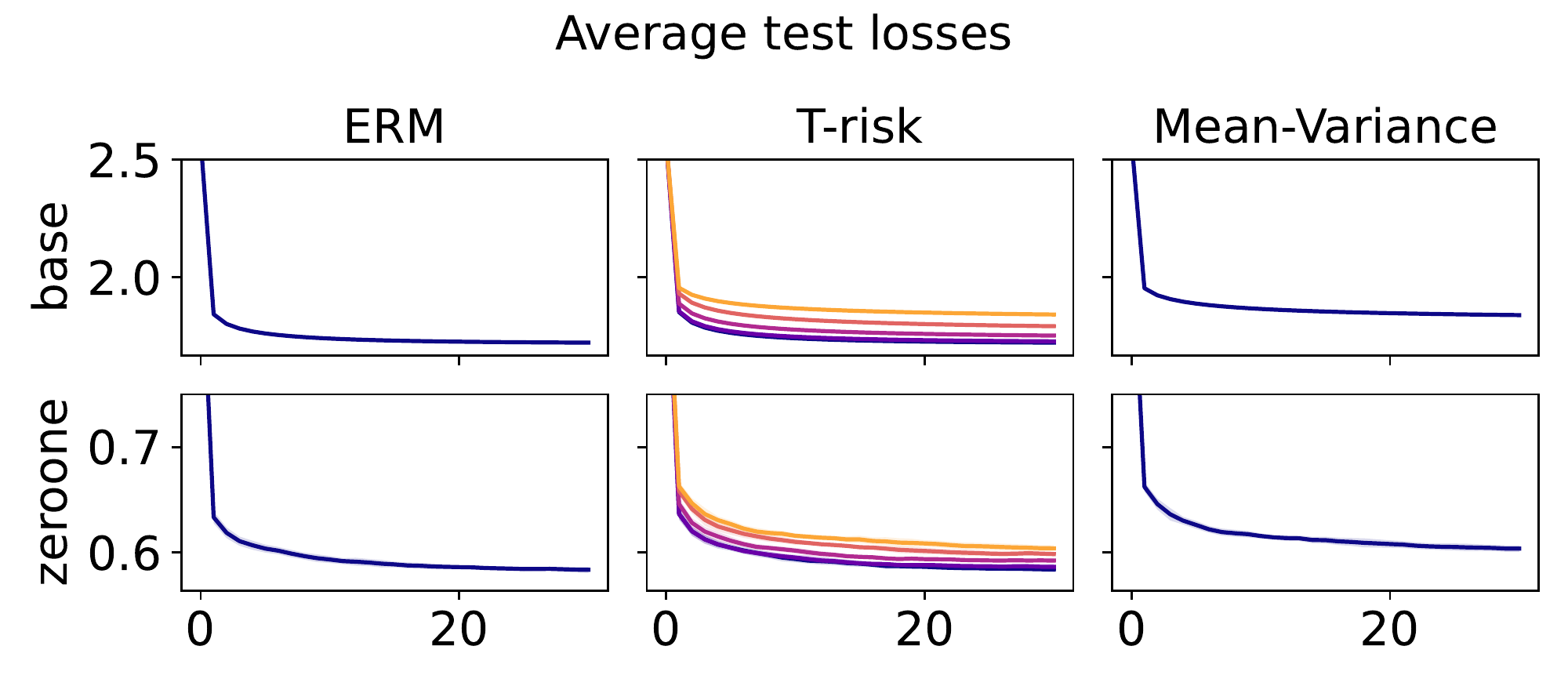}\includegraphics[width=0.5\textwidth]{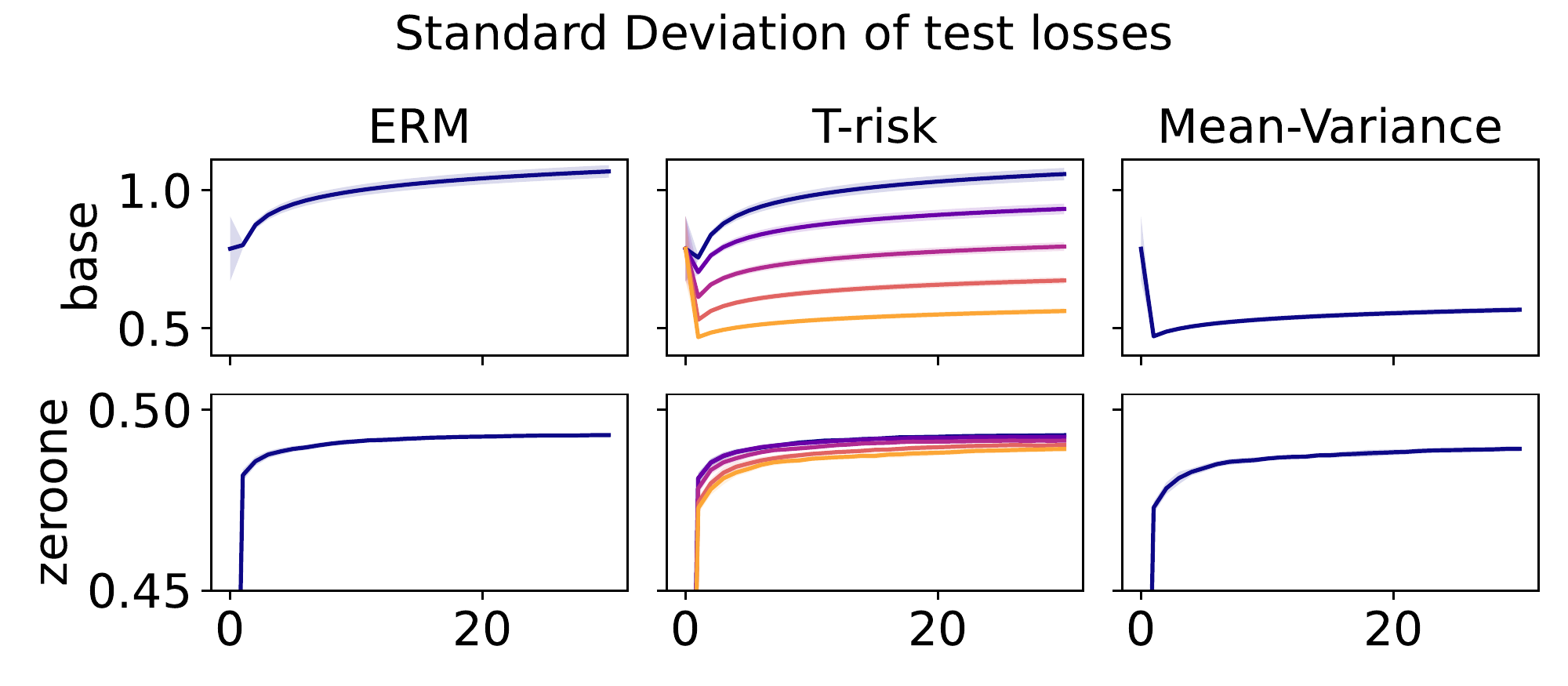}
\caption{Using T-risk to interpolate between test loss distributions (dataset: \texttt{cifar10}).}
\label{fig:real_cifar10}
\end{figure}

\begin{figure}[t]
\centering
\includegraphics[width=0.5\textwidth]{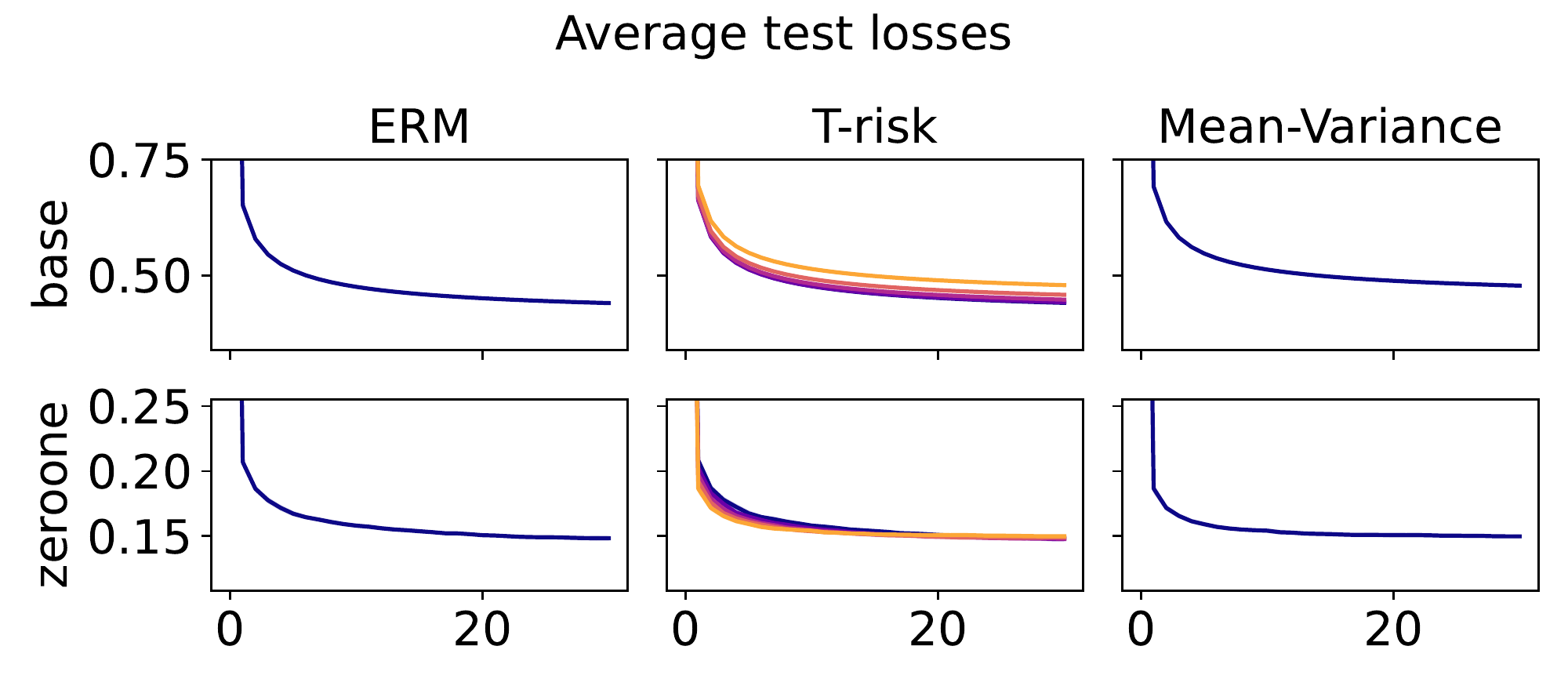}\includegraphics[width=0.5\textwidth]{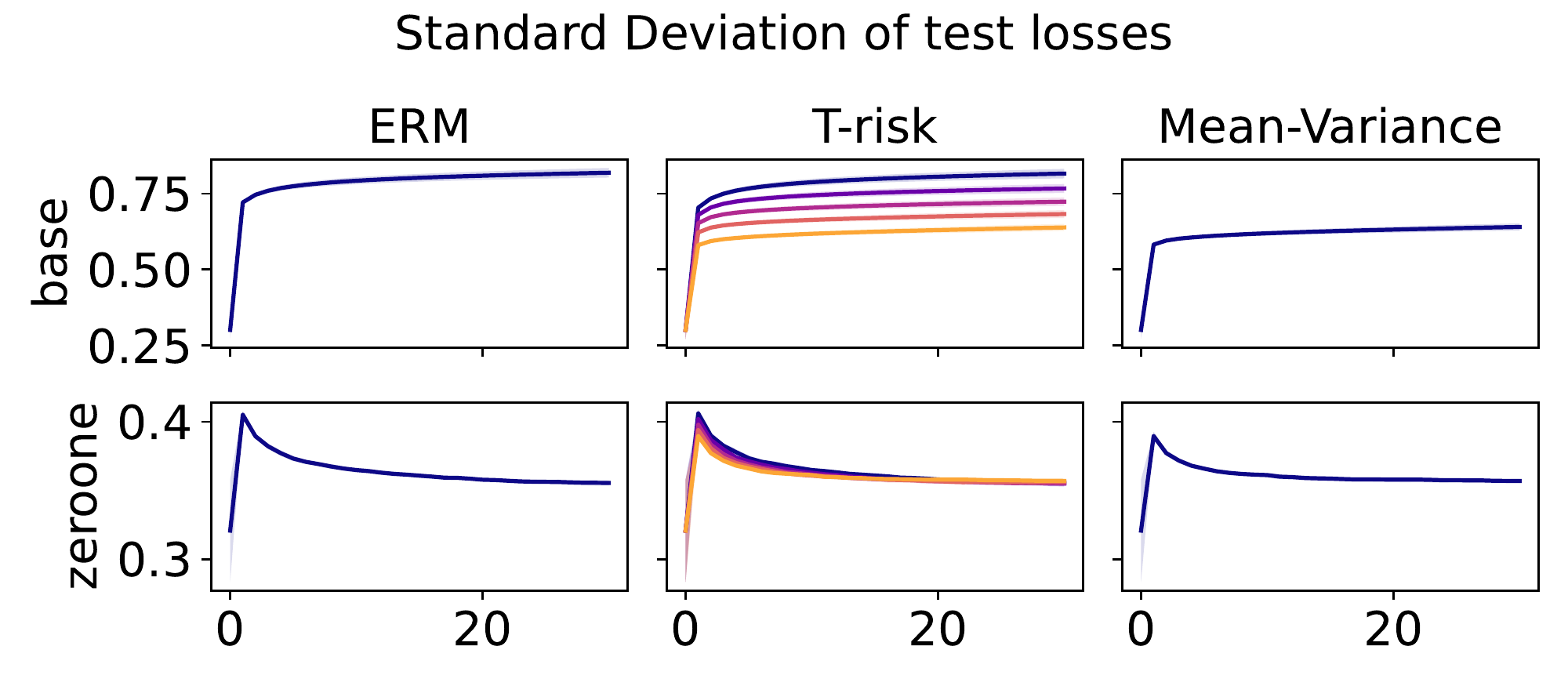}\\
\includegraphics[width=1.0\textwidth]{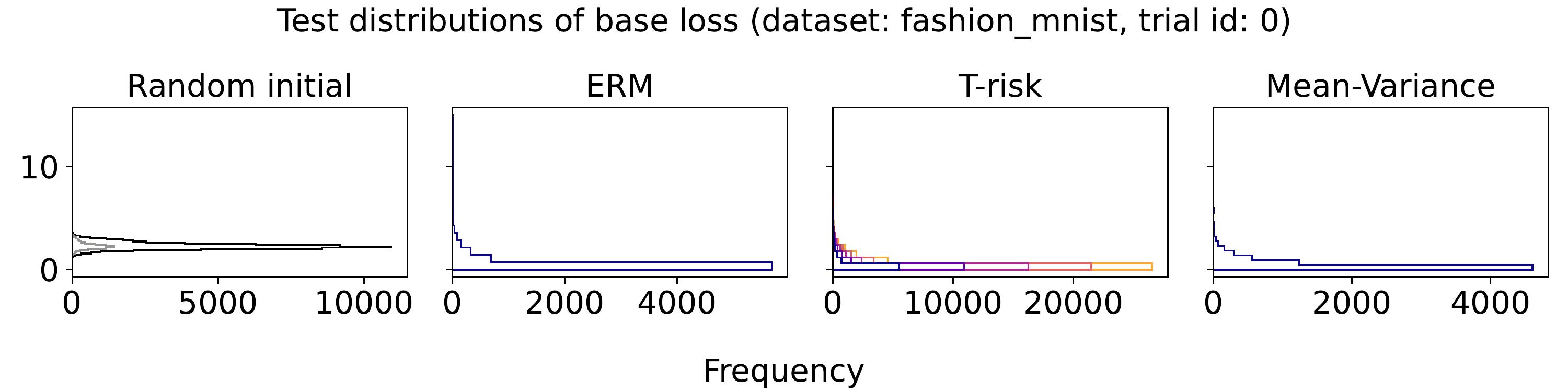}
\caption{Using T-risk to interpolate between test loss distributions (dataset: \texttt{fashion\_mnist}).}
\label{fig:real_fashion_mnist}
\end{figure}

\begin{figure}[t]
\centering
\includegraphics[width=0.5\textwidth]{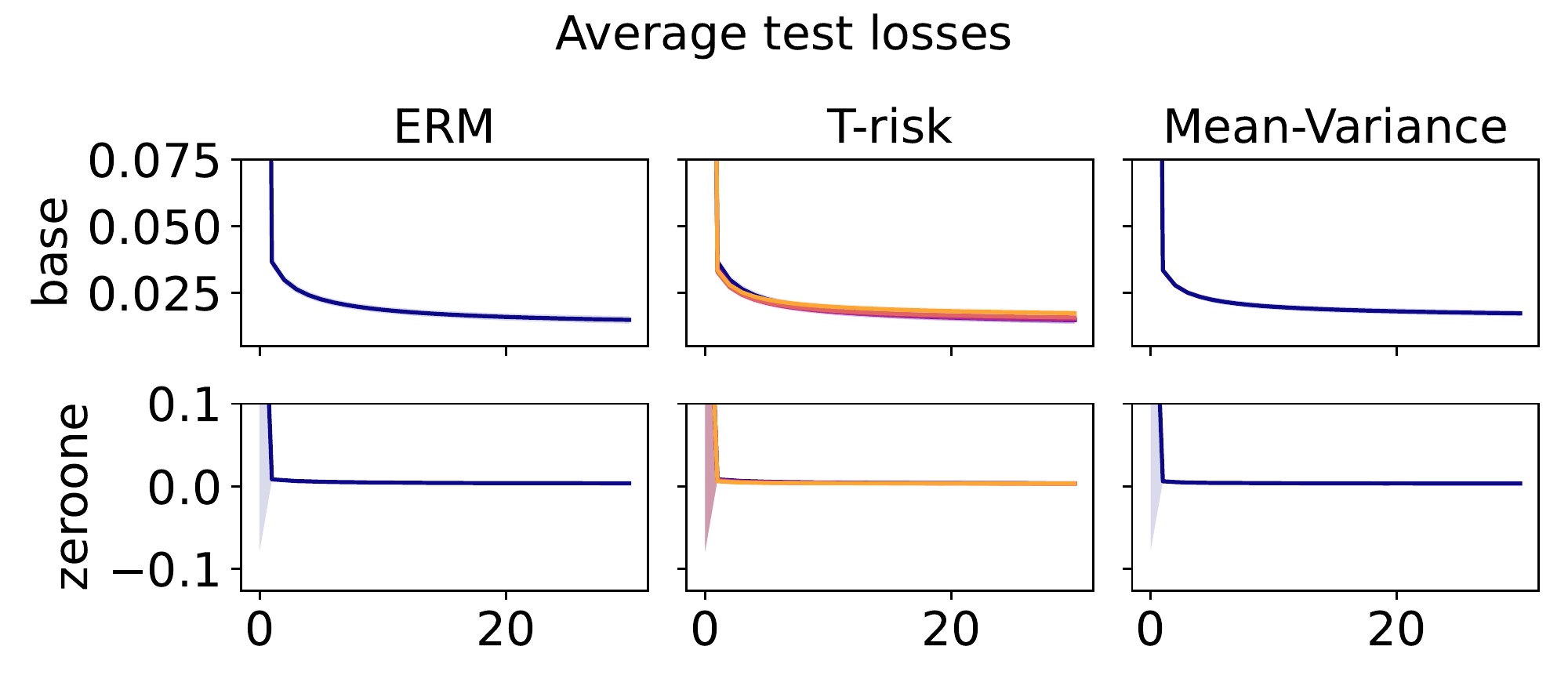}\includegraphics[width=0.5\textwidth]{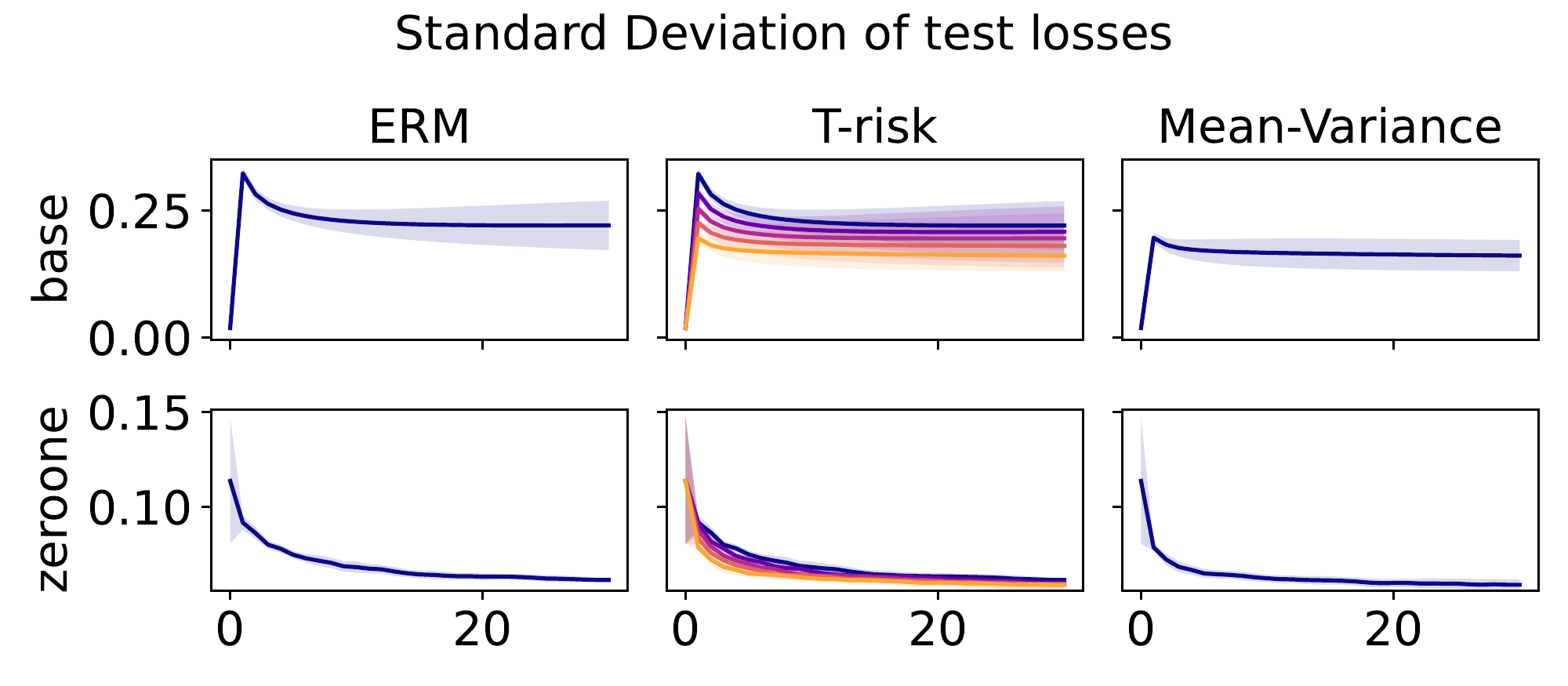}\\
\includegraphics[width=1.0\textwidth]{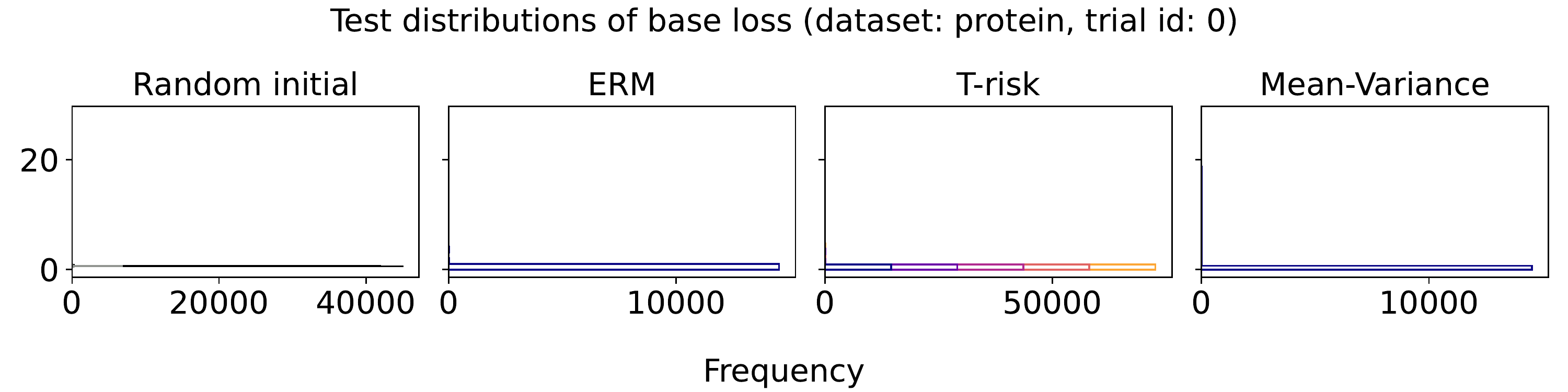}
\caption{Using T-risk to interpolate between test loss distributions (dataset: \texttt{protein}).}
\label{fig:real_protein}
\end{figure}

\clearpage

\section{Detailed proofs}\label{sec:proofs}

\subsection{Proofs of results in the main text}\label{sec:proofs_main}

\begin{proof}[Proof of Lemma \ref{lem:basic_barron}]
In this proof, without further mention, we will make regular use of the following two helper results: Lemma \ref{lem:meanvalue_general} (bounded gradient implies Lipschitz continuity) and Lemma \ref{lem:convexity_hessian} (positive definite Hessian implies convexity). For reference, the first and second derivatives of $\rho_{\sigma}$ are given in \S{\ref{sec:proofs_barron_derivs}}. We take up each $\alpha$ setting one at a time.

First, the case of $\alpha = 2$. For this case, clearly $\dv{\rho}_{\sigma}$ is unbounded, and thus $\rho_{\sigma}$ is not (globally) Lipschitz on $\RR$. On the other hand, since $\ddv{\rho}_{\sigma}(x) = 1/\sigma^{2}$, we have that $\dv{\rho}_{\sigma}$ is $\smooth$-Lipschitz with $\smooth = (1/\sigma^{2})$.

Next, the case of $\alpha = 0$. For any fixed $\sigma > 0$, in both the limits $x \to 0$ and $\lvert x \rvert \to \infty$, we have $\dv{\rho}_{\sigma}(x) \to 0$. Maximum and minimum values are achieved when $\ddv{\rho}_{\sigma}(x) = 0$, and this occurs if and only if $x^{2} = 2\sigma^{2}$. It follows from direct computation that $\dv{\rho}_{\sigma}(\pm \sqrt{2}\sigma) = \pm 1/(\sqrt{2}\sigma)$, and thus $\rho_{\sigma}$ is $\smooth$-Lipschitz with $\smooth = 1/(\sqrt{2}\sigma)$. Next, recalling that $\ddv{\rho}_{\sigma}$ takes the form
\begin{align*}
\ddv{\rho}_{\sigma}(x) = \frac{2}{x^{2}+2\sigma^{2}}\left(1 - \frac{2x^{2}}{x^{2}+2\sigma^{2}}\right),
\end{align*}
we see that this is a product of two factors, one taking values in $(0,1/\sigma^{2})$, and one taking values in $(-1,1)$. The absolute value of both of these factors is maximized when $x=0$, and so $\lvert \ddv{\rho}_{\sigma}(x) \rvert \leq \ddv{\rho}_{\sigma}(0) = 1/\sigma^{2}$, meaning that $\dv{\rho}_{\sigma}$ is $\smooth$-Lipschitz with $\smooth = 1/\sigma^{2}$. Finally, regarding convexity, we have that $\ddv{\rho}_{\sigma}(x) \geq 0$ if and only if $\lvert x \rvert \leq \sqrt{2}\sigma$.

Next, the case of $\alpha = -\infty$. For any fixed $\sigma > 0$, we have $\dv{\rho}_{\sigma}(x) \to 0$ in both the limits $x \to 0$ and $\lvert x \rvert \to \infty$. Furthermore, it is immediate that $\ddv{\rho}_{\sigma}(x) = 0$ at the points $x = \pm \sigma$. Evaluating $\dv{\rho}_{\sigma}$ at these stationary points we have $\dv{\rho}_{\sigma}(\pm \sigma) = \pm(1/\sigma)\exp(-1/2)$, and thus $\rho_{\sigma}$ is $\smooth$-Lipschitz with $\smooth = (1/\sigma)\exp(-1/2)$. Regarding bounds on $\ddv{\rho}_{\sigma}$, first note that $\ddv{\rho}_{\sigma}(x) \to 0$ as $\lvert x \rvert \to \infty$, and $\ddv{\rho}_{\sigma}(0) = 1/\sigma^{2}$. Then to identify stationary points, note that
\begin{align*}
\rho_{\sigma}^{\prime\prime\prime}(x) = \frac{1}{\sigma^{2}} \exp\left(-\frac{1}{2}\left(\frac{x}{\sigma}\right)^{2}\right)\left[ \frac{x}{\sigma^{2}}\left(\frac{x^{2}}{\sigma^{2}}-1\right) - \frac{2x}{\sigma^{2}} \right]
\end{align*}
and thus $\rho_{\sigma}^{\prime\prime\prime}(x) = 0$ if and only if $(x/\sigma)^{2}-1 = 2$, i.e., the stationary points are $x = \pm \sqrt{3}\sigma$, both of which yield the same value, namely $\ddv{\rho}_{\sigma}(\pm \sqrt{3}\sigma) = -(2/\sigma^{2})\exp(-3/2)$. Since $2\exp(-3/2) \approx 0.45 < 1$, we conclude that $\dv{\rho}_{\sigma}$ is $\smooth$-Lipschitz with $\smooth = 1/\sigma^{2}$. Finally, since $\ddv{\rho}_{\sigma}(x) \geq 0$ if and only if $\lvert x \rvert \leq \sigma$, this specifies the region on which $\rho_{\sigma}$ is convex.

Finally, all that remains is the general case of $-\infty < \alpha < 2$ where $\alpha \neq 0$. Note that in order for $\ddv{\rho}_{\sigma}(x) = 0$ to hold, we require
\begin{align*}
\frac{2(x/\sigma)^{2}}{1 + (x/\sigma)^{2}/\abs{\alpha-2}} = \frac{\abs{\alpha-2}}{1-(\alpha/2)},
\end{align*}
which via some basic algebra is equivalent to
\begin{align*}
\left(\frac{x}{\sigma}\right)^{2} = \frac{\abs{\alpha-2}}{1-\alpha}.
\end{align*}
Clearly, this is only possible when $\alpha < 1$, so we consider this sub-case first. This implies stationary points $\pm x^{\ast} \defeq \pm \sigma\sqrt{\abs{\alpha-2}/(1-\alpha)}$, for which we have
\begin{align*}
\dv{\rho}_{\sigma}(\pm x^{\ast}) = \pm \frac{1}{\sigma} \sqrt{\frac{\abs{\alpha-2}}{1-\alpha}}\left(\frac{2-\alpha}{1-\alpha}\right)^{(\alpha/2)-1} = \pm \frac{1}{\sigma}\left(\sqrt{\frac{\abs{\alpha-2}}{1-\alpha}}\right)^{\alpha-1} = \pm \frac{1}{\sigma}\left(\sqrt{\frac{1-\alpha}{\abs{\alpha-2}}}\right)^{1-\alpha}.
\end{align*}
Since $\dv{\rho}_{\sigma}(x) \to 0$ in both the limits $x \to 0$ and $\lvert x \rvert \to \infty$, we have obtained a maximum value for $\dv{\rho}_{\sigma}$ at $x^{\ast}$, thus implying for the case of $\alpha < 1$ that $\rho_{\sigma}$ is $\smooth$-Lipschitz, with a coefficient of $\smooth = (1/\sigma)(\sqrt{(1-\alpha)/\abs{\alpha-2}})^{1-\alpha}$. For the case of $\alpha = 1$, direct inspection shows
\begin{align*}
\lvert \dv{\rho}_{\sigma}(x) \rvert = \frac{\abs{x/\sigma^{2}}}{\sqrt{1 + (x/\sigma)^{2}}} = \frac{1}{\sigma^{2}\sqrt{(1/x^{2})+(1/\sigma^{2})}},
\end{align*}
a value which is maximized in the limit $\lvert x \rvert \to \infty$. As such, for $\alpha = 1$, we have that $\rho_{\sigma}$ is $\smooth$-Lipschitz with $\smooth = 1/\sigma$. For the case of $1 < \alpha < 2$, $\dv{\rho}_{\sigma}$ is unbounded. To see this, note that for $x > 0$ we have
\begin{align*}
\dv{\rho}_{\sigma}(x) = \frac{1}{\sigma^{2}} \frac{\left(1 + (x/\sigma)^{2}/\abs{\alpha-2}\right)}{\left((1/x) + x/(\sigma^{2}\abs{\alpha-2})\right)} \geq \left(\frac{1}{\sigma^{2+\alpha}\sqrt{\abs{\alpha-2}^{\alpha}}}\right) \frac{x^{\alpha}}{\left((1/x) + x/(\sigma^{2}\abs{\alpha-2})\right)},
\end{align*}
and since $\alpha > 1$, sending $x \to \infty$ clearly implies $\dv{\rho}_{\sigma}(x) \to \infty$, and this means that $\rho_{\sigma}$ cannot be Lipschitz on $\RR$ when $\alpha > 1$. As for bounds on $\ddv{\rho}_{\sigma}$, recall that
\begin{align*}
\ddv{\rho}_{\sigma}(x) = \frac{1}{\sigma^{2}}\underbrace{\left(1 + \frac{(x/\sigma)^{2}}{\abs{\alpha-2}} \right)^{(\alpha/2)-1}}_{A(x)} \left(1 - \underbrace{\frac{1-(\alpha/2)}{\abs{\alpha-2}} \frac{2(x/\sigma)^{2}}{1 + (x/\sigma)^{2}/\abs{\alpha-2}}}_{B(x)}\right)
\end{align*}
where we have introduced the labels $A(x)$ and $B(x)$ just as convenient notation. Fixing any $\sigma > 0$, first note that since $\alpha < 2$, we have $(\alpha/2)-1 < 0$ and thus $0 \leq A(x) \leq 1$. Next, direct inspection shows $0 \leq B(x) \leq 2(1-(\alpha/2))$. These two facts immediately imply an upper bound $\ddv{\rho}_{\sigma}(x) \leq 1/\sigma^{2}$ and a lower bound $\ddv{\rho}_{\sigma}(x) \geq -(1-\alpha)/\sigma^{2}$, both of which hold for any $\alpha < 2$. Furthermore, for the case of $1 \leq \alpha < 2$, we thus have $0 \leq \ddv{\rho}_{\sigma}(x) \leq 1/\sigma^{2}$. When $\alpha < 1$ however, $\ddv{\rho}_{\sigma}$ can be negative. To get matching lower bounds requires $A(x)(1-B(x)) \geq -1$, or $A(x)(B(x)-1) \leq 1$. To study conditions under which this holds, first note that $B(x)$ can be re-written as
\begin{align*}
B(x) = \left(\frac{2-\alpha}{\abs{\alpha-2}}\right)\frac{(x/\sigma)^{2}}{1+\frac{(x/\sigma)^{2}}{\abs{\alpha-2}}} = \frac{(x/\sigma)^{2}}{1+\frac{(x/\sigma)^{2}}{\abs{\alpha-2}}},
\end{align*}
and thus we have
\begin{align}\label{eqn:basic_barron_0}
A(x)(B(x) - 1) = \frac{(x/\sigma)^{2}}{\left(1+\frac{(x/\sigma)^{2}}{\abs{\alpha-2}}\right)^{2-(\alpha/2)}} - \frac{1}{\left(1+\frac{(x/\sigma)^{2}}{\abs{\alpha-2}}\right)^{1-(\alpha/2)}}.
\end{align}
To get a more convenient upper bound on this, observe that $(1+x)^{1-(\alpha/2)} \leq (1+x)^{2-(\alpha/2)}$ for any $x \geq 0$ and $-\infty \leq \alpha \leq 2$. It follows immediately that
\begin{align}\label{eqn:basic_barron_1}
A(x)(B(x) - 1) \leq \frac{(x/\sigma)^{2}-1}{\left(1+\frac{(x/\sigma)^{2}}{\abs{\alpha-2}}\right)^{2-(\alpha/2)}}.
\end{align}
To get the right-hand side of (\ref{eqn:basic_barron_1}) to be no greater than $1$ is equivalent to
\begin{align}\label{eqn:basic_barron_2}
(x/\sigma)^{2}-1 \leq \left(1+\frac{(x/\sigma)^{2}}{\abs{\alpha-2}}\right)^{2-(\alpha/2)}.
\end{align}
For the case of $0 \leq \alpha < 1$, note that $1 \leq \abs{\alpha-2} = 2-\alpha < 2-(\alpha/2)$, and using the helper inequality (\ref{eqn:helper_exponential_1}), we have
\begin{align*}
\left(1+\frac{(x/\sigma)^{2}}{\abs{\alpha-2}}\right)^{2-(\alpha/2)} \geq \left(1+\frac{(x/\sigma)^{2}}{\abs{\alpha-2}}\right)^{\abs{\alpha-2}} \geq 1+(x/\sigma)^{2} > (x/\sigma)^{2}-1,
\end{align*}
which implies (\ref{eqn:basic_barron_2}) for $0 \leq \alpha < 1$. All that remains is the case of $-\infty < \alpha < 0$, which requires a bit more care. Returning to the exact form of $A(x)(B(x) - 1)$ given in (\ref{eqn:basic_barron_0}), note that the inequality
\begin{align}\label{eqn:basic_barron_3}
(x/\sigma)^{2} - \left(1 + \frac{(x/\sigma)^{2}}{\abs{\alpha-2}}\right) \leq \left(1+\frac{(x/\sigma)^{2}}{\abs{\alpha-2}}\right)^{2-(\alpha/2)}
\end{align}
is equivalent to the desired property, i.e., $A(x)(B(x) - 1) \leq 1 \iff \text{(\ref{eqn:basic_barron_3})}$. Using Bernoulli's inequality (\ref{eqn:helper_bernoulli}), we can bound the right-hand side of (\ref{eqn:basic_barron_3}) as
\begin{align*}
\left(1+\frac{(x/\sigma)^{2}}{\abs{\alpha-2}}\right)^{2-(\alpha/2)} \geq 1 + \left(\frac{2-(\alpha/2)}{\abs{\alpha-2}}\right)(x/\sigma)^{2}.
\end{align*}
Subtracting the left-hand side of (\ref{eqn:basic_barron_3}) from the right-hand side of the preceding inequality, we obtain
\begin{align}
\nonumber
\left(1+\frac{(x/\sigma)^{2}}{\abs{\alpha-2}}\right)^{2-(\alpha/2)} - \left[(x/\sigma)^{2} - 1 - \frac{(x/\sigma)^{2}}{\abs{\alpha-2}}\right] & \geq 2 + \left(\frac{2-(\alpha/2)}{\abs{\alpha-2}} - 1 + \frac{1}{\abs{\alpha-2}}\right)(x/\sigma)^{2}\\
\label{eqn:basic_barron_4}
& = 2 + \left(\frac{1-(\abs{\alpha}/2)}{2+\abs{\alpha}}\right)(x/\sigma)^{2},
\end{align}
where the second step uses the fact that for $\alpha < 0$, we can write $\abs{\alpha-2} = 2 + \abs{\alpha}$ and $2-(\alpha/2) = 2 + (\abs{\alpha}/2)$. Note that the right-hand side of (\ref{eqn:basic_barron_4}) is non-negative for all $x \in \RR$ whenever $-2 \leq \alpha < 0$, which via (\ref{eqn:basic_barron_3}) tells us that $A(x)(B(x) - 1) \leq 1$ indeed holds in this case as well. For the case of $-\infty < \alpha < -2$, note that showing (\ref{eqn:basic_barron_3}) holds is equivalent to showing $f_{\alpha}(x) \geq 0$ for all $x \geq 0$, where for convenience we define the polynomial
\begin{align*}
f_{\alpha}(x) \defeq 1 + \left(\frac{1}{2+\abs{\alpha}} - 1\right)x + \left(1 + \frac{x}{2+\abs{\alpha}}\right)^{2+(\abs{\alpha}/2)}.
\end{align*}
The first derivative is
\begin{align*}
\dv{f}_{\alpha}(x) = \frac{2+(\abs{\alpha}/2)}{2+\abs{\alpha}}\left(1 + \frac{x}{2+\abs{\alpha}}\right)^{1+(\abs{\alpha}/2)} + \frac{1}{2+\abs{\alpha}} - 1,
\end{align*}
and with this form in hand, solving for $f_{\alpha}(x)=0$, it is straightforward to confirm that $x_{\alpha}^{\ast}$ given below is a stationary point:
\begin{align*}
x_{\alpha}^{\ast} \defeq \left(2 + \abs{\alpha}\right)\left[ \left(\frac{1+\abs{\alpha}}{2+(\abs{\alpha}/2)}\right)^{\frac{1}{1+(\abs{\alpha}/2)}} - 1 \right].
\end{align*}
Furthermore, it is clear that $\ddv{f}_{\alpha} \geq 0$, implying that $f_{\alpha}$ is convex, and that $x_{\alpha}^{\ast}$ is a minimum. As such, the minimum value taken by $f_{\alpha}$ on $\RR_{+}$ is
\begin{align*}
f_{\alpha}(x_{\alpha}^{\ast}) & = \left(\frac{1+\abs{\alpha}}{2+(\abs{\alpha}/2)}\right)^{\frac{1}{1+(\abs{\alpha}/2)}} - \left(2 + \abs{\alpha}\right)\left[ \left(\frac{1+\abs{\alpha}}{2+(\abs{\alpha}/2)}\right)^{\frac{1}{1+(\abs{\alpha}/2)}} - 1 \right] + \left(\frac{1+\abs{\alpha}}{2+(\abs{\alpha}/2)}\right)^{\frac{2+(\abs{\alpha}/2)}{1+(\abs{\alpha}/2)}}\\
& = \left(2 + \abs{\alpha}\right) + \left(\frac{1+\abs{\alpha}}{2+(\abs{\alpha}/2)}\right)^{\frac{2+(\abs{\alpha}/2)}{1+(\abs{\alpha}/2)}} - \left(1+\abs{\alpha}\right)\left(\frac{1+\abs{\alpha}}{2+(\abs{\alpha}/2)}\right)^{\frac{1}{1+(\abs{\alpha}/2)}}\\
& = \left(2 + \abs{\alpha}\right) - \left(1 + \abs{\alpha}\right)\left(\frac{1+\abs{\alpha}}{2+(\abs{\alpha}/2)}\right)^{\frac{1}{1+(\abs{\alpha}/2)}}\left[ 1 - \frac{1}{2+(\abs{\alpha}/2)} \right]\\
& = 1 + \left(1 + \abs{\alpha}\right)\left[ 1 - \left(\frac{1+\abs{\alpha}}{2+(\abs{\alpha}/2)}\right)^{\frac{1}{1+(\abs{\alpha}/2)}}\left[ 1 - \frac{1}{2+(\abs{\alpha}/2)} \right] \right].
\end{align*}
We require $f_{\alpha}(x_{\alpha}^{\ast}) \geq 0$ for all $-\infty < \alpha < -2$. From the preceding equalities, note that a simple sufficient condition for $f_{\alpha}(x_{\alpha}^{\ast}) \geq 1$ is
\begin{align*}
\left(\frac{1+\abs{\alpha}}{2+(\abs{\alpha}/2)}\right)^{\frac{1}{1+(\abs{\alpha}/2)}}\left[ 1 - \frac{1}{2+(\abs{\alpha}/2)} \right] \leq 1
\end{align*}
or equivalently
\begin{align}\label{eqn:basic_barron_5}
\left(1 - \frac{1}{2+(\abs{\alpha}/2)}\right)^{1+(\abs{\alpha}/2)} \leq \left(\frac{2+(\abs{\alpha}/2)}{1+\abs{\alpha}}\right).
\end{align}
Applying the helper inequality (\ref{eqn:helper_exponential_2}) to the left-hand side of (\ref{eqn:basic_barron_5}), we have
\begin{align*}
\left(1 - \frac{1}{2+(\abs{\alpha}/2)}\right)^{1+(\abs{\alpha}/2)} \leq 1 + \frac{\left(\frac{-(1+(\abs{\alpha}/2))}{2+(\abs{\alpha}/2)}\right)}{\left(1 - \frac{-\abs{\alpha}/2}{2+(\abs{\alpha}/2)}\right)} = 1 - \left(\frac{1+(\abs{\alpha}/2)}{2+\abs{\alpha}}\right) & = \frac{1+(\abs{\alpha}/2)}{2+\abs{\alpha}}\\
& \leq \frac{2+(\abs{\alpha}/2)}{1+\abs{\alpha}}.
\end{align*}
This is precisely the desired inequality (\ref{eqn:basic_barron_5}), implying $f_{\alpha}(x_{\alpha}^{\ast}) \geq 1 > 0$ for all $-\infty < \alpha < -2$, and in fact all real $\alpha < 0$. To summarize, we have $A(x)(B(x)-1) \leq 1$ for all $x \in \RR$, and thus the desired $1/\sigma^{2}$-smoothness result follows, concluding the proof.
\end{proof}

\begin{proof}[Proof of Lemma \ref{lem:dsp_finite}]
Let $\rdv{X}$ denote any $\sigmafield$-measurable random variable. The continuity of $\rho$ implies that the integral $\exx_{\ddist}\rho_{\sigma}(\rdv{X}-\theta)$ exists for any $\sigma > 0$; we just need to prove it is finite.\footnote{This uses the fact that any composition of (Borel) measurable functions is itself measurable \citep[Lem.~1.5.7]{ash2000a}.} Since we are taking $\rho$ from the Barron class (\ref{eqn:barron}), we consider each $\alpha$ setting separately. Starting with $\alpha = 2$, note that
\begin{align*}
\rho_{\sigma}(\rdv{X}-\theta;2) = \frac{1}{2}\left(\frac{\rdv{X}-\theta}{\sigma}\right)^{2}
\end{align*}
and thus $\exx_{\ddist}\rdv{X}^{2} < \infty$ is sufficient and necessary. For $\alpha = 0$, first note that we have
\begin{align*}
\rho_{\sigma}(\rdv{X}-\theta;0) = \log\left(1 + \frac{1}{2}\left(\frac{\rdv{X}-\theta}{\sigma}\right)^{2}\right).
\end{align*}
Let $f_{1}(x) \defeq \log(1+x)$ and $f_{2}(x) \defeq x^{c} / c$, where $0 < c < 1$. Note that $f_{1}(0) = f_{2}(0) = 0$, and furthermore that for any $x > 0$,
\begin{align*}
\dv{f}_{1}(x) = \frac{1}{1+x} < \left(\frac{1}{1+x}\right)^{1-c} < \left(\frac{1}{x}\right)^{1-c} = \dv{f}_{2}(x).
\end{align*}
We may thus conclude that $f_{1}(x) \leq f_{2}(x)$ for all $x \geq 0$, and thus for any $0 < c < 1$ we have
\begin{align*}
\log\left(1 + \frac{1}{2}\left(\frac{\rdv{X}-\theta}{\sigma}\right)^{2}\right) \leq \frac{1}{c}\left(\frac{\rdv{X}-\theta}{\sqrt{2}\sigma}\right)^{2c}.
\end{align*}
It follows that to ensure $\exx_{\ddist}\rho_{\sigma}(\rdv{X}-\theta;0) < \infty$, it is sufficient if we assume $\exx_{\ddist} \abs{\rdv{X}}^{c} < \infty$ for some $c > 0$. Proceeding to the case of $\alpha = -\infty$, we have
\begin{align*}
\rho_{\sigma}(\rdv{X}-\theta;-\infty) = 1 - \exp\left(-\frac{1}{2}\left(\frac{\rdv{X}-\theta}{\sigma}\right)^{2}\right).
\end{align*}
Any composition of measurable functions is measurable, and since the right-hand side is bounded above by $1$ and below by $0$, we have that $\rho_{\sigma}(\rdv{X}-\theta;-\infty)$ is $\ddist$-integrable without requiring any extra assumptions on $\rdv{X}$ besides measurability. All that remains for the Barron class is the case of non-zero $-\infty < \alpha < 2$, where we have
\begin{align*}
\rho_{\sigma}(\rdv{X}-\theta;\alpha) = \frac{\abs{\alpha-2}}{\alpha} \left(\left(1 + \frac{1}{\abs{\alpha-2}}\left(\frac{\rdv{X}-\theta}{\sigma}\right)^{2} \right)^{\alpha/2} - 1\right).
\end{align*}
Let us break this into two cases: $-\infty < \alpha < 0$ and $0 < \alpha < 2$. Starting with the former case, this is easy since
\begin{align*}
\left(1 + x^{2}\right)^{\alpha/2} = \frac{1}{\left(\sqrt{1+x^{2}}\right)^{-\alpha}}
\end{align*}
which is bounded above by $1$ and below by $0$ for any $\alpha < 0$ and $x \in \RR$, which means the random variable $\rho_{\sigma}(\rdv{X}-\theta;\alpha)$ is $\ddist$-integrable without any extra assumptions on $\rdv{X}$. As for the latter case of $0 < \alpha < 2$, first note that the monotonicity of $(\cdot)^{\alpha/2}$ on $\RR_{+}$ implies
\begin{align*}
(1 + x^{2})^{\alpha/2} \geq \abs{x}^{\alpha}
\end{align*}
which means $\exx_{\ddist}\abs{\rdv{X}}^{\alpha} < \infty$ is necessary. That this condition is also sufficient is immediate from the form of $\rho_{\sigma}(\rdv{X}-\theta;\alpha)$ just given. This concludes the proof; the desired result stated in the lemma follows from setting $\rdv{X} = \loss(h)$ and the observation that the choice of $\theta \in \RR$ in the preceding discussion was arbitrary.
\end{proof}

\begin{proof}[Proof of Lemma \ref{lem:dsp_diffable}]
Referring to the derivatives (\ref{eqn:barron_d1})--(\ref{eqn:barron_d2}) in \S{\ref{sec:proofs_barron_derivs}}, we know that $\dv{\rho}_{\sigma}$ is measurable, and by the proof of Lemma \ref{lem:basic_barron}, we know that $\Abs{\dv{\rho}}_{\infty} < \infty$ for all $\alpha \leq 1$. Thus, as long as $\loss(h)$ is $\sigmafield$-measurable, we have that $\dv{\rho}_{\sigma}(\loss(h)-\theta)$ is $\ddist$-integrable. For the case of $1 < \alpha \leq 2$, note that $\abs{\dv{\rho}_{\sigma}(x)} \leq \abs{x}/\sigma^{2}$ holds, meaning that $\exx_{\ddist}\abs{\loss(h)} < \infty$ implies integrability. Similarly for the second derivatives, from the proof of Lemma \ref{lem:basic_barron}, we see that $\Abs{\ddv{\rho}}_{\infty} < \infty$ for all $-\infty \leq \alpha \leq 1$, implying the $\ddist$-integrability of $\ddv{\rho}_{\sigma}(\loss(h)-\theta)$.

The Leibniz integration property follows using a straightforward dominated convergence argument, which we give here for completeness. Letting $(a_k)$ be any non-zero real sequence such that $a_k \to 0$, we can write
\begin{align*}
\frac{\dif}{\dif\theta}\dsp_{\rho}(h;\theta) & = \lim\limits_{k \to \infty} \frac{\dsp_{\rho}(h;\theta+a_k)-\dsp_{\rho}(h;\theta)}{a_k}\\
& = \lim\limits_{k \to \infty} \exx_{\ddist}\left[ \frac{\rho_{\sigma}(\loss(h)-(\theta+a_k))-\rho_{\sigma}(\loss(h)-\theta)}{a_k} \right].
\end{align*}
For notational convenience, let us denote the key sequence of functions by
\begin{align*}
f_{k} \defeq \frac{\rho_{\sigma}(\loss(h)-(\theta+a_k))-\rho_{\sigma}(\loss(h)-\theta)}{a_k}
\end{align*}
and note that $f_{k} \to f \defeq -\dv{\rho}_{\sigma}(\loss(h)-\theta)$ pointwise as $k \to \infty$. We can then say the following: for all $k$, we have that
\begin{align*}
\abs{f_{k}} \leq \sup_{0 < c < 1}\abs{\dv{\rho}_{\sigma}(\loss(h)-(\theta + c a_k))} \leq g \defeq \abs{\dv{\rho}_{\sigma}(\loss(h)-\theta^{\prime})}
\end{align*}
for an appropriate choice of $\theta^{\prime} \in \RR$. The first inequality follows from the helper Lemma \ref{lem:meanvalue_general}. We can always find an appropriate $\theta^{\prime}$ because the sequence $(a_k)$ is bounded and $\dv{\rho}$ is eventually monotone, regardless of the choice of $\alpha$. With the fact $\abs{f_{k}} \leq g$ in hand, recall that we have already proved that $\exx_{\ddist}g < \infty$ under the assumptions we have made, and thus $\exx_{\ddist}f_{k} \to \exx_{\ddist}f$ by dominated convergence.\footnote{See for example \citet[Thm.~1.6.9]{ash2000a}.} As such, we have
\begin{align*}
\frac{\dif}{\dif\theta}\dsp_{\rho}(h;\theta) = \lim\limits_{k \to \infty} \exx_{\ddist}f_{k} = \exx_{\ddist}f = -\exx_{\ddist}\dv{\rho}_{\sigma}(\loss(h)-\theta),
\end{align*}
which is the desired Leibniz property for the first derivative. A completely analogous argument holds for the second derivative, yielding the desired result.
\end{proof}

\begin{proof}[Proof of Lemma \ref{lem:optimal_threshold_existence}]
From Lemma \ref{lem:dsp_diffable}, we know that the map $\theta \mapsto \dsp_{\rho}(h;\theta)$ is differentiable and thus continuous. Using continuity, taking any $a < b$ and constructing a closed interval $[a,b]$, the Weierstrass extreme value theorem tells us that $\dsp_{\rho}(h;\theta)$ achieves its maximum and minimum on $[a,b]$. Furthermore, note that $\rho$ taken from the Barron class (\ref{eqn:barron}) satisfies all the requirements of our helper Lemma \ref{lem:dsp_coercive}, and thus implies $\dsp_{\rho}(h;\theta) \to \sup\{\rho(x): x \in \RR\}$ as $\abs{\theta} \to \infty$. We can thus always take the interval $[a,b]$ wide enough that
\begin{align*}
\theta \notin [a,b] \implies \dsp_{\rho}(h;\theta) \geq \max_{a \leq x \leq b}\dsp_{\rho}(h;x) \geq \min_{a \leq x \leq b}\dsp_{\rho}(h;x).
\end{align*}
This proves the existence of a minimizer of $\theta \mapsto \dsp_{\rho}(h;\theta)$ on $\RR$.

Next, considering the T-risk $\risk_{\rho}(h;\theta,\eta)$ and minimization with respect to $\theta$, since we are doing unconstrained optimization, any solution $\mt_{\rho}(h;\eta)$ must satisfy $\eta + \dif_{\theta}\dsp_{\rho}(h;\mt_{\rho}(h;\eta)) = 0$, where $\dif_{\theta} \defeq \dif / \dif\theta$. Using Lemma \ref{lem:dsp_diffable} again, this can be equivalently re-written as
\begin{align}\label{eqn:optimal_threshold_existence_0}
\exx_{\ddist}\dv{\rho}_{\sigma}\left(\loss(h)-\mt_{\rho}(h;\eta)\right) = \eta.
\end{align}
When $\alpha > 1$, the derivative of the dispersion function has unbounded range, i.e., $\dv{\rho}_{\sigma}(\RR) = \RR$. As such, an argument identical to that used in the proof of Lemma \ref{lem:dsp_coercive} implies that for any $\eta \in \RR$, we can always find a $\theta_{\eta}(h) \in \RR$ such that (\ref{eqn:optimal_threshold_existence_0}) holds, recalling that continuity follows via Lemma \ref{lem:dsp_diffable}. Combining this with convexity gives us a valid solution. The special case of $\alpha = 1$ requires additional conditions, since from Lemma \ref{lem:basic_barron}, we know that in this case $\abs{\dv{\rho}_{\sigma}} \leq 1/\sigma$, and thus by an analogous argument, whenever $\abs{\mt_{\rho}(h;\eta)} < 1/\sigma$ we can find a finite solution.

To prove uniqueness under $1 \leq \alpha \leq 2$, direct inspection of the second derivative in (\ref{eqn:barron_d2}) shows us that $\ddv{\rho}_{\sigma}(x) > 0$ on $\RR$ whenever we have
\begin{align*}
\frac{2(x/\sigma)^{2}}{\left(1+\frac{(x/\sigma)^{2}}{\abs{\alpha-2}}\right)} < \frac{\abs{\alpha-2}}{1-(\alpha/2)}.
\end{align*}
Re-arranging the above inequality yields an equivalent condition of $(x/\sigma)^{2}(1-\alpha) < \abs{\alpha-2}$, a condition which holds on $\RR$ if and only if $1 \leq \alpha \leq 2$. Since $\ddv{\rho}_{\sigma}$ is positive on $\RR$, this implies that $\exx_{\ddist}\ddv{\rho}_{\sigma}(\loss(h)-\theta) > 0$ for all $\theta \in \RR$. Using Lemma \ref{lem:dsp_diffable}, we have that $\exx_{\ddist}\ddv{\rho}_{\sigma}(\loss(h)-\theta)$ is equal to the second derivative of $\dsp_{\rho}(h;\theta)$ with respect to $\theta$, which implies that $\theta \mapsto \dsp_{\rho}(h;\theta)$ and $\theta \mapsto \dsp_{\rho}(h;\theta) + \eta\theta$ are strictly convex on $\RR$, and thus their minimum must be unique.\footnote{See for example \citet[Sec.~3.1.4]{boyd2004ConvOpt}.}
\end{proof}

\begin{proof}[Proof of Lemma \ref{lem:axiom_check}]
For random loss $\loss$, using Lemma \ref{lem:dsp_diffable}, first-order optimality conditions require
\begin{align}\label{eqn:axiom_check_0}
\exx_{\ddist}\dv{\rho}_{\sigma}\left(\loss-\mloc_{\rho}(\loss)\right) = 0, \qquad \exx_{\ddist}\dv{\rho}_{\sigma}\left(\loss-\mt_{\rho}(\loss;\eta)\right) = \eta.
\end{align}
If these conditions hold, then from direct inspection, the same conditions will clearly hold if we replace $\loss$ by $\loss + c$, $\mloc_{\rho}(\loss)$ by $\mloc_{\rho}(\loss) + c$, and $\mt_{\rho}(\loss;\eta)$ by $\mt_{\rho}(\loss;\eta)+c$. This implies both translation-invariance of the dispersions and the translation-equivariance of the optimal thresholds. Non-negativity follows trivially from the fact that $\rho(\cdot) \geq 0$. Noting that $\rho(x) > 0$ for all $x \neq 0$, we have that $\dsp_{\rho}(\loss;\theta) = 0$ if and only if $\loss = \theta$ almost surely.\footnote{This fact follows from basic Lebesgue integration theory \citep[Thm.~1.6.6]{ash2000a}.} Since $\dsp_{\rho}(\loss;\mloc_{\rho}(\loss)) \leq \dsp_{\rho}(\loss;\mt_{\rho}(\loss;\eta))$ by the optimality of $\mloc_{\rho}(\loss)$, it follows that for any non-constant $\loss$, we must have $\dsp_{\rho}(\loss;\mt_{\rho}(\loss;\eta)) > 0$. Furthermore, from the optimality condition (\ref{eqn:axiom_check_0}) for $\mt_{\rho}(\loss;\eta)$, even when $\loss$ is constant, we must have $\dsp_{\rho}(\loss;\mt_{\rho}(\loss;\eta)) > 0$ whenever $\eta \neq 0$, since $\dv{\rho}_{\sigma}(x) = 0$ if and only if $x=0$.

In the special case where $1 \leq \alpha \leq 2$, we have that $\ddv{\rho}_{\sigma}$ is positive on $\RR$ (see \S{\ref{sec:proofs_barron_derivs}} and Fig.~\ref{fig:dispersion_barron}). This implies that $\rho_{\sigma}$ is strictly convex, and $\dv{\rho}_{\sigma}$ is monotonically increasing. Let $\loss_{1} \leq \loss_{2}$ almost surely, but say $\mloc_{\rho}(\loss_{1}) > \mloc_{\rho}(\loss_{2})$. Using the optimality condition (\ref{eqn:axiom_check_0}), uniqueness of the solution via Lemma \ref{lem:optimal_threshold_existence}, and the aforementioned monotonicity of $\dv{\rho}_{\sigma}$, we have
\begin{align*}
0 = \exx_{\ddist}\dv{\rho}_{\sigma}\left(\loss_{1} - \mloc_{\rho}(\loss_{1})\right) < \exx_{\ddist}\dv{\rho}_{\sigma}\left(\loss_{1} - \mloc_{\rho}(\loss_{2})\right) \leq \exx_{\ddist}\dv{\rho}_{\sigma}\left(\loss_{2} - \mloc_{\rho}(\loss_{2})\right) = 0.
\end{align*}
This is a contradiction, and thus we must have $\mloc_{\rho}(\loss_{1}) \leq \mloc_{\rho}(\loss_{2})$. An identical argument using the exact same properties proves that $\mt_{\rho}(\loss_{1};\eta) \leq \mt_{\rho}(\loss_{2};\eta)$ also holds. Finally, to prove convexity, take any $\loss_{1}, \loss_{2} \in \LL$, $\theta_{1}, \theta_{2} \in \RR$, and $a \in (0,1)$, and note that
\begin{align*}
\underline{\risk}_{\rho}(a\loss_{1} + (1-a)\loss_{2};\eta) & \leq \dsp_{\rho}(a\loss_{1} + (1-a)\loss_{2};a\theta_{1} + (1-a)\theta_{2}) + \eta\left( a\theta_{1} + (1-a)\theta_{2} \right)\\
& = \exx_{\ddist}\rho_{\sigma}\left( a(\loss_{1} - \theta_{1}) + (1-a)(\loss_{2} - \theta_{2}) \right) + \eta\left( a\theta_{1} + (1-a)\theta_{2} \right)\\
& \leq a\left( \dsp_{\rho}(\loss_{1};\theta_{1}) + \eta\theta_{1} \right) + (1-\alpha)\left( \dsp_{\rho}(\loss_{2};\theta_{2}) + \eta\theta_{2} \right).
\end{align*}
The first inequality uses optimality of the threshold in the definition of $\underline{\risk}_{\rho}$, whereas the second inequality uses the convexity of $\rho_{\sigma}$. Since the choice of $\theta_{1}$ and $\theta_{2}$ here were arbitrary, we can set $\theta_{1} = \mt_{\rho}(\loss_{1};\eta)$ and $\theta_{2} = \mt_{\rho}(\loss_{2};\eta)$ to obtain the desired inequality
\begin{align*}
\underline{\risk}_{\rho}(a\loss_{1} + (1-a)\loss_{2};\eta) \leq a\underline{\risk}_{\rho}(\loss_{1};\eta) + (1-a)\underline{\risk}_{\rho}(\loss_{2};\eta)
\end{align*}
giving us convexity of the threshold risk. As a direct corollary, setting $\eta = 0$ yields the convexity result for $\loss \mapsto \dsp_{\rho}(\loss;\mloc_{\rho}(\loss))$.
\end{proof}

\begin{proof}[Proof of Lemma \ref{lem:unbiased_new}]
The crux of this result is an analogue to Lemma \ref{lem:dsp_diffable} regarding the differentials of $\dsp_{\rho}(h;\theta)$, this time taken with respect to $h$, rather than $\theta$. Fixing arbitrary $g, h \in \HH$, let us start by considering the following sequence of random variables:
\begin{align}\label{eqn:unbiased_0}
f_{k} \defeq %
\frac{\rho_{\sigma}(\loss(h+a_{k}g)-\theta)-\rho_{\sigma}(\loss(h)-\theta)}{a_{k}}
\end{align}
where $(a_{k})$ is any sequence of real values such that $a_{k} \to 0_{+}$ as $k \to \infty$. Before getting into the details, let us unpack the differentiability assumption made on the base loss. Before random sampling, the map $h \mapsto \loss(h)$ is of course a map from $\HH$ to the set of measurable functions $\{\loss(h): h \in \HH\}$, but after sampling, there is no randomness and it is simply a map from $\HH$ to $\RR$. Having sampled the random loss, the property we desire is that for each $h \in \HH$, there exists a continuous linear functional $\dv{\loss}(h): \UU \to \RR$ such that
\begin{align}\label{eqn:unbiased_1a}
\lim\limits_{\Abs{g} \to 0} \frac{\abs{\loss(h+g) - \loss(h) - \dv{\loss}(h)(g)}}{\Abs{g}} = 0.
\end{align}
The differentiability condition in the lemma statement is simply that
\begin{align}\label{eqn:unbiased_1b}
\ddist\{ \,\text{equality (\ref{eqn:unbiased_1a}) holds}\, \} = 1.
\end{align}
On this ``good'' event, since the map $x \mapsto \rho_{\sigma}(x)$ is differentiable by definition, we have that the composition $h \mapsto \rho_{\sigma}(\loss(h)-\theta)$ is also differentiable for any choice of $\theta \in \RR$, and a general chain rule can be applied to compute the differentials.\footnote{See \citet[Thm.~2.47]{penot2012CWOD} for this key fact, where ``$X$'' is $\UU$ here, and both ``$Y$'' and ``$Z$'' are $\RR$ here.} In particular, we have a pointwise limit of
\begin{align}\label{eqn:unbiased_1}
f \defeq \lim\limits_{k \to \infty} f_{k} = \dv{\rho}_{\sigma}(\loss(h)-\theta)\dv{\loss}(h)(g)
\end{align}
which also uses the fact that the Fr\'{e}chet and Gateaux differentials are equal here.\footnote{\citet[\S{7.2}, Prop.~2]{luenberger1969Book}} Technically, it just remains to obtain conditions which imply $\exx_{\ddist}f_{k} \to \exx_{\ddist}f$. In pursuit of a $\ddist$-integrable upper bound on the sequence $(f_{k})$, note that for large enough $k$, we have
\begin{align}
\nonumber
\abs{f_{k}} & \leq %
 \frac{1}{a_{k}} %
 \Abs{a_{k}g}\sup_{0 < a < a_{k}}\Abs{\dv{\rho}_{\sigma}(\loss(h+ag)-\theta)\dv{\loss}(h+ag)}\\
 \nonumber
 & \leq \Abs{g} \sup\left\{\Abs{\dv{\rho}_{\sigma}(\loss(h_{0})-\theta)\dv{\loss}(h_{0})} : h_{0} \in \HH\right\}\\
 \label{eqn:unbiased_2a}
 & \leq \frac{\Abs{g}}{\sigma^{2}} \sup\left\{\abs{\loss(h_{0})-\theta}\Abs{\dv{\loss}(h_{0})} : h_{0} \in \HH\right\}.
\end{align}
The key to the first of the preceding inequalities is a generalized mean value theorem.\footnote{Considering the proof of Lemma \ref{lem:meanvalue_general} due to \citet[\S{7.3}, Prop.~2]{luenberger1969Book}, just generalize the one-dimensional part of the argument from the original interval $[0,1]$ to the interval $[0,a_{k}]$ here.} Both the first and second inequalities also use the fact that $h+a_{k}g \in \HH$ eventually. The final inequality uses the fact that $\dv{\rho}_{\sigma}(x) = \dv{\rho}(x/\sigma)/\sigma \leq \abs{x}/\sigma^{2}$ for any choice of $-\infty \leq \alpha \leq 2$. This inequality suggests a natural condition of
\begin{align}\label{eqn:unbiased_2}
\exx_{\ddist}\left[ \sup_{h_{0} \in \HH} \Abs{\loss(h_{0})\dv{\loss}(h_{0})} \right] < \infty
\end{align}
under which we can apply a standard dominated convergence argument.\footnote{See for example \citet[Thm.~1.6.9]{ash2000a}. If (\ref{eqn:unbiased_2a}) holds for say all $k \geq k_{0}$, then we can just bound $\abs{f_{k}}$ by the greater of $\max_{j \leq k_{0}} \abs{f_{j}}$ (clearly $\ddist$-integrable) and the right-hand side of (\ref{eqn:unbiased_2a}).} In particular, the key implication is that
\begin{align}\label{eqn:unbiased_3}
\text{(\ref{eqn:unbiased_2})} \implies \lim\limits_{k \to \infty} \exx_{\ddist}f_{k} = \exx_{\ddist}f.
\end{align}
Since we have
\begin{align*}
\lim\limits_{k \to \infty} \exx_{\ddist}f_{k} = %
\lim\limits_{a \to 0_{+}} %
\exx_{\ddist}\left[\frac{\rho_{\sigma}(\loss(h+ag)-\theta)-\rho_{\sigma}(\loss(h)-\theta)}{a}\right] = \dv{\dsp}_{\rho}(h;\theta)(g),
\end{align*}
where $\dv{\dsp}_{\rho}(h;\theta): \UU \to \RR$ denotes the gradient of $h \mapsto \dsp_{\rho}(h;\theta)$, we see that by applying the preceding argument (culminating in (\ref{eqn:unbiased_3})) to the modified losses (\ref{eqn:transformed_loss}), we readily obtain the desired result.
\end{proof}

\begin{proof}[Proof of Theorem \ref{thm:sgd_convergence}]
To begin, let us consider the smoothness of the objective $h \mapsto \risk_{\rho}(h;\theta,\eta)$ under the present assumptions. From Lemma \ref{lem:smoothness_limited} and the basic properties of the Barron class of dispersion functions (Lemma \ref{lem:basic_barron}), it follows that this function is $\smooth$-smooth with coefficient
\begin{align}\label{eqn:sgd_convergence_1}
\smooth \defeq \frac{\smooth_{5}}{\sigma} + \frac{\smooth_{2}}{\sigma^{2}}\exx_{\ddist}\Abs{\dv{\loss}}_{\HH},
\end{align}
where $\smooth_{5} \defeq \smooth_{3} + \smooth_{4}$, and
\begin{align*}
\smooth_{3} \defeq \left(\frac{\smooth_{2}}{\sigma}\right)\left[\exx_{\ddist}\Abs{\dv{\loss}}_{\HH}^{2} + \sup_{h \in \HH}\exx_{\ddist}\Abs{\dv{\loss}(h)} \right], \qquad \smooth_{4} \defeq \smooth_{1}\Abs{\dv{\rho}}_{\infty}.
\end{align*}
From here, we can leverage the main argument of \citet[Thm.~2]{cutkosky2021a}, utilizing the smoothness property given by (\ref{eqn:sgd_convergence_1}) above, and the $\Gamma$-bound of (\ref{eqn:bounded_gradients}). For completeness and transparency we include the key details here. First, note that if we define $\epsilon_{t} \defeq G_{t} - \partial_{h}\risk_{\rho}(h_{t};\theta,\eta)$, $\widehat{\epsilon}_{t} \defeq M_{t} - \partial_{h}\risk_{\rho}(h_{t};\theta,\eta)$, and $S(h_{t},h_{t+1}) \defeq \partial_{h}\risk_{\rho}(h_{t};\theta,\eta)-\partial_{h}\risk_{\rho}(h_{t+1};\theta,\eta)$, our definitions imply that for each $t \geq 1$, we have
\begin{align}\label{eqn:sgd_convergence_2}
M_{t+1} = \partial_{h}\risk_{\rho}(h;\theta,\eta) + b \left( \widehat{\epsilon}_{t} + S(h_{t},h_{t+1}) \right) + (1-b)\epsilon_{t+1}
\end{align}
Using the form (\ref{eqn:sgd_convergence_2}), it follows immediately that
\begin{align}\label{eqn:sgd_convergence_3}
\widehat{\epsilon}_{t+1} = b \left( \widehat{\epsilon}_{t} + S(h_{t},h_{t+1}) \right) + (1-b)\epsilon_{t+1}
\end{align}
again for each $t \geq 1$. By setting $M_{0} \defeq 0$ and $h_{0} \defeq h_{1}$, we trivially have
\begin{align*}
\widehat{\epsilon}_{0} = -\partial_{h}\risk_{\rho}(h_{0};\theta,\eta) = -\partial_{h}\risk_{\rho}(h_{1};\theta,\eta)
\end{align*}
and one can then easily check that (\ref{eqn:sgd_convergence_3}) holds for all $t \geq 0$. Expanding the recursion of (\ref{eqn:sgd_convergence_3}), we have
\begin{align}\label{eqn:sgd_convergence_4}
\widehat{\epsilon}_{t+1} = (1-b)\sum_{k=0}^{t}b^{k}\epsilon_{t-k+1} + \sum_{k=1}^{t+1}b^{k}S(h_{t-k+1},h_{t-k+2}) + b^{t+1}\widehat{\epsilon}_{0}.
\end{align}
We take the summands of (\ref{eqn:sgd_convergence_4}) one at a time. Since the stochastic gradients are $\Gamma$-bounded, we have that $b^{k}\Abs{\epsilon_{t-k+1}} \leq 2\Gamma$ for all $0 \leq k \leq t$. Furthermore, we have $\exx_{\ddist}\epsilon_{t-k+1} = 0$ (Lemma \ref{lem:unbiased_new}) and $\exx_{\ddist}(b^{k}\Abs{\epsilon_{t-k+1}})^{2} \leq (2b^{k}\Gamma)^{2}$ for each $k$. These bounds can be passed to standard concentration inequalities for martingales on Banach spaces \citep[Lem.~14]{cutkosky2021a}, which using the smoothness property of $\HH$ that we assumed tell us that with probability no less than $1-\delta$, we have
\begin{align}\label{eqn:sgd_convergence_5}
\AbsLR{\sum_{k=0}^{t}b^{k}\epsilon_{t-k+1}} \leq 10\Gamma\max\left\{1,\log(3\delta^{-1})\right\} + 8\Gamma\sqrt{\max\left\{1,\log(3\delta^{-1})\right\}\sum_{k=0}^{t}b^{2k}}.
\end{align}
Moving on to the second term of (\ref{eqn:sgd_convergence_4}), note that using $\smooth$-smoothness of the risk function with coefficient $\smooth$ given by (\ref{eqn:sgd_convergence_1}), along with the definition of the update procedure (\ref{eqn:update_new_1})--(\ref{eqn:update_new_2}), we have
\begin{align*}
\Abs{S(h_{t},h_{t+1})} \leq \smooth\Abs{h_{t}-h_{t+1}} = \smooth a_{t}\Abs{\widetilde{M}_{t}} = \smooth a_{t}.
\end{align*}
This implies that using a constant step size $a_{t} = a$, we can control the sum as
\begin{align}\label{eqn:sgd_convergence_6}
\AbsLR{\sum_{k=1}^{t+1}b^{k}S(h_{t-k+1},h_{t-k+2})} \leq \smooth\sum_{k=1}^{t+1}a_{t-k+1}b^{k} \leq \frac{a\smooth}{1-b}.
\end{align}
Finally, the third term of (\ref{eqn:sgd_convergence_4}) is easily controlled as $\Abs{\widehat{\epsilon}_{0}} = \Abs{\partial_{h}\risk_{\rho}(h_{1};\theta,\eta)} \leq \Gamma$. Taking this bound along with (\ref{eqn:sgd_convergence_5}) and (\ref{eqn:sgd_convergence_6}), we see that $\Abs{\widehat{\epsilon}_{t+1}}$ can be bounded above by
\begin{align}\label{eqn:sgd_convergence_7}
(1-b)\left(10\Gamma\max\left\{1,\log(3\delta^{-1})\right\} + 8\Gamma\sqrt{\max\left\{1,\log(3\delta^{-1})\right\}\sum_{k=0}^{t}b^{2k}}\right) + \frac{a\smooth}{1-b} + b^{t+1}\Gamma
\end{align}
on the high-probability event mentioned earlier. To make use of the bound (\ref{eqn:sgd_convergence_7}), note that using the $\smooth$-smoothness of the losses, the update procedure used here can be shown \citep[Lem.~1]{cutkosky2021a} to satisfy 
\begin{align}\label{eqn:sgd_convergence_8}
\sum_{k=1}^{t}\Abs{\partial_{h}\risk_{\rho}(h_{t};\theta,\eta)} \leq \frac{\risk_{\rho}(h_{1};\theta,\eta)-\risk_{\rho}(h_{t+1};\theta,\eta)}{a} + 2\sum_{k=1}^{t}\Abs{\widehat{\epsilon}_{k}} + \left(\frac{\smooth a}{2}\right)t.
\end{align}
Using a union bound, we have that the bound of (\ref{eqn:sgd_convergence_7}) holds for all $\widehat{\epsilon}_{1},\ldots,\widehat{\epsilon}_{t}$ with probability no less than $1-t\delta$. To get some clean bounds out of (\ref{eqn:sgd_convergence_7}) and (\ref{eqn:sgd_convergence_8}), first we loosen
\begin{align*}
(1-b)\sqrt{\sum_{k=0}^{t}b^{2k}} \leq \frac{1-b}{\sqrt{1-b^{2}}} \leq \frac{1-b}{\sqrt{1-b}} = \sqrt{1-b}
\end{align*}
and note that $0 < \delta < 1$ and $t \geq 1 \geq \mathrm{e}/3$ implies $\log(3t\delta^{-1}) \geq 1$. This means that with probability no less than $1-\delta$, we have
\begin{align*}
2\sum_{k=1}^{t}\Abs{\widehat{\epsilon}_{k}} \leq (1-b)20\Gamma t \log(3t\delta^{-1}) + 16\Gamma t\sqrt{(1-b)\log(3t\delta^{-1})} + \frac{2a\smooth t}{1-b} + \frac{2\Gamma}{1-b}.
\end{align*}
Dividing both sides by $t$ and plugging in the prescribed settings of $a$ and $b$, we have
\begin{align}\label{eqn:sgd_convergence_9}
\frac{2}{t}\sum_{k=1}^{t}\Abs{\widehat{\epsilon}_{k}} \leq \frac{20\Gamma\log(3t\delta^{-1})}{\sqrt{t}} + \frac{16\Gamma\sqrt{\log(3t\delta^{-1})}}{t^{1/4}} + \frac{2\smooth}{t^{1/4}} + \frac{2\Gamma}{\sqrt{t}}.
\end{align}
Taking this bound and our $a$ setting and applying it to (\ref{eqn:sgd_convergence_8}), we obtain
\begin{align}\label{eqn:sgd_convergence_10}
\frac{1}{t}\sum_{k=1}^{t}\Abs{\partial_{h}\risk_{\rho}(h_{t};\theta,\eta)} \leq \frac{\risk_{\rho}(h_{1};\theta,\eta)-\risk_{\rho}(h_{t+1};\theta,\eta)}{t^{1/4}} + (\text{RHS of (\ref{eqn:sgd_convergence_9})}) + \frac{\smooth}{2t^{3/4}}.
\end{align}
To clean this all up, we have
\begin{align*}
\frac{1}{t}\sum_{k=1}^{t}\Abs{\partial_{h}\risk_{\rho}(h_{t};\theta,\eta)} & \leq \frac{1}{t^{1/4}}\left(\risk_{\rho}(h_{1};\theta,\eta)-\risk_{\rho}(h_{t+1};\theta,\eta) + 16\Gamma\sqrt{\log(3t\delta^{-1})} + 2\smooth\right)\\
& \qquad + \frac{1}{\sqrt{t}}\left(20\Gamma\log(3t\delta^{-1}) + 2\Gamma\right) + \frac{\smooth}{2t^{3/4}}.
\end{align*}
For readability, the proof uses a slightly looser choice of $\smooth_{3}$, and instead of $t$ iterations, it is stated for $T$ iterations.
\end{proof}

\begin{proof}[Proof of Corollary \ref{cor:sgd_convergence}]
Let $\alpha = 0$, and note from \S{\ref{sec:proofs_barron_derivs}} that we have
\begin{align*}
\rho_{\sigma}(x) = \frac{2x}{x^{2}+2\sigma^{2}}.
\end{align*}
It thus follows from (\ref{eqn:transformed_grad}) that
\begin{align*}
\partial_{h}\loss_{\rho}(h;\theta,\eta) = \frac{2(\loss(h)-\theta)}{(\loss(h)-\theta)^{2}+2\sigma^{2}} \dv{\loss}(h).
\end{align*}
In the case of the quadratic loss with a linear model as assumed here, this becomes
\begin{align*}
\partial_{h}\loss_{\rho}(h;\theta,\eta) & = \frac{2(\loss(h)-\theta)}{(\loss(h)-\theta)^{2}+2\sigma^{2}} \left(h(\rdv{X})-\rdv{Y}\right)\rdv{X}\\
& = \frac{2(\loss(h)-\theta)\sqrt{2\loss(h)}}{(\loss(h)-\theta)^{2}+2\sigma^{2}}\sign(h(\rdv{X})-\rdv{Y})\rdv{X}.
\end{align*}
Regarding growth in $\rdv{X}$, note that since $\loss(h) = \OO(\Abs{\rdv{X}}^{2})$, both the numerator and denominator are $\OO(\Abs{\rdv{X}}^{4})$. As for growth in $\loss(h)$ which accounts for the random noise $\varepsilon$ as well, the numerator is $\OO(\abs{\loss}^{3/2})$ whereas the denominator is $\OO(\abs{\loss}^{2})$, and thus accounting for the randomness of $\rdv{X}$ and $\varepsilon$ which can both be potentially unbounded and heavy-tailed, we see that the norm of $\partial_{h}\loss_{\rho}(h;\theta,\eta)$ must be bounded as the norm of the inputs, noise, and/or loss grow large. Trivially, since $\sigma > 0$, the limits where $\loss(h) \to \theta$ also result in $\partial_{h}\loss_{\rho}(h;\theta,\eta)$ with a norm that is almost surely bounded.

Proceeding to the case of the logistic loss, an analogous argument yields the desired result. First note that under the linear model assumed here, the gradient with respect to any $h_{j}$ takes the form
\begin{align*}
\partial_{h_{j}}\loss_{\rho}(h;\theta,\eta) = \frac{2(\loss(h)-\theta)}{(\loss(h)-\theta)^{2}+2\sigma^{2}}(p_{j}(h)-\widetilde{\rdv{Y}}_{j})\rdv{X}
\end{align*}
where $p_{j}(h) \defeq \exp(h_{j}(\rdv{X})) / \sum_{i=1}^{k}\exp(h_{i}(\rdv{X}))$, i.e., the softmax transformation of the score assigned by $h_{j}$. By definition, the coefficients $(p_{j}(h)-\widetilde{\rdv{Y}}_{j})$ are bounded. Furthermore, using our linear model assumption, we have that $\loss(h) = \OO(\Abs{\rdv{X}})$, and as such the numerator and denominator are both $\OO(\Abs{\rdv{X}}^{2})$, implying the desired boundedness.

Taking the previous two paragraphs together, we have that the bound (\ref{eqn:bounded_gradients}) is satisfied for a finite $\Gamma$ under $\alpha = 0$, even when the data is unbounded and potentially heavy-tailed. For $\alpha < 0$, the derivative of $\rho$ shrinks even faster, so the same result follows \textit{a fortiori} from the $\alpha = 0$ case. Finally, the $\smooth_{1}$-smoothness assumption in Theorem \ref{thm:sgd_convergence} follows from direct inspection of the forms of $\dv{\loss}(h)$ given here for each loss, using our assumption of $\Abs{\rdv{X}}^{2}$ having bounded second moments.
\end{proof}

\begin{proof}[Proof of Proposition \ref{prop:risk_relations}]
To begin, recall from \S{\ref{sec:trisk}} the notation $\mloc_{\rho}(h) \defeq \argmin_{\theta \in \RR} \dsp_{\rho}(h;\theta)$ for the M-locations and $\mt_{\rho}(h;\eta) \defeq \argmin_{\theta \in \RR} \risk_{\rho}(h;\theta,\eta)$ for the T-risk optimal thresholds. By Lemma \ref{lem:optimal_threshold_existence}, both $\mloc_{\rho}(h)$ and $\mt_{\rho}(h;\eta)$ have unique solutions, and thus we overload this notation to represent the unique solution. By definition, we have
\begin{align*}
\underline{\risk}_{\rho}(h;\eta) = \risk_{\rho}(h;\mt_{\rho}(h;\eta)) & \leq \eta\mloc_{\rho}(h) + \dsp_{\rho}(h;\mloc_{\rho}(h))\\
& = \eta\risk(h) + \eta(\mloc_{\rho}(h)-\risk(h)) + \dsp_{\rho}(h;\mloc_{\rho}(h)).
\end{align*}
Similarly, we can obtain a lower bound using the optimality of $\mloc_{\rho}(h)$ and $\mt_{\rho}(h;\eta)$ as
\begin{align*}
\underline{\risk}_{\rho}(h;\eta) & = \eta\mt_{\rho}(h;\eta) + \dsp_{\rho}(h;\mt_{\rho}(h;\eta))\\
& \geq \eta\mt_{\rho}(h;\eta) + \dsp_{\rho}(h;\mloc_{\rho}(h))\\
& = \eta\risk(h) + \eta(\mt_{\rho}(h;\eta)-\risk(h)) + \dsp_{\rho}(h;\mloc_{\rho}(h)).
\end{align*}
Taking these two bounds together, we have
\begin{align}\label{eqn:risk_relations_1}
\eta(\mt_{\rho}(h;\eta)-\risk(h)) + \dsp_{\rho}(h;\mloc_{\rho}(h)) \leq \underline{\risk}_{\rho}(h;\eta) - \eta\risk(h) \leq \eta(\mloc_{\rho}(h)-\risk(h)) + \dsp_{\rho}(h;\mloc_{\rho}(h))
\end{align}
for any choice of $\eta \in \RR$. The bounds in (\ref{eqn:risk_relations_1}) are stated for ideal risk quantities for the true distribution under $\ddist$, but an identical argument holds if we replace $\ddist$ by the empirical measure induced by an iid sample $\loss_{1},\ldots,\loss_{n}$. Writing this out explicitly, let $\widehat{\dsp}_{\rho}$, $\underline{\widehat{\risk}}_{\rho}$, and $\widehat{\risk}$ denote the empirical analogues of $\dsp_{\rho}$, $\underline{\risk}_{\rho}$, and $\risk$, and similarly let $\widehat{\mloc}_{\rho}$ and $\widehat{\mt}_{\rho}$ be the empirical analogues of $\mloc_{\rho}$ and $\mt_{\rho}$. From the argument leading to (\ref{eqn:risk_relations_1}), it follows that
\begin{align}\label{eqn:risk_relations_2}
\eta(\widehat{\mt}_{\rho}(h;\eta)-\widehat{\risk}(h)) + \widehat{\dsp}_{\rho}(h;\widehat{\mloc}_{\rho}(h)) \leq \underline{\widehat{\risk}}_{\rho}(h;\eta) - \eta\widehat{\risk}(h) \leq \eta(\widehat{\mloc}_{\rho}(h)-\widehat{\risk}(h)) + \widehat{\dsp}_{\rho}(h;\widehat{\mloc}_{\rho}(h)).
\end{align}
Next, using the lower bound in (\ref{eqn:risk_relations_2}), for any $\eta \geq 0$ we have that
\begin{align}
\nonumber
\eta\risk(h) & = \eta \left(\widehat{\risk}(h) + (\risk(h)-\widehat{\risk}(h))\right)\\
\nonumber
& \leq \eta\left(\widehat{\risk}(h) + \Abs{\risk-\widehat{\risk}}_{\HH}\right)\\
\nonumber
& \leq \underline{\widehat{\risk}}_{\rho}(h;\eta)-\eta(\widehat{\mt}_{\rho}(h;\eta)-\widehat{\risk}(h))-\widehat{\dsp}_{\rho}(h;\widehat{\mloc}_{\rho}(h)) + \eta\Abs{\risk-\widehat{\risk}}_{\HH}\\
\label{eqn:risk_relations_3}
& \leq \underline{\widehat{\risk}}_{\rho}(h;\eta) + \eta(\widehat{\risk}(h)-\widehat{\mt}_{\rho}(h;\eta)) + \eta\Abs{\risk-\widehat{\risk}}_{\HH}.
\end{align}
Letting $\widehat{h}$ be a minimizer of $\underline{\widehat{\risk}}_{\rho}$, using the upper bound in (\ref{eqn:risk_relations_2}) and any choice of $h^{\ast}$, we have
\begin{align}
\nonumber
\underline{\widehat{\risk}}_{\rho}(\widehat{h};\eta) \leq \underline{\widehat{\risk}}_{\rho}(h^{\ast};\eta) & \leq \eta\widehat{\risk}(h^{\ast}) + \eta(\widehat{\mloc}_{\rho}(h^{\ast})-\widehat{\risk}(h^{\ast})) + \widehat{\dsp}_{\rho}(h^{\ast};\widehat{\mloc}_{\rho}(h^{\ast}))\\
\label{eqn:risk_relations_4}
& \leq \eta\risk(h^{\ast}) + \eta(\widehat{\mloc}_{\rho}(h^{\ast})-\widehat{\risk}(h^{\ast})) + \widehat{\dsp}_{\rho}(h^{\ast};\widehat{\mloc}_{\rho}(h^{\ast})) + \eta\Abs{\risk-\widehat{\risk}}_{\HH}.
\end{align}
Combining (\ref{eqn:risk_relations_3}) and (\ref{eqn:risk_relations_4}), we have that $\eta\risk(\widehat{h})$ is bounded above by
\begin{align}\label{eqn:risk_relations_5}
\eta\risk(h^{\ast}) + \eta(\widehat{\mloc}_{\rho}(h^{\ast})-\mt_{\rho}(\widehat{h};\eta)) + \eta(\widehat{\risk}(\widehat{h})-\widehat{\risk}(h^{\ast})) + \widehat{\dsp}_{\rho}(h^{\ast};\widehat{\mloc}_{\rho}(h^{\ast})) + 2\eta\Abs{\risk-\widehat{\risk}}_{\HH}.
\end{align}
Some elementary manipulations let us bound the key differences in (\ref{eqn:risk_relations_5}) as
\begin{align*}
\widehat{\mloc}_{\rho}(h^{\ast})-\mt_{\rho}(\widehat{h};\eta) + \widehat{\risk}(\widehat{h})-\widehat{\risk}(h^{\ast}) \leq 2\Abs{\widehat{\mloc}_{\rho}-\risk}_{\HH} + 2\Abs{\risk-\widehat{\risk}}_{\HH} + \Abs{\widehat{\mloc}_{\rho}-\mt_{\rho}}_{\HH},
\end{align*}
and thus dividing by $\eta > 0$, we end up with a final bound taking the form
\begin{align*}
\risk(\widehat{h}) \leq \risk(h^{\ast}) + \Abs{\widehat{\mloc}_{\rho}-\mt_{\rho}}_{\HH} + 2\Abs{\widehat{\mloc}_{\rho}-\risk}_{\HH} + \frac{1}{\eta}\widehat{\dsp}_{\rho}(h^{\ast};\widehat{\mloc}_{\rho}(h^{\ast}))+ 4\Abs{\risk-\widehat{\risk}}_{\HH}.
\end{align*}
The desired result is just the special case where we set $h^{\ast}$ is the expected loss minimizer.
\end{proof}

\subsection{Smoothness computations (proof of Lemma \ref{lem:smoothness_limited})}\label{sec:proofs_smoothness_limited}

Here we provide detailed computations for the smoothness coefficients used in Lemma \ref{lem:smoothness_limited}. We assume here that the assumptions \ref{asmp:grad_moments}, \ref{asmp:loss_smooth}, and \ref{asmp:dispersion} are satisfied. Starting with the difference of expected gradients, using Jensen's inequality and the smoothness assumption \ref{asmp:loss_smooth}, we have
\begin{align}
\nonumber
\Abs{\exx_{\ddist}\left[\dv{\loss}(h_{1})-\dv{\loss}(h_{2})\right]} & \leq \exx_{\ddist}\Abs{\dv{\loss}(h_{1})-\dv{\loss}(h_{2})}\\
\label{eqn:smoothness_limited_1}
& \leq \smooth_{1}\Abs{h_{1}-h_{2}}.
\end{align}
As discussed in \S{\ref{sec:learning_smoothness}}, differences of gradients modulated by $\dv{\rho}$ are slightly more complicated. In particular, recalling the equality (\ref{eqn:difference_tricky}), the norm of the difference
\begin{align}\label{eqn:smoothness_limited_2}
\exx_{\ddist}\dv{\rho}_{\sigma}(\loss(h_{1})-\theta_{1})\dv{\loss}(h_{1}) - \exx_{\ddist}\dv{\rho}_{\sigma}(\loss(h_{2})-\theta_{2})\dv{\loss}(h_{2})
\end{align}
can be bounded above by the sum of
\begin{align}\label{eqn:smoothness_limited_3}
\exx_{\ddist}\Abs{\dv{\loss}(h_{1})}\abs{\dv{\rho}_{\sigma}(\loss(h_{1})-\theta_{1})-\dv{\rho}_{\sigma}(\loss(h_{2})-\theta_{2})}
\end{align}
and
\begin{align}\label{eqn:smoothness_limited_4}
\exx_{\ddist}\abs{\dv{\rho}_{\sigma}(\loss(h_{2})-\theta_{2})}\Abs{\dv{\loss}(h_{1})-\dv{\loss}(h_{2})}.
\end{align}
We take up (\ref{eqn:smoothness_limited_3}) and (\ref{eqn:smoothness_limited_4}) one at a time. Starting with (\ref{eqn:smoothness_limited_3}), from \ref{asmp:dispersion} we know that the dispersion derivative $\dv{\rho}$ is $\smooth_{2}$-Lipschitz, and thus we have
\begin{align}
\nonumber
\text{(\ref{eqn:smoothness_limited_3})} & \leq \left(\frac{\smooth_{2}}{\sigma}\right)\exx_{\ddist}\Abs{\dv{\loss}(h_{1})}\left( \abs{\loss(h_{1})-\loss(h_{2})}+\abs{\theta_{1}-\theta_{2}} \right)\\
\nonumber
& \leq \left(\frac{\smooth_{2}}{\sigma}\right)\exx_{\ddist}\Abs{\dv{\loss}(h_{1})}\left(\Abs{h_{1}-h_{2}}\sup_{0 < c < 1}\Abs{\dv{\loss}((1-c)h_{1}-ch_{2})}+\abs{\theta_{1}-\theta_{2}}\right)\\
\nonumber
& \leq \left(\frac{\smooth_{2}}{\sigma}\right)\left(\Abs{h_{1}-h_{2}}\exx_{\ddist}\Abs{\dv{\loss}}_{\HH}^{2}+\abs{\theta_{1}-\theta_{2}}\sup_{h \in \HH}\exx_{\ddist}\Abs{\dv{\loss}(h)} \right)\\
\label{eqn:smoothness_limited_5}
& \leq \smooth_{3}\left(\Abs{h_{1}-h_{2}} + \abs{\theta_{1}-\theta_{2}}\right).
\end{align}
Here, the second inequality uses the helper Lemma \ref{lem:meanvalue_general} and our assumption of differentiability, while the third inequality uses our assumption on the expected squared norm of the gradient. We have set the Lipschitz coefficient $\smooth_{3}$ in (\ref{eqn:smoothness_limited_5}) to be
\begin{align*}
\smooth_{3} \defeq \left(\frac{\smooth_{2}}{\sigma}\right)\left[\exx_{\ddist}\Abs{\dv{\loss}}_{\HH}^{2} + \sup_{h \in \HH}\exx_{\ddist}\Abs{\dv{\loss}(h)} \right].
\end{align*}
This gives us a bound on (\ref{eqn:smoothness_limited_3}). Moving on to (\ref{eqn:smoothness_limited_4}), if $\dv{\rho}$ is bounded on $\RR$, then we have
\begin{align}
\nonumber
\text{(\ref{eqn:smoothness_limited_4})} & \leq \Abs{\dv{\rho}}_{\infty}\exx_{\ddist}\Abs{\dv{\loss}(h_{1})-\dv{\loss}(h_{2})}\\
\label{eqn:smoothness_limited_6}
& \leq \smooth_{4}\Abs{h_{1}-h_{2}}
\end{align}
with $\smooth_{4} \defeq \smooth_{1}\Abs{\dv{\rho}}_{\infty}$, recalling the bound (\ref{eqn:smoothness_limited_1}). To summarize, we can use (\ref{eqn:smoothness_limited_5}) and (\ref{eqn:smoothness_limited_6}) to control (\ref{eqn:smoothness_limited_2}) as follows:
\begin{align*}
(\text{\ref{eqn:smoothness_limited_2}}) & \leq (\text{\ref{eqn:smoothness_limited_3}}) + (\text{\ref{eqn:smoothness_limited_4}})\\
& \leq \smooth_{3}\left(\Abs{h_{1}-h_{2}} + \abs{\theta_{1}-\theta_{2}}\right) + \smooth_{4}\Abs{h_{1}-h_{2}}\\
& \leq \smooth_{5}\left(\Abs{h_{1}-h_{2}} + \abs{\theta_{1}-\theta_{2}}\right)
\end{align*}
where $\smooth_{5} \defeq \smooth_{3} + \smooth_{4}$. With these preparatory details organized, it is straightforward to obtain a Lipschitz property on the gradient of $\risk_{\rho}(\cdot)$, as summarized in Lemma \ref{lem:smoothness_limited}, and detailed in the proof below.
\begin{proof}[Proof of Lemma \ref{lem:smoothness_limited}]
Using our upper bounds on (\ref{eqn:smoothness_limited_1}) and (\ref{eqn:smoothness_limited_2}), we have
\begin{align*}
&\Abs{\partial_{h}\risk_{\rho}(h_{1};\theta_{1},\eta) - \partial_{h}\risk_{\rho}(h_{2};\theta_{2},\eta)}\\
& \leq \left(\frac{1}{\sigma}\right)\AbsLR{\exx_{\ddist}\left[ \dv{\rho}\left(\frac{\loss(h_{1})-\theta_{1}}{\sigma}\right)\dv{\loss}(h_{1}) - \dv{\rho}\left(\frac{\loss(h_{2})-\theta_{2}}{\sigma}\right)\dv{\loss}(h_{2}) \right]}\\
& \leq \left(\frac{\smooth_{5}}{\sigma}\right)\left(\Abs{h_{1}-h_{2}} + \abs{\theta_{1}-\theta_{2}}\right).
\end{align*}
Next, let us look at the partial derivative taken with respect to the threshold parameter $\theta$. To bound the absolute value of these differences, using the generalized mean value theorem (Lemma \ref{lem:meanvalue_general}), we have
\begin{align*}
\abs{\partial_{\theta}\risk_{\rho}(h_{1};\theta_{1},\eta) - \partial_{\theta}\risk_{\rho}(h_{2};\theta_{2},\eta)} & \leq \left(\frac{1}{\sigma}\right)\left\lvert \exx_{\ddist}\left[\dv{\rho}\left(\frac{\loss(h_{1})-\theta_{1}}{\sigma}\right)-\dv{\rho}\left(\frac{\loss(h_{2})-\theta_{2}}{\sigma}\right)\right] \right\rvert\\
& \leq \left(\frac{\smooth_{2}}{\sigma^{2}}\right)\left(  \exx_{\ddist}\abs{\loss(h_{1})-\loss(h_{2})} + \abs{\theta_{1}-\theta_{2}} \right)\\
& \leq \left(\frac{\smooth_{2}}{\sigma^{2}}\exx_{\ddist}\Abs{\dv{\loss}}_{\HH}\right) \left(\Abs{h_{1} - h_{2}} + \abs{\theta_{1}-\theta_{2}}\right).
\end{align*}
Taking the preceding upper bounds together, the gradient difference for $\risk_{\rho}$ can be bounded as
\begin{align*}
&\Abs{\dv{\risk}_{\rho}(h_{1};\theta_{1},\eta) - \dv{\risk}_{\rho}(h_{2};\theta_{2},\eta)}\\
& = \Abs{ \partial_{h}\risk_{\rho}(h_{1};\theta_{1},\eta) - \partial_{h}\risk_{\rho}(h_{2};\theta_{2},\eta)} + \abs{\partial_{\theta}\risk_{\rho}(h_{1};\theta_{1},\eta) - \partial_{\theta}\risk_{\rho}(h_{2};\theta_{2},\eta)}\\
& \leq \left(\left(\frac{\smooth_{5}}{\sigma}\right) + \frac{\eta\smooth_{2}}{\sigma^{2}}\exx_{\ddist}\Abs{\dv{\loss}}_{\HH}\right)\left(\Abs{h_{1} - h_{2}} + \abs{\theta_{1} - \theta_{2}}\right),
\end{align*}
noting that the initial equality follows from the fact that we are using the sum of norms for our product space norm here (see also Remark \ref{rmk:product_norm}). These bounds on the gradient differences are precisely the desired result.
\end{proof}

\section{Additional technical facts}\label{sec:additional_facts}

\subsection{Lipschitz properties}\label{sec:additional_facts_lipschitz}

Here we give a fundamental property of differentiable functions that generalizes the mean value theorem.
\begin{lem}\label{lem:meanvalue_general}
Let $\UU$ and $\VV$ be normed linear spaces, and let $f:\UU \to \VV$ be Fr\'{e}chet differentiable on an open set $S \subseteq \UU$. Taking any $u \in S$, we have
\begin{align*}
\Abs{f(u+u^{\prime})-f(u)} \leq \Abs{u^{\prime}} \sup_{0 < c < 1}\Abs{\dv{f}(u+cu^{\prime})}
\end{align*}
for any $u^{\prime} \in \UU$ such that $u + cu^{\prime} \in S$ for all $0 \leq c \leq 1$.
\end{lem}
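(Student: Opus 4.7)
The plan is to reduce the Banach-space statement to the classical one-dimensional mean value theorem by composing with a well-chosen continuous linear functional on $\VV$. First I would introduce the auxiliary map $g:[0,1]\to\VV$ defined by $g(c) \defeq f(u+cu')$, which is well-defined because the segment $\{u+cu' : 0 \leq c \leq 1\}$ lies in $S$ by hypothesis, and is differentiable with $\dv{g}(c) = \dv{f}(u+cu')(u')$ on $(0,1)$ by the chain rule for Fr\'{e}chet derivatives.

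Next, using the Hahn-Banach theorem, I would pick a continuous linear functional $\phi\in\VV^{\ast}$ with $\Abs{\phi}=1$ and $\phi\bigl(f(u+u')-f(u)\bigr)=\Abs{f(u+u')-f(u)}$. The real-valued composition $h \defeq \phi\circ g:[0,1]\to\RR$ is then continuous on $[0,1]$ and differentiable on $(0,1)$, with $\dv{h}(c) = \phi\bigl(\dv{f}(u+cu')(u')\bigr)$. The classical mean value theorem yields some $c^{\ast}\in(0,1)$ with $h(1)-h(0)=\dv{h}(c^{\ast})$.

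Unpacking this equality and taking absolute values gives
\begin{align*}
\Abs{f(u+u')-f(u)} = \phi\bigl(f(u+u')-f(u)\bigr) = \phi\bigl(\dv{f}(u+c^{\ast}u')(u')\bigr) \leq \Abs{\dv{f}(u+c^{\ast}u')(u')},
\end{align*}
where the final inequality uses $\Abs{\phi}=1$. Bounding the operator action by $\Abs{\dv{f}(u+c^{\ast}u')}\Abs{u'}$ and replacing the particular $c^{\ast}$ by a supremum over $c\in(0,1)$ yields the claimed inequality.

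The only genuine subtlety is the Hahn-Banach step supplying a norming functional: this is what lets us translate the vector-valued difference into a scalar quantity we can attack with the ordinary mean value theorem. Everything else is routine once the segment is known to lie in the open set $S$ (needed to validly invoke Fr\'{e}chet differentiability along the path) and once one uses the chain rule to identify $\dv{g}(c)$. No additional structural assumptions on $\UU$ or $\VV$ beyond being normed linear spaces are needed, which matches the generality of the statement.
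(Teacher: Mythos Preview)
Your proof is correct and is precisely the standard argument the paper defers to via its citation of Luenberger: pick a norming functional by Hahn--Banach, reduce to a real-valued function on $[0,1]$, and apply the classical mean value theorem. There is nothing to add.
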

\begin{proof}
See \citet[\S{7.3}, Prop.~2]{luenberger1969Book}.
\end{proof}
Note that Lemma \ref{lem:meanvalue_general} has the following important corollary: \emph{bounded gradients imply Lipschitz continuity}. In particular, if $\Abs{\dv{f}(u)} \leq \smooth < \infty$ for all $u \in S$, then it follows immediately that $f$ is $\smooth$-Lipschitz on $S$.

A closely related result goes in the other direction. Let $f:\UU \to \overbar{\RR}$ be convex and $\smooth$-Lipschitz. If $f$ is sub-differentiable at a point $u \in \UU$, then we have
\begin{align*}
\abs{\langle \partial f(u), u^{\prime}-u \rangle} \leq \abs{f(u^{\prime}) - f(u)} \leq \smooth\Abs{u^{\prime}-u}.
\end{align*}
As such, for convex, sub-differentiable functions, $\smooth$-Lipschitz continuity implies that all sub-gradients are bounded as $\Abs{\partial f(x)} \leq \smooth$.

\subsection{Convexity}\label{sec:additional_facts_convex}

\begin{lem}\label{lem:convexity_hessian}
Let function $f:\RR^{d} \to \RR$ be twice continuously differentiable. Then $f$ is convex if and only if its Hessian is positive semi-definite, namely when
\begin{align*}
\langle \ddv{f}(v)u, u \rangle \geq 0
\end{align*}
for all $u, v \in \RR^{d}$.
\end{lem}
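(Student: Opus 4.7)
The plan is to prove the two directions separately, relying on Taylor expansions and a reduction to the one-dimensional case.

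For the sufficiency direction (positive semi-definite Hessian implies convexity), I would first establish the first-order characterization, i.e., that $f(v) \geq f(u) + \langle \dv{f}(u), v - u \rangle$ for all $u, v \in \RR^{d}$. To obtain this, I apply Taylor's theorem with Lagrange remainder to the function $t \mapsto f(u + t(v-u))$ on $[0,1]$, yielding
\begin{align*}
f(v) = f(u) + \langle \dv{f}(u), v-u \rangle + \tfrac{1}{2}\langle \ddv{f}(\xi)(v-u), v-u \rangle
\end{align*}
for some $\xi$ on the line segment between $u$ and $v$. By the positive semi-definite hypothesis the quadratic remainder is non-negative, giving the tangent-plane inequality. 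From this inequality, convexity follows by the standard argument: for any $u, v$ and $t \in [0,1]$, set $w \defeq tu + (1-t)v$, apply the tangent-plane inequality at $w$ to both $u$ and $v$, and take the corresponding convex combination so that the gradient terms cancel, producing $tf(u) + (1-t)f(v) \geq f(w)$.

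For the necessity direction (convexity implies positive semi-definite Hessian), I would fix arbitrary $u, v \in \RR^{d}$ and define the one-variable function $g(t) \defeq f(v + tu)$. Since $f$ is convex and $t \mapsto v + tu$ is affine, $g$ is convex on $\RR$. Because $f$ is twice continuously differentiable, the chain rule gives $\ddv{g}(t) = \langle \ddv{f}(v+tu)u, u \rangle$. A convex, twice-differentiable function on the line has non-negative second derivative everywhere (this is a one-dimensional specialization of the same direction, proved directly by dividing the convexity inequality $g(t+h) + g(t-h) \geq 2 g(t)$ by $h^{2}$ and sending $h \to 0$). Evaluating at $t = 0$ then yields $\langle \ddv{f}(v)u, u \rangle \geq 0$, which is the desired conclusion since $u, v$ were arbitrary.

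There is no real obstacle here; the result is standard. The only points that require mild care are the invocation of the appropriate form of Taylor's theorem (the continuity of $\ddv{f}$ ensures the Lagrange remainder form is valid) and the passage from convexity of $g$ to $\ddv{g} \geq 0$, which can be done cleanly via second differences as indicated. Both steps are elementary and the whole argument can be written in a page.
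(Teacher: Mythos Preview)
Your proposal is correct and follows the standard textbook argument. The paper does not give its own proof at all: it simply writes ``See \citet[Thm.~2.1.4]{nesterov2004ConvOpt},'' so your write-up is in fact more detailed than what the paper provides, and is essentially the same argument one finds in that reference.
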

\begin{proof}
See \citet[Thm.~2.1.4]{nesterov2004ConvOpt}.
\end{proof}

\subsection{Derivatives for the Barron class}\label{sec:proofs_barron_derivs}

Let $\rho(\cdot;\alpha)$ be defined according to (\ref{eqn:barron}). Here we compute derivatives of the map $x \mapsto \rho_{\sigma}(x;\alpha)$, using the shorthand notation $\rho_{\sigma}(x;\alpha) \defeq \rho(x/\sigma;\alpha)$. We denote the first derivative of $\rho_{\sigma}(\cdot;\alpha)$ evaluated at $x \in \RR$ by $\dv{\rho}_{\sigma}(x;\alpha)$, which is computed as
\begin{align}\label{eqn:barron_d1}
\dv{\rho}_{\sigma}(x;\alpha) =
\begin{cases}
x/\sigma^{2}, & \text{if } \alpha = 2\\
2x/(x^{2} + 2\sigma^{2}), & \text{if } \alpha = 0\\
(x/\sigma^{2})\exp\left(-(x/\sigma)^{2}/2\right), & \text{if } \alpha = -\infty\\
\frac{x}{\sigma^{2}}\left(1 + \frac{(x/\sigma)^{2}}{\abs{\alpha-2}} \right)^{(\alpha/2)-1}, & \text{otherwise}.
\end{cases}
\end{align}
In the same way, letting $\ddv{\rho}_{\sigma}(x;\alpha)$ denote the second derivative of $\rho_{\sigma}(\cdot;\alpha)$ evaluated at $x \in \RR$, this is computed as
\begin{align}\label{eqn:barron_d2}
\ddv{\rho}_{\sigma}(x;\alpha) =
\begin{cases}
1/\sigma^{2}, & \text{if } \alpha = 2\\
\frac{2}{x^{2} + 2\sigma^{2}}\left( 1 - \frac{2x^{2}}{x^{2} + 2\sigma^{2}} \right), & \text{if } \alpha = 0\\
(1/\sigma^{2})\exp\left(-(x/\sigma)^{2}/2\right)\left(1 - \left(\frac{x}{\sigma}\right)^{2} \right), & \text{if } \alpha = -\infty\\
\frac{1}{\sigma^{2}}\left(1 + \frac{(x/\sigma)^{2}}{\abs{\alpha-2}} \right)^{(\alpha/2)-1}\left(1 - \frac{1-(\alpha/2)}{\abs{\alpha-2}} \frac{2(x/\sigma)^{2}}{1 + (x/\sigma)^{2}/\abs{\alpha-2}}\right), & \text{otherwise}.
\end{cases}
\end{align}
We emphasize that $\dv{\rho}_{\sigma}(x;\alpha)$ and $\ddv{\rho}_{\sigma}(x;\alpha)$ are not equal to $\dv{\rho}(x/\sigma;\alpha)$ and $\ddv{\rho}(x/\sigma;\alpha)$, but by a simple application of the chain rule are easily seen to satisfy the relations
\begin{align*}
\dv{\rho}_{\sigma}(x;\alpha) = \frac{1}{\sigma}\dv{\rho}(x/\sigma;\alpha), \qquad \ddv{\rho}_{\sigma}(x;\alpha) = \frac{1}{\sigma^{2}}\ddv{\rho}(x/\sigma;\alpha)
\end{align*}
for any $x \in \RR$, $\sigma > 0$, and $\alpha \in [-\infty,2]$.

\subsection{Elementary inequalities}\label{sec:additional_facts_ineqs}

The following elementary inequalities will be of use.
\begin{align}\label{eqn:helper_exponential_1}
\left(1 + \frac{x}{p}\right)^{p} \geq \left(1 + \frac{x}{q}\right)^{q}, \qquad \forall\, x \geq 0, \quad p > q > 0
\end{align}
\begin{align}\label{eqn:helper_exponential_2}
(1+x)^{c} \leq 1 + \frac{cx}{1-(c-1)x}, \qquad -1 \leq x < \frac{1}{c-1}, \quad c > 1
\end{align}
The inequality below is sometimes referred to as Bernoulli's inequality.
\begin{align}\label{eqn:helper_bernoulli}
\left(1 + x\right)^{a} \geq 1 + ax, \qquad \forall \, x > -1, \quad a \geq 1.
\end{align}

\subsection{Expected dispersion is coercive}\label{sec:additional_facts_coercive}

\begin{lem}[Expected dispersion is coercive]\label{lem:dsp_coercive}
Let $f:\RR \to \RR_{+}$ be any non-negative function which is even (i.e., $f(x) = f(-x)$ for all $x \in \RR$) and non-decreasing on $\RR_{+}$. Let $\rdv{X}$ be any random variable such that $\exx_{\ddist}f(\rdv{X}-\theta) < \infty$ for all $\theta \in \RR$. Then, we have
\begin{align*}
\lim\limits_{\lvert \theta \rvert \to \infty} \exx_{\ddist}f(\rdv{X}-\theta) = \lim\limits_{x \to \infty}f(x)
\end{align*}
and note that this includes the divergent case where $f(x) \to \infty$ as $\lvert x \rvert \to \infty$.
\end{lem}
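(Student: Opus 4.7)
The plan is to set $L \defeq \lim_{x \to \infty} f(x) \in [0,\infty]$, which is well-defined because $f$ is non-decreasing on $\RR_{+}$; evenness and monotonicity together give $L = \sup_{x \in \RR} f(x)$. With this, the claim reduces to matching upper and lower bounds on $\exx_{\ddist} f(\rdv{X}-\theta)$ as $\abs{\theta} \to \infty$, so I would organize the proof around those two bounds.

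The upper bound is immediate: since $f(\rdv{X}-\theta) \leq L$ pointwise, we get $\exx_{\ddist} f(\rdv{X}-\theta) \leq L$ for every $\theta \in \RR$, whence $\limsup_{\abs{\theta} \to \infty} \exx_{\ddist} f(\rdv{X}-\theta) \leq L$ (and in the case $L = \infty$ this is vacuous). For the lower bound, I would take an arbitrary sequence $(\theta_n)$ with $\abs{\theta_n} \to \infty$ and observe that, because $\rdv{X}$ is $\sigmafield$-measurable and hence $\ddist$-almost surely finite, $\abs{\rdv{X}-\theta_n} \to \infty$ $\ddist$-almost surely. Using $f(x) = f(\abs{x})$ together with monotonicity of $f$ on $\RR_{+}$, this yields the pointwise limit $f(\rdv{X}-\theta_n) \to L$ $\ddist$-almost surely.

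Since $f \geq 0$, Fatou's lemma applies and gives
\[
\liminf_{n \to \infty} \exx_{\ddist} f(\rdv{X}-\theta_n) \geq \exx_{\ddist} \liminf_{n \to \infty} f(\rdv{X}-\theta_n) = \exx_{\ddist}[L] = L,
\]
covering both the finite-$L$ and divergent-$L$ cases uniformly. As the sequence $(\theta_n)$ was arbitrary, $\liminf_{\abs{\theta}\to\infty} \exx_{\ddist} f(\rdv{X}-\theta) \geq L$, and combining with the upper bound gives the stated equality.

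There is no real obstacle here: evenness plus monotonicity make $f$ coercive on $\RR$, and once $\abs{\rdv{X}-\theta}\to\infty$ a.s.\ is noted, the argument is a one-line invocation of Fatou against the trivial pointwise ceiling $L$. The only minor care needed is to state things in terms of arbitrary sequences (so that the $\liminf$/$\limsup$ formulation cleanly yields an ordinary limit), and to note that the integrability hypothesis $\exx_{\ddist} f(\rdv{X}-\theta) < \infty$ is not actually used in the divergent case — there, both sides of the claimed equality are $+\infty$ and the argument still goes through.
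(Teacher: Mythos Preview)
Your proof is correct and arguably cleaner than the paper's, but it follows a genuinely different route. The paper does not invoke Fatou's lemma; instead it obtains the lower bound by an explicit truncation: for any level $a \geq 0$ and any $\theta$ with $\abs{\theta} \geq a$, it uses evenness and the reverse triangle inequality to bound
\[
\exx_{\ddist} f(\rdv{X}-\theta) \;\geq\; \exx_{\ddist}\indic_{\{\abs{\rdv{X}} \leq a\}} f\bigl(\abs{\theta}-\abs{\rdv{X}}\bigr) \;\geq\; f(\abs{\theta}-a)\,\prr\{\abs{\rdv{X}} \leq a\},
\]
then sends $\abs{\theta} \to \infty$ and afterwards $a \to \infty$, using continuity of the probability measure to get $\prr\{\abs{\rdv{X}} \leq a\} \to 1$. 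Your approach replaces this two-parameter sandwich by a single pointwise-limit observation ($\abs{\rdv{X}-\theta_n} \to \infty$ a.s.) followed by Fatou. The tradeoff is that the paper's argument is slightly more elementary (no integration-limit theorem is needed beyond monotone continuity of measures), while yours is shorter and handles the finite and divergent cases in one uniform line.
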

\begin{proof}[Proof of Lemma \ref{lem:dsp_coercive}]
By our assumptions, we have $f(x) \geq 0$ and $f(-x) = f(x)$ for all $u \in \RR$, and $f(x_{1}) \leq f(x_{2})$ whenever $0 \leq x_{1} \leq x_{2}$. With these facts in place, note that for any choice of $a \geq 0$ and $\theta$ such that $\lvert \theta \rvert \geq a$, we have
\begin{align}
\nonumber
\exx_{\ddist}f(\rdv{X}-\theta) & = \exx_{\ddist}f(\lvert \theta-\rdv{X} \rvert)\\
\nonumber
& \geq \exx_{\ddist}f\left( \lvert \lvert \theta \rvert - \lvert \rdv{X} \rvert \rvert \right)\\
\nonumber
& \geq \exx_{\ddist}\indic_{\{\lvert \rdv{X} \rvert \leq a\}}f\left(\lvert \lvert \theta \rvert - \lvert \rdv{X} \rvert \rvert\right)\\
\label{eqn:dispersion_coercive_1}
& \geq f\left( \lvert \theta \rvert - a \right)\prr\{\lvert \rdv{X} \rvert \leq a\}.
\end{align}
For readability, let us write $f(+\infty)$ for the limit of $f(x)$ as $\lvert x \rvert \to \infty$. Trivially, we know that $\exx_{\ddist}f(\rdv{X}-\theta) \leq f(+\infty)$. Using the preceding inequality (\ref{eqn:dispersion_coercive_1}), we have a lower bound of $\exx_{\ddist}f(\rdv{X}-\theta) \geq f(+\infty)\prr\{\lvert \rdv{X} \rvert \leq a\}$ that holds for any $a \geq 0$. When $f(+\infty) = \infty$, the desired result is immediate. When $f(+\infty) < \infty$, simply note that $\{\lvert \rdv{X} \rvert \leq a\} \uparrow \samplespace$ as $a \uparrow \infty$, and thus using the continuity of probability measures, we have $\prr\{\lvert \rdv{X} \rvert \leq a\} \to 1$ as $a \to \infty$.\footnote{All countably additive set functions on $\sigma$-fields satisfy such continuity properties \citep[Thm.~1.2.7]{ash2000a}.} Thus, the lower bound (\ref{eqn:dispersion_coercive_1}) can be taken arbitrarily close to $f(+\infty)$, implying the desired result.
\end{proof}

\end{document}